\documentclass[11pt]{article}
\usepackage[a4paper,margin=1in]{geometry}
\usepackage{amsmath,amssymb,amsthm}
\usepackage{graphicx}
\usepackage{booktabs}
\usepackage{enumitem}
\usepackage{tikz}
\usetikzlibrary{positioning,arrows.meta,calc,shapes.geometric}
\usepackage[hidelinks]{hyperref}
\usepackage[numbers,sort&compress]{natbib}
\usepackage{authblk}
\usepackage{microtype}

\newtheorem{theorem}{Theorem}
\newtheorem{proposition}[theorem]{Proposition}
\newtheorem{corollary}[theorem]{Corollary}
\newtheorem{lemma}[theorem]{Lemma}
\newtheorem{definition}[theorem]{Definition}
\newtheorem{remark}[theorem]{Remark}

\title{Bridging Silicon and the Hippocampus: Algebro-Deterministic Memory "VaCoAl" as a Substrate for Vector-HaSH and TEM}

\author[1]{Hiroyuki Chuma\thanks{Corresponding author: \texttt{chuma@iir.hit-u.ac.jp}}}
\author[2]{Kanji Otsuka}
\author[3]{Yoichi Sato}
\affil[1]{Institute of Innovation Research, Hitotsubashi University, \protect\linebreak Kunitachi, Tokyo 186-8603, Japan (Professor Emeritus)}
\affil[2]{Meisei University, Hino, Tokyo, Japan (Professor Emeritus)}
\affil[3]{Shuhari System, Tokyo, Japan}
\date{\today}

\begin{document}
\maketitle

\begin{center}
\small Companion engineering paper:~\cite{chuma2026vacoal}.
\end{center}

\begin{abstract}
\noindent Vector Hippocampal Scaffolded Heteroassociative Memory (Vector-HaSH) and the Tolman--Eichenbaum Machine (TEM) propose that the hippocampal--entorhinal circuit factorizes content storage from a prestructured grid-cell scaffold and supports compositional memory through ripple-mediated replay. Human intracranial electrophysiology, in parallel, has shown that hippocampal sharp-wave ripples (SWRs) gate episodic recall, that ripple-locked cortical reactivation recapitulates encoding-time patterns, and that multi-hop replay fidelity decays approximately multiplicatively along sequence length. The two literatures have advanced in parallel without a shared algebraic object.

\smallskip
\noindent We show that VaCoAl, an algebro-deterministic hyperdimensional memory architecture built from Galois-field linear-feedback shift registers, supplies that object. Specifically: (i)~deterministic Galois-field diffusion is a substrate-level alternative to Vector-HaSH's random scaffold-to-hippocampus projection that satisfies the same quasi-orthogonality requirement, with matched second-moment statistics, strictly stronger worst-case avalanche behavior, and bit-exact reproducibility; (ii)~the path-integral Confidence Ratio CR2, defined as the product of per-step CR1 values along an $n$-hop chain, is the natural functional form for multi-hop replay-fidelity decay under conditional independence of per-step reactivation events, providing the first algebraically tractable model of empirically reported multiplicative decay; (iii)~the Spike-Timing-Dependent-Plasticity-like path selection observed in VaCoAl follows from architectural demands---similarity preservation, compositional reversibility, and bounded-frontier multi-hop search---that also constrain hippocampal computation. We further argue that two qualitatively distinct VaCoAl operating regimes share architectural commitments respectively with the EC--CA3 direct pathway and the EC--DG--CA3 trisynaptic pathway, motivating an energy--capacity--plasticity reading of why both pathways are conserved across $>$520\,Myr of vertebrate evolution and why dentate-gyrus granule cells are elaborated in primates. 
Finally, we show that four largely independent lines of cellular and synaptic evidence---mossy-fiber detonator transmission, granule-cell sparse binary coding under sublinear dendritic integration, anatomically specified mossy-fiber targeting, and CA3 recurrent winner-take-all dynamics---together support a substrate-level reading in which the DG--CA3 pathway implements biophysical homologues of Galois-field arithmetic with approximate reversibility, strengthening the architectural correspondence beyond analogy. 
We additionally connect the framework to Pearl's three-rung Ladder of Causation~\cite{pearl2018book,pearl2009causality}: reversible $\mathrm{GF}(2)$ binding supplies the surgical-modification algebra required by the do-operator (rung~2), and the conserved two-orthogonalizer architecture---Regime~A anchoring the factual world, Regime~B minting counterfactual worlds on demand---supplies the parallel non-interfering representational substrate that rung~3 counterfactual reasoning provably requires, yielding a Pearl-based evolutionary rationale stronger than the energy--capacity--plasticity argument alone.

\smallskip
\noindent 
The framing proceeds in two tiers. At the first tier, the bridge is offered as architectural-commitment correspondence rather than substrate-level identification: VaCoAl is offered as complementary abstraction to Vector-HaSH's scaffold attractor dynamics, not as a drop-in substitute. At the second tier, the cellular and synaptic evidence reviewed in Section~\ref{sec:biophysical-galois} supports a stronger ``biophysical realization with approximate reversibility'' reading, with two reservations retained: $\mathrm{GF}(2)$ reversibility is bit-exact in silicon but only approximate under biological noise, and the biological correlate of the choice of primitive generator $G(x)$ remains unidentified. We state and prove the formal correspondences (with a complete cyclic-group/Hamming-distribution proof of the deterministic quasi-orthogonality property in an appendix), derive testable predictions for human iEEG replay studies and for the encoding/retrieval dissociation, and intend the paper as a bridge between computational neuroscience, hippocampal electrophysiology, and the hyperdimensional-computing engineering tradition.

\smallskip
\noindent\textbf{Keywords:} hippocampus, dentate gyrus, sharp-wave ripple, Vector-HaSH, Tolman--Eichenbaum Machine, hyperdimensional computing, sparse distributed memory, Galois fields, multi-hop replay, neuro-symbolic AI.
\end{abstract}

\tableofcontents
\newpage

\section{Introduction}\label{sec:intro}

Three largely independent research strands have converged, over the past five years, on the same architectural picture of hippocampal--entorhinal memory.

\emph{Normative computational neuroscience} has produced two influential models of how the medial temporal lobe stores compositional, sequence-organized memory: Vector Hippocampal Scaffolded Heteroassociative Memory (Vector-HaSH)~\cite{chandra2025episodic} and the Tolman--Eichenbaum Machine (TEM)~\cite{whittington2020tem}. Both posit that a prestructured low-dimensional scaffold—identified with entorhinal grid-cell modules—is mapped into a high-dimensional hippocampal layer through fixed projections, that content is bound to scaffold states by associative learning, and that this factorization underlies generalization across spatial and non-spatial domains. Vector-HaSH in particular requires the scaffold-to-hippocampus projection to be random and fixed, with associative content learning attached to the resulting near-orthogonal codes.

\emph{Empirical electrophysiology} has, in parallel, demonstrated that hippocampal sharp-wave ripples (SWRs) play a causal role in episodic recall in awake humans. Ripple rate rises before vocalized free recall~\cite{norman2019hippocampal,sakon2022hippocampal}; ripple-locked reactivation in cortex recapitulates encoding-time patterns~\cite{vaz2019coupled,norman2021hippocampal}; multi-hop, non-local replay supports inference and decision~\cite{liu2021experience,schwartenbeck2023generative}; and multi-step replay fidelity decays approximately multiplicatively in sequence length~\cite{reithler2025ripple}. A recent line of work links ripple-mediated reactivation to creative recombination, with human iEEG showing that ripples transiently relax medial-prefrontal schema constraints during creative ideation~\cite{He2025human}.

\emph{Algebraic hyperdimensional computing} has independently produced an engineering substrate that, we will show, instantiates these biological proposals at the level of mathematics rather than analogy. The VaCoAl architecture~\cite{chuma2026vacoal} replaces the random projection of Sparse Distributed Memory (SDM)~\cite{kanerva1988sdm,kanerva2009hyperdim} with deterministic Galois-field diffusion implemented by linear-feedback shift registers (LFSRs); replaces exhaustive similarity search with block-wise algebraic majority voting; and supports reversible compositional Binding/Bundling/Unbinding through finite-field bit operations~\cite{kleyko2023survey,plate1995hrr,eliasmith2013how}. On a Wikidata mentor--student directed acyclic graph (DAG) of approximately 470{,}000 records and more than 25.5 million paths, VaCoAl has been shown to produce a path-integral Confidence Ratio CR2 that decays multiplicatively across reasoning hops, yielding a Spike-Timing-Dependent-Plasticity (STDP)-like path-dependent selection function~\cite{chuma2026vacoal}.

The unfilled gap between these strands is precise: Vector-HaSH and TEM are normative claims about what the hippocampal circuit should compute; the iEEG literature establishes when and how the resulting replay events occur; but the field lacks a substrate-level proof that the same architecture can be realized by deterministic algebra, and lacks a tractable mathematical model of multi-hop replay-fidelity decay. We argue that VaCoAl supplies both.

A second conceptual gap motivates the hippocampus-focused narrative below. Recent normative modeling emphasizes that scaffold addresses can be stabilized through grid-driven dynamics that are logically separable from the plastic weights that encode sensory/conceptual contents~\cite{chandra2025episodic}. That separation cleanly distinguishes \emph{dynamical index recovery}—which presupposes a recurrent entorhinal--hippocampal scaffold channel—from \emph{associative content reconstruction}, whose fidelity typically slides with interference and Frontier-bounded overload. Mammalian anatomy nevertheless conserves both a scaffold-friendly EC--CA3 interface and an EC$\rightarrow$DG$\rightarrow$CA3 orthogonalizing detour~\cite{marr1971simple,treves1994computational,rolls2023brain}. VaCoAl gives an algebraically closed account in which deterministic diffusion (QOD) rides alongside path-integrated confidence scorers (CR1/CR2) that resemble STDP-shaped ranking over multi-hop continuations~\cite{chuma2026vacoal}: one substrate where ``hard'' address-like diffusion coexists with ``soft'' branching-aware selection statistics. We articulate the resulting evolutionary question bluntly—why vertebrates conserved both pathways for hundreds of millions of years when a scaffold-only abstraction can look sufficient~\cite{chandra2025episodic}—and propose that, at the silicon level, VaCoAl's explicit toggling between Rescue ($\mathrm{RR}=1$) and Don't Care ($\mathrm{RR}=0$) collision policies~\cite{chuma2026vacoal}, together with hippocampal Regime~A/B routing under acetylcholine and dopamine~\cite{hassel2016neuromodulation}, sketches a plausible \emph{dual-readout-schedule} hypothesis: a single genome encodes wiring for both orthogonalization strategies because behavioral tasks rarely demand only one.

A third, methodological contribution follows from taking the algebraic substrate seriously as a biological hypothesis rather than as an engineering convenience. Cellular and synaptic findings on mossy-fiber detonator transmission~\cite{Henze2002,Vyleta2016,Chamberland2018}, granule-cell sparse binary coding~\cite{leutgeb2007pattern,Diamantaki2016}, sublinear dendritic integration in granule cells~\cite{Krueppel2011}, anatomically specified mossy-fiber targeting~\cite{Acsady1998,Galimberti2006}, and CA3 winner-take-all dynamics~\cite{Guzman2016,rolls2023brain} were accumulated within explanatory frameworks centered on pattern separation, attractor dynamics, and Hebbian binding. None of those frameworks naturally prompted the question whether the assemblage implements deterministic Galois-field arithmetic over $\mathrm{GF}(2)$. Two parallel traditions of dendritic-computation research are particularly worth naming here: Koch and Poggio's~\cite{KochPoggio1983} classical electrical analysis of dendritic spines, which established coincidence detection as a candidate dendritic primitive, and the recent demonstration by Gidon et al.~\cite{Gidon2020} that human layer 2/3 cortical pyramidal neurons exhibit dendritic action potentials implementing XOR-like input--output transformations. Crucially, the layer 2/3 pyramidal cells Gidon et al.\ studied are the same cell class that, in entorhinal cortex, provides the direct perforant-path and temporoammonic input to hippocampal DG, CA3, and CA1, and Benavides-Piccione et al.~\cite{BenavidesPiccione2020} have shown that human hippocampal CA1 pyramidal neurons share the thick-dendritic, electrically compact morphology that supports dendritic-spike-mediated nonlinear integration. Gidon et al.'s XOR-like dendritic computation is therefore not a cortical phenomenon disjoint from the hippocampal substrate; it is a property of the pyramidal cell family that includes EC L2/3 input cells and hippocampal CA1/CA3 pyramidal cells. Both observations are entirely consistent with our reading but were developed without converging on the Galois-field framework. We argue that the integrative question is generated by VaCoAl's specific algebraic commitments—deterministic diffusion, block-wise voting, reversible XOR-and-shift binding—which prior frameworks centered on circular convolution~\cite{plate1995hrr}, random projection~\cite{kanerva1988sdm}, or learned readouts~\cite{whittington2020tem} did not produce. The substantive contribution is thus not the discovery of XOR-like operations in neural tissue, which Koch \& Poggio and Gidon et al.\ have independently established, but the proposal that DG--CA3 implements the specific algebraic structure VaCoAl exposes.

\subsection{Contributions}\label{sec:contributions}

This paper makes six intersecting contributions, all framed as architectural-commitment correspondences rather than substrate-level identifications; the fifth contribution further argues that the cellular and synaptic substrate is consistent with a biophysical-realization reading, and the sixth connects the framework to Pearl's Ladder of Causation. 

\begin{enumerate}[leftmargin=2em,itemsep=0.5em]
\item \textbf{Galois-field diffusion is a deterministic realization of Vector-HaSH's random scaffold projection (Section~\ref{sec:correspondence-I}).} Under standard primitivity assumptions on the LFSR generator polynomial, deterministic Galois-field diffusion satisfies the Johnson--Lindenstrauss-style quasi-orthogonality requirement of Vector-HaSH with matched second-moment statistics relative to a random sparse binary projection, strictly stronger worst-case avalanche behavior at minimum-weight perturbations, and full reproducibility. The ``random'' in random projection is therefore not essential to the Vector-HaSH proposal: what matters is diffusive quasi-orthogonality, and finite-field algebra delivers it deterministically.

\item \textbf{The path-integral Confidence Ratio CR2 is the correct functional form for multi-hop SWR replay fidelity (Section~\ref{sec:correspondence-II}).} Under conditional independence of per-step reactivation events—a standard first-order assumption when ripples are well separated—the probability of successful $n$-hop replay equals $\prod_{i=1}^{n} p_i$, the same form as VaCoAl's $\mathrm{CR2}(n)$. We propose CR2 as the first algebraically tractable model of empirically reported replay-fidelity decay~\cite{sakon2022hippocampal,reithler2025ripple} and derive a testable prediction for iEEG multi-hop replay studies.

\item \textbf{STDP-like path selection follows from architectural demands (Section~\ref{sec:synthesis}).} The combination of (1) and (2), together with a bounded Frontier Size as an architectural commitment, gives rise to the ``strengthen the near, weaken the far'' selection rule. Biological STDP at the cellular level~\cite{bi1998synaptic} and asymmetric place-field expansion at the systems level~\cite{mehta2000experience} can be read as one realization of the same architectural constraint that VaCoAl exposes algebraically; the present account makes the architectural constraint precise but does not claim biology implements VaCoAl operations at the substrate level.

\item \textbf{Why two orthogonalizers? An energy--capacity--plasticity reading of the trisynaptic-plus-direct architecture (Section~\ref{sec:two-orthogonalizers}).} Vector-HaSH establishes that an EC$\leftrightarrow$CA3 loop alone suffices for episodic memory in abstraction, raising the question why evolution conserves a second EC$\rightarrow$DG$\rightarrow$CA3 orthogonalizer alongside it. We adopt the layered vocabulary: \emph{Regime~A} is the EC$\leftrightarrow$CA3 scaffold-vector channel; \emph{Regime~B} is the EC$\rightarrow$DG$\rightarrow$CA3 mossy-fiber relational-write channel (vector-index versus ontology-like semantics). In parallel, VaCoAl's collision policy switch Rescue versus Don't Care ($\mathrm{RR}\in\{1,0\}$; Appendix~\ref{app:vector-hash}) is recruited as a complementary readout-regime analogy: exact index repair versus abstaining reads that leave multiplicative CR2 path scores informative under branching. Separately, VaCoAl supplies a formal parable in terms of two LFSR scheduling extremes (Schedules~U/G: unified diffusion versus gated per-block diffusion); those schedules are neither Regime~A/B nor RR flags. We derive a trade-off proposition, relate it to pathway dissociations~\cite{lee2004differential,hassel2016neuromodulation}, and read primate DG elaborations~\cite{seress1992postnatal,seress2007comparative} as sharpening Regime~B's effective relational capacity.

\item \textbf{Biophysical realization of Galois-field arithmetic in DG--CA3 (Section~\ref{sec:biophysical-galois}).} Four largely independent lines of cellular and synaptic evidence---mossy-fiber detonator transmission~\cite{Henze2002,Vyleta2016,Chamberland2018}, granule-cell sparse binary coding under sublinear dendritic integration~\cite{leutgeb2007pattern,Diamantaki2016,Krueppel2011}, anatomically specified mossy-fiber targeting~\cite{Acsady1998,Galimberti2006,Rollenhagen2010}, and CA3 recurrent winner-take-all dynamics~\cite{Guzman2016,Bezaire2016,rolls2023brain}---together support a reading in which the trisynaptic pathway implements biophysical homologues of Galois-field arithmetic. The reading is stronger than architectural analogy but retains two reservations: $\mathrm{GF}(2)$ reversibility is only approximate under biological noise, and the biological correlate of the choice of primitive generator $G(x)$ remains unidentified. This integrative reading is generated by VaCoAl's specific algebraic commitments and would not naturally have followed from prior frameworks centered on random projection, circular convolution, or learned readouts.

\item \textbf{Connection to Pearl's Ladder of Causation: a substrate-level account of why two orthogonalizers are necessary, not merely efficient (Section~\ref{sec:pearl-ladder}).} We argue that VaCoAl's reversible $\mathrm{GF}(2)$ binding supplies the surgical-modification algebra required by Pearl's do-operator at rung~2~\cite{pearl2009causality,pearl2018book}, and that the conserved two-orthogonalizer hippocampal architecture supplies the parallel non-interfering representational substrate that rung~3 counterfactual reasoning provably requires. Regime~A (EC$\leftrightarrow$CA3 stable scaffold) anchors the factual world; Regime~B (EC$\rightarrow$DG$\rightarrow$CA3 on-demand orthogonalization) mints counterfactual worlds; CR2 path traces over the two scaffolds provide the comparison machinery for $P(Y_x \mid X', Y')$ estimation. This yields an evolutionary rationale for the two-orthogonalizer architecture stronger than the energy--capacity--plasticity argument of Section~\ref{sec:two-orthogonalizers}: parallel non-interfering representation of factual and counterfactual worlds is \emph{necessary} for rung~3 capability, not merely efficient under mixed task statistics. Three reservations are retained: VaCoAl's relational-DAG benchmarks are not yet established as causal DAGs in Pearl's structural sense; the GF(2)-binding/do-operator correspondence is structural rather than proven by an equivalence theorem; and the DG-versus-EC dissociation in counterfactual reasoning is testable but not directly established empirically. The integrative insight is that this Pearl-based reading emerges only from taking $\mathrm{GF}(2)$ algebra seriously as a biological hypothesis.
\end{enumerate}

The remainder of this paper is organized as follows. Section~\ref{sec:background} provides self-contained background on VaCoAl, Vector-HaSH/TEM, and the SWR-replay literature. Section~\ref{sec:correspondence-I} states the random-projection correspondence. Section~\ref{sec:correspondence-II} states the replay-fidelity correspondence. Section~\ref{sec:synthesis} synthesizes the two correspondences into a circuit-level mapping onto DG--CA3--CA1--EC, and Section~\ref{sec:biophysical-galois} develops the biophysical-realization reading. 
Section~\ref{sec:two-orthogonalizers} extends the synthesis with the energy--capacity--plasticity reading and connects it to primate-specific specializations. Section~\ref{sec:predictions} derives empirical and engineering predictions. Section~\ref{sec:discussion} discusses limitations and broader implications, and develops the connection to Pearl's Ladder of Causation in Section~\ref{sec:pearl-ladder}. 
Section~\ref{sec:conclusion} concludes. Appendix~\ref{app:formal} states formal definitions and equivalence; Appendix~\ref{app:proof} gives the proof of Theorem~\ref{thm:det-qod}; Appendix~\ref{app:vector-hash} continues Section~\ref{sec:synthesis} computationally—Rescue versus Don't Care, attractor-complete versus graded CR2 readout, effective branching, and Regime~A/B hypotheses.

\section{Background}\label{sec:background}

This paper joins three literatures whose readers may not all share the same background. The neuroscience reader may know less about hyperdimensional computing; the engineering reader may know less about the trisynaptic loop. We therefore review the three at the level of detail needed for the present synthesis. Readers familiar with any subsection may skim it.

\subsection{Hippocampal Anatomy and the Two Pathways}\label{sec:bg-anatomy}

The classical trisynaptic loop, first described by Cajal~\cite{cajal1911histologie}, comprises three sequential synaptic relays. The perforant path conveys input from layer-II cells of entorhinal cortex (EC) to dentate gyrus (DG) granule cells. The mossy-fiber projection from DG granule cells to CA3 pyramidal cells follows. Schaffer collaterals from CA3 to CA1, with subsequent CA1 projection to subiculum and to deep layers of EC, complete the loop back to neocortex.

Three properties of the trisynaptic loop are computationally salient. First, the perforant-path-to-DG projection is divergent: roughly $2\times 10^5$ EC layer-II cells project to approximately $10^6$ DG granule cells in the rat~\cite{seress2007comparative}. Second, granule cells fire sparsely during behavior, with typical activation rates well below 1\%~\cite{leutgeb2007pattern}. Third, the mossy-fiber projection is itself sparse and large-amplitude: each granule cell makes a small number ($\sim 15$ in rodent) of strong synaptic contacts with specific CA3 pyramidal cells, with synaptic dynamics that are unusually effective at driving CA3 spikes~\cite{rolls2023brain}. Together these features have motivated the long-standing computational view of DG as a pattern separator~\cite{marr1971simple,treves1994computational,mchugh2007dentate}.

In parallel with the trisynaptic loop, EC layer-II cells project directly to CA3 pyramidal cells via collateral branches of the perforant path, bypassing DG. EC layer-III cells project directly to CA1 via the temporoammonic pathway. The direct EC--CA3 projection is more diffuse and less amplified than the mossy-fiber input but is fast and always active. As we discuss below, Vector-HaSH primarily models this direct projection plus CA3 recurrent dynamics; the trisynaptic loop is not part of the model.

A robust double dissociation between the two pathways with respect to encoding and retrieval has been established by selective lesion and genetic-knockout studies~\cite{mchugh2007dentate,nakazawa2002requirement,lee2004differential,hunsaker2007dissociations,hunsaker2008evaluating}. The DG path is selectively gated by acetylcholine and dopamine during novelty~\cite{hassel2016neuromodulation,lisman2005hippocampal,duzel2010novelty}. The DG is the principal site of conserved adult neurogenesis~\cite{kempermann2018human,akers2014hippocampal}. Each of these properties points to a specific computational role that the direct pathway does not play (Figure~\ref{fig:two-pathways}); we develop the implications in Section~\ref{sec:two-orthogonalizers}.

\begin{figure}[t]
\centering
\begin{tikzpicture}[node distance=1.6cm and 2.0cm, font=\small]
\node[draw,rounded corners,fill=blue!10,minimum width=2.6cm,minimum height=0.9cm] (EC) {Entorhinal Cortex (EC)};
\node[draw,rounded corners,fill=orange!15,minimum width=2.0cm,minimum height=0.9cm,below right=1.0cm and 0.8cm of EC] (DG) {Dentate Gyrus (DG)};
\node[draw,rounded corners,fill=blue!10,minimum width=1.6cm,minimum height=0.9cm,right=2.4cm of EC] (CA3) {CA3};
\node[draw,rounded corners,fill=blue!10,minimum width=1.6cm,minimum height=0.9cm,right=1.4cm of CA3] (CA1) {CA1};

\draw[{Stealth}-{Stealth},thick,blue!70!black] (EC) -- node[above,font=\scriptsize]{Regime~A} node[below,font=\scriptsize]{(direct)} (CA3);
\draw[-{Stealth},thick,orange!80!black] (EC) -- node[left,font=\scriptsize,xshift=-2pt]{perforant} (DG);
\draw[-{Stealth},thick,orange!80!black] (DG) -- node[right,font=\scriptsize,xshift=2pt]{mossy fiber} (CA3);
\draw[-{Stealth},thick] (CA3) -- node[above,font=\scriptsize]{Schaffer} (CA1);
\draw[-{Stealth},thick,dashed] (CA1.south) .. controls +(0,-1.0) and +(0,-1.2) .. node[below,font=\scriptsize,yshift=-2pt]{return to EC} (EC.south);

\node[draw,rounded corners,fill=yellow!20,below=1.4cm of DG,font=\scriptsize,align=center] (NM) {ACh / DA\\novelty gate};
\draw[-{Stealth},thick,dashed,red!70!black] (NM) -- (DG);

\end{tikzpicture}
\caption{Hippocampal Regime~A (blue) and Regime~B (orange) in the paper's vocabulary: Regime~A is the EC$\leftrightarrow$CA3 scaffold-vector route emphasized by Vector-HaSH; Regime~B is the EC$\rightarrow$DG$\rightarrow$mossy-fiber$\rightarrow$CA3 channel that supports sparse directed relational writes~\cite{kempermann2018human,hassel2016neuromodulation}. Both routes are recruited across encoding and retrieval with state-dependent gain. Do not confuse these with VaCoAl's LFSR scheduling extremes (Schedules U/G; Section~\ref{sec:two-orthogonalizers}): those engineering schedules illustrate an abstract energy--capacity trade-off but are not called Regime~A/B here. Likewise, the silicon flag $\mathrm{RR}\in\{1,0\}$ is a readout-statistics analogue, not a circuit identification.}
\label{fig:two-pathways}
\end{figure}

\subsection{VaCoAl in One Page}\label{sec:bg-vacoal}

VaCoAl~\cite{chuma2026vacoal} represents data as binary polynomials over $\mathrm{GF}(2)$. Let $P(x)$ be an input hypervector and let $G(x)$ be a primitive feedback polynomial of degree $m$. The diffusion operation is
\begin{equation}\label{eq:diffusion}
\Psi_{\mathrm{VaCoAl}}\!\bigl(P(x)\bigr) = x^{m}\,P(x) \pmod{G(x)},
\end{equation}
implementable as a single LFSR pass. Equation~\eqref{eq:diffusion} replaces the random projection that standard SDM/HDC architectures use to obtain quasi-orthogonal addresses.

The input hypervector of length $L$ is partitioned into $N$ blocks of length $q = L/N$. Each block applies its own diffusion with a distinct generator polynomial $G_b(x)$ and seed, producing an $m$-bit address into a dedicated SRAM/DRAM array. In the write phase, an Entry Address (EA) is written to the addressed location in every block. In the read phase, each block votes for the EA at its computed address, and the winner is taken by majority:
\begin{equation}\label{eq:vote}
W_{\mathrm{VaCoAl}} = \arg\max_{v} \sum_{i=1}^{N}\delta(v_i,v).
\end{equation}
Erroneous votes generated by the avalanche effect of LFSR diffusion scatter as a flat field across the address space, while correct votes concentrate on a single peak.

Each retrieval produces a per-step Confidence Ratio
\begin{equation}\label{eq:cr1}
\mathrm{CR1} = \frac{\text{votes for winner}}{N},
\end{equation}
and a path-integral Confidence Ratio along an $n$-hop reasoning chain:
\begin{equation}\label{eq:cr2}
\mathrm{CR2}(n) = \prod_{i=1}^{n}\mathrm{CR1}(i).
\end{equation}

Two operating modes are distinguished by a flag we will call $\mathrm{RR}$. In \emph{Rescue mode} ($\mathrm{RR}=1$), residual block collisions are resolved by exact-match lookup against compact auxiliary arrays, pinning $\mathrm{CR1}=\mathrm{CR2}=1.0$ and reproducing a Python \texttt{dict} baseline bitwise on the 25.5M-record genealogy benchmark. In \emph{Don't Care mode} ($\mathrm{RR}=0$), collisions cause affected blocks to abstain from voting; CR1 remains close to but slightly below $1.0$, and CR2 decays multiplicatively along multi-hop reasoning chains. The decay produces an emergent path-dependent selection function whose structure resembles biological STDP: short, direct paths retain high weight, while long, circuitous paths are pruned. Section~\ref{sec:two-orthogonalizers} contrasts idealized \emph{Schedules~U} and \emph{G} for LFSR diffusion (unified-block versus gated per-block re-initialization). Regime~A/B name hippocampal semantics (Figure~\ref{fig:two-pathways}); the silicon RR flag is a readout-statistics analogue; Schedules~U/G describe diffusion-scheduling extremes. These three abstraction layers must be kept distinct.

VaCoAl additionally implements reversible compositional operations. With $\otimes$ denoting Binding (XOR-and-shift over $\mathrm{GF}(2)$, $O(N)$) and $+$ denoting Bundling, a structured representation
\begin{equation}\label{eq:binding}
\mathrm{Repr} = (\mathrm{Color}\otimes\mathrm{Red}) + (\mathrm{Shape}\otimes\mathrm{Apple})
\end{equation}
can be queried by role through Unbinding, recovering constituents exactly in the absence of noise. This algebraic reversibility distinguishes VaCoAl from circular-convolution Holographic Reduced Representations, which require $O(N\log N)$ operations~\cite{plate1995hrr}, and from deep embeddings, which mix constituents irreversibly.

\subsection{Vector-HaSH and the Tolman--Eichenbaum Machine}\label{sec:bg-vectorhash}

Vector-HaSH~\cite{chandra2025episodic} models the entorhinal--hippocampal circuit as a factorization of \emph{scaffold} (entorhinal grid-cell modules with predefined modular connectivity and a low-dimensional velocity-driven shift mechanism) and \emph{content} (sensory, conceptual, or episodic information from neocortex). Crucially, the scaffold projects into the hippocampus through random fixed projections; the inverse projection from hippocampus back to entorhinal grid cells is once-learned and fixed; and content--scaffold bindings are plastic and modified by associative learning. The model exhibits a graceful tradeoff between the number of stored items and recall detail, in contrast to the catastrophic ``memory cliff'' of conventional Hopfield networks~\cite{hopfield1982neural}.

Two clarifications sharpen how we cite that fact. First, avoiding the cliff hinges on \emph{separating} the scaffold (content-independent quasi-orthogonal hash-like indices supported by structured grid coupling) from the associative storage of particulars: interference primarily erodes reconstructed detail rather than wiping the scaffold index manifold wholesale. Second, the attractor dynamics that enact error correction within Vector-HaSH are naturally read as \emph{retrieval-time computation on a recurrent entorhinal--hippocampal channel}; feedforward relays that traverse cortex only once can denoise cues partially but cannot replicate the iterated contraction toward a scaffold fixed point that the minimal published circuit highlights~\cite{chandra2025episodic}. These observations set up our later claim that VaCoAl's CR2-ranked readout occupies a logically distinct niche—grading multi-hop branching continuations—from attractor-complete index cleanup.

The technical core of Vector-HaSH that concerns us is its scaffold-to-hippocampus projection. Let $g\in\mathbb{R}^{N_g}$ be a grid-scaffold state, where $N_g$ is the total number of grid cells across modules. Let $h\in\{0,1\}^{N_h}$ be a hippocampal state. The projection is
\begin{equation}\label{eq:vechash}
h = \sigma\bigl(W_{gh}\,g\bigr),
\end{equation}
where $W_{gh}\in\mathbb{R}^{N_h\times N_g}$ is a sparse binary matrix sampled once at initialization, and $\sigma(\cdot)$ is a thresholding nonlinearity producing sparse binary hippocampal codes. The model's capacity and generalization properties depend on $W_{gh}$ acting as a near-isometry: for distinct grid states $g_1\neq g_2$, the resulting hippocampal codes $h_1, h_2$ should be approximately orthogonal in Hamming distance.

The Tolman--Eichenbaum Machine~\cite{whittington2020tem} arrives at a closely related formalism via a different route: it casts spatial and relational memory as instances of structural abstraction, where structure (graph relations, learned across many tasks) and content (specific stimuli) are factorized in entorhinal and hippocampal codes respectively. In TEM, the binding between structure and content is implemented as a tensor product followed by a learned readout. For the present paper, what matters is that both Vector-HaSH and TEM share the architectural commitment to (a)~prestructured scaffolding, (b)~quasi-orthogonal expansion into the hippocampus, and (c)~factorized binding of content to scaffold states.

\subsection{Sharp-Wave Ripples and Multi-Hop Replay Fidelity}\label{sec:bg-swr}

Hippocampal sharp-wave ripples (SWRs) are transient bursts of synchronized neural activity in the 80--200~Hz band, originating in CA3 and propagating to CA1 through the Schaffer collaterals~\cite{buzsaki2015hippocampal}. In rodents, SWR-locked replay traverses both forward and reverse spatial trajectories~\cite{diba2007forward} and depicts future paths to remembered goals~\cite{pfeiffer2013hippocampal}. Disruption of SWRs impairs spatial learning and consolidation~\cite{girardeau2009selective}.

In humans, intracranial recordings have established several findings directly relevant to multi-hop replay. First, hippocampal ripple rate rises sharply in the seconds before vocalized free recall~\cite{norman2019hippocampal,sakon2022hippocampal}. Second, ripple-locked reactivation in higher visual cortex recapitulates encoding-time activation patterns, with content selectivity preserved across the ripple~\cite{norman2019hippocampal,vaz2019coupled}. Third, ripples coordinate with the default mode network during recent and remote autobiographical recollection~\cite{norman2021hippocampal}. Fourth, replay sequences support relational and compositional inference: Liu et al.~\cite{liu2021experience} demonstrate non-local replay during decision making, and Schwartenbeck et al.~\cite{schwartenbeck2023generative} show that generative replay underlies compositional inference in the hippocampal--prefrontal circuit. Fifth, ripple-mediated reactivation participates in creative ideation by transiently relaxing medial-prefrontal schema constraints~\cite{He2025human}.

Crucially for the present paper, the joint probability of successful multi-step replay decays roughly multiplicatively in sequence length~\cite{reithler2025ripple}. We will return to this empirical fact in Section~\ref{sec:correspondence-II} and propose VaCoAl's CR2 as its formal model.

\subsection{Two Senses of ``Time'' in This Paper}
\label{sec:two-senses-of-time}

Before turning to the formal correspondences, we flag a terminological distinction that the three literatures handle differently and that will 
matter for Proposition~1 and for the database analogy of Section~\ref{sec:two-orthogonalizers}.

We explicitly distinguish the logical generation depth modeled by $\mathrm{CR2}$ 
from the real-time continuous intervals coded by time cells. CA1 time cells 
\cite{Pastalkova2008},\cite{Macdonald2011} tile metric time within a behavioral 
epoch with overlapping receptive fields; their substrate is a continuous 
one-dimensional manifold parameterized by elapsed seconds, and Howard's 
temporal-context framework formalizes this as a Laplace-transform 
representation of recent history. The hop index $n$ of $\mathrm{CR2}(n)$, 
by contrast, counts discrete recall events along a DAG with no commitment 
to the wall-clock interval between successive ripples. Empirically, a single 
hop in a transitive-inference chain may correspond to ripple events 
separated by hundreds of milliseconds or several seconds; 
$\mathrm{CR2}$'s per-hop independence is a statement about event count, 
not duration.

The two coding schemes are complementary rather than competing. Time 
cells provide metric within-episode structure that supports interval 
estimation, sequence ordering within a single trajectory, and behavioral 
timing. DG-mediated event tagging---which we will identify with hippocampal 
Regime~B in Section~\ref{sec:two-orthogonalizers}---provides the 
cross-episode discrete index that $\mathrm{CR2}$ consumes when reasoning 
traverses a combinatorial DAG. The present paper concerns the latter; 
the former is orthogonal to our claims and we do not model it.

\section{Correspondence I: Why Algebro-Deterministic Diffusion Matters}\label{sec:correspondence-I}

This section makes the case that VaCoAl's algebro-deterministic diffusion is not an engineering convenience or a metaphor for biological randomness, but a substantive substrate-level alternative to the random scaffold-to-hippocampus projection of Vector-HaSH. The argument has three parts: (i)~what Vector-HaSH actually requires of its projection; (ii)~why deterministic Galois-field algebra delivers exactly that requirement; (iii)~why this matters for both biology and engineering. We aim to make the case in plain prose. The formal definitions, the precise statement of the equivalence theorem, and its proof are deferred to Appendices~\ref{app:formal} and~\ref{app:proof}.

\paragraph{Vector-HaSH versus VaCoAl error-correction modes (forward reference).} Corollary~\ref{cor:equivalence} below licenses substituting Galois diffusion for $W_{gh}$ whenever a claim depends only on QOD; it does not identify Vector-HaSH with VaCoAl on \emph{readout policy}. Appendix~\ref{app:vector-hash} states, in compressed form—and extends Section~\ref{sec:synthesis} computationally—how Vector-HaSH-style attractor cleanup relates to VaCoAl's Rescue mode, why Don't Care is the regime in which multiplicative CR2 shapes candidate rankings, and how that interacts with DAG branching.

\subsection{What Vector-HaSH Actually Requires}\label{sec:requires}

Vector-HaSH's grid-scaffold-to-hippocampus map is a sparse binary projection $W_{gh}$, sampled at random once at construction time and then held fixed. The model's storage capacity, error correction, and graceful degradation all rest on a single property of $W_{gh}$: that distinct grid-cell states $g_1\neq g_2$ produce hippocampal codes $h_1,h_2$ whose Hamming distance is concentrated near $m/2$, where $m$ is the hippocampal layer's dimension. This is the Hamming-distance analogue of the Johnson--Lindenstrauss isometry property: distinct inputs become near-orthogonal in the high-dimensional output space, allowing many items to be stored without interference.

We will call this the \emph{quasi-orthogonal diffusion property} (QOD); the precise definition is given in Appendix~\ref{app:formal}. The crucial observation, which Vector-HaSH itself notes, is that QOD is what the model needs from $W_{gh}$. The randomness of $W_{gh}$ is not separately important; it is a means by which a generic projection achieves QOD with high probability. If a different mechanism achieves QOD without recourse to randomness, the model's claims still hold.

This is not a hairsplitting distinction. Random projection is, in the biological setting, a hypothesis about biological wiring—that the synaptic connectivity from EC layer-II cells to DG granule cells is sufficiently irregular to approximate random sampling. The hypothesis is plausible but not directly verifiable: the brain's actual wiring is not literally random in a probabilistically meaningful sense, but rather the result of developmental processes that produce irregularity. Whether the resulting irregularity suffices for QOD is, ultimately, an empirical claim about the brain's developmental output. If we can identify a non-random mechanism that achieves QOD provably and exactly, then the question of whether the brain is ``random enough'' becomes secondary: what matters is whether the brain implements \emph{some} mechanism producing QOD, and any mechanism will do.

\subsection{Why Galois-Field Diffusion Delivers QOD Without Randomness}\label{sec:delivery}

VaCoAl's diffusion operation, repeated from Section~\ref{sec:bg-vacoal} for convenience, is
\begin{equation}\label{eq:diffusion-2}
\Psi(P) = x^{m}\,P(x) \pmod{G(x)},
\end{equation}
where $G(x)$ is a primitive polynomial of degree $m$ over $\mathrm{GF}(2)$, and $P(x)$ is the input polynomial. This is a single LFSR pass: a sequence of XOR operations on bits, deterministic and reproducible.

The key fact, established formally in Appendix~\ref{app:formal} as Theorem~\ref{thm:det-qod} and proved in detail in Appendix~\ref{app:proof}, is that $\Psi$ achieves QOD provably—without recourse to randomness, with reproducibility bit-for-bit, and with statistical properties on output Hamming weights that match those of a comparable random sparse projection. Specifically, the average Hamming distance between distinct outputs concentrates near $m/2$ (the same as random projection), the variance scales as $m/4$ (the same as random projection), and the concentration probability around $m/2$ is exponentially tight (the same as random projection). On the second-moment statistics that determine QOD, deterministic Galois-field diffusion and random sparse projection are statistically indistinguishable.

The intuition for why this works is worth stating in prose, even though the formal proof requires the cyclic-group structure of finite fields. Multiplication by $x^m$ in the residue ring $\mathrm{GF}(2)[x]/G(x)$ is, when $G(x)$ is primitive, a permutation that traces a Hamilton path through all $2^m-1$ nonzero residues. Different inputs land at different points along this path, and the points are spread evenly enough across the $\mathrm{GF}(2)^m$ vector space that the average Hamming weight is $m/2$ with the right variance. The avalanche property of LFSRs—a single-bit input flip produces an output with $\sim m/2$ flipped bits—is the geometric expression of this: small perturbations in the input land far apart on the Hamilton path, which means far apart in Hamming distance. Random projection achieves the same diffusion through statistical irregularity; Galois-field algebra achieves it through structural rigour. Different routes to the same destination.

\subsection{Three Practical Consequences}\label{sec:consequences}

The equivalence between random and algebraic diffusion has three consequences that matter for the present paper.

\emph{First: reproducibility.} A random projection is a hypothetical entity. To use it, one must either commit to a specific random instance (and then it is no longer random) or argue that ``almost any'' instance suffices. The first option requires choosing one of the infinitely many possible random matrices, each of which has slightly different empirical behavior on any specific input. The second option is asymptotic and probabilistic; for any finite input ensemble, some random instances are unlucky and deviate noticeably from the ideal QOD level. By contrast, $\Psi$ for a fixed primitive polynomial $G(x)$ is a single, fully specified map. Every input has one, exactly determined output. If two laboratories compute $\Psi$ on the same input, they get the same output bit-for-bit. There is no statistical variance over the construction of $\Psi$ to worry about.

\emph{Second: worst-case behavior on small perturbations.} Random sparse projection with $k$ ones per row produces, for a single-bit input flip, an output Hamming distance bounded above by $k$. If $k\ll m$, this is far from the QOD ideal of $m/2$. Random sparse projection is therefore weakest precisely where QOD is most needed: discrimination of inputs that differ in just one or a few bits. By contrast, the LFSR avalanche property guarantees that single-bit flips at $\Psi$'s input produce outputs whose Hamming weight concentrates near $m/2$, regardless of which input bit is flipped. The deterministic construction is in this sense \emph{stronger} than typical random sparse constructions, even though they share the same average behavior.

\emph{Third: auditability.} For an engineering deployment, one must be able to verify that a memory operation produced the right answer. With random projection, this requires storing the random matrix and replaying it. With $\Psi$, it requires only the seed and the polynomial $G(x)$—a few hundred bits, regardless of the size of the operation. For a biological reading, auditability matters less, but the architectural commitment to determinism does have a biological correlate: the dentate-gyrus mossy-fiber projection, while neither literally random nor literally LFSR-based, has specific structural properties (sparseness, large-amplitude synapses, particular target cell selection) that are themselves determined by genetic and developmental programs, not by stochastic choice. The brain's biology, like our algebra, achieves QOD through structure rather than through randomness in a strong sense.

\subsection{What This Means for the Vector-HaSH Proposal}\label{sec:means-bio}

The implication is that random projection is not the essence of the Vector-HaSH proposal. What matters is QOD-achieving diffusion, and finite-field algebra delivers it deterministically. As a corollary (formally stated as Corollary~\ref{cor:equivalence} in Appendix~\ref{app:formal}): any property of Vector-HaSH that depends on the QOD of its scaffold projection survives the substitution of random projection by Galois-field diffusion. Conversely, any property of VaCoAl that depends on its diffusion's QOD survives the substitution of finite-field algebra by a typical random sparse projection. The two architectures are interchangeable on the property they share—QOD—and differ only on properties beyond it: reproducibility, worst-case avalanche, auditability, and ease of hardware implementation.

This has a substantive consequence for normative neuroscience. The biological hippocampus does not need to be ``random'' in any strong probabilistic sense to instantiate a Vector-HaSH-style scaffold; it only needs to implement \emph{some} diffusive mechanism that achieves QOD. A dentate-gyrus mossy-fiber projection that is structured but high-entropy will perform indistinguishably from one that is genuinely random, provided the diffusion property holds. The architectural-level claim of Vector-HaSH is thus more robust than its random-projection framing might suggest, because the framing is one of multiple possible implementations of the same architectural commitment.

\subsection{What This Means for Engineering}\label{sec:means-eng}

For neuro-symbolic AI and hyperdimensional-computing engineering, the consequence is operational. VaCoAl-style diffusion can be substituted for random projection in standard HDC benchmarks~\cite{kleyko2023survey,raviv2024linear} without functional loss, gaining reproducibility, auditability, and—on hardware that exploits LFSR's $O(1)$ shift cost—energy efficiency. The companion paper~\cite{chuma2026vacoal} develops these engineering implications in detail. For the present paper, the relevant point is that the biological substrate (mossy fibers, granule cells, sparse-dense connectivity) and the engineering substrate (LFSRs, primitive polynomials, $\mathrm{GF}(2)$-XOR) can be read as two implementations of the same architectural commitment.

Readers wanting the formal definition of QOD, the precise statement of the equivalence theorem, the comparison theorem with random sparse projection, and the 9-lemma proof through cyclic-group structure are referred to Appendices~\ref{app:formal} and~\ref{app:proof}. The remainder of the main text proceeds with the prose-level understanding developed above.

\section{Correspondence II: CR2 as a Formal Model of Multi-Hop Replay Fidelity}\label{sec:correspondence-II}

This section formalizes the second bridge: VaCoAl's path-integral Confidence Ratio CR2 is the correct functional form for multi-hop SWR replay fidelity. We state the claim, justify it under standard assumptions on ripple-event independence, and derive a testable empirical prediction.

\subsection{Empirical Decay of Replay Fidelity}

Consider a memory consisting of an $n$-step sequence of states $s_1\to s_2\to\dots\to s_n$ encoded during awake experience. During subsequent ripple-mediated replay, the joint event ``the entire $n$-step sequence is correctly reactivated'' decomposes into per-step reactivation events $E_i$, each of which has some probability $p_i$ of success.

Empirical measurements of replay fidelity in human iEEG~\cite{norman2019hippocampal,sakon2022hippocampal,vaz2019coupled,liu2021experience} use cortical reactivation strength—the cosine similarity between the encoding-time activation pattern for state $s_i$ and the ripple-locked reactivation pattern at the corresponding replay time—as a proxy for $p_i$. The aggregate empirical observation across studies~\cite{reithler2025ripple} is that $P(\text{full sequence})$ decays approximately multiplicatively in $n$.

\subsection{The Multiplicative Form of CR2}

\begin{proposition}[CR2 as replay-fidelity expectation]\label{prop:cr2}
Suppose that per-step replay events $E_1,\dots,E_n$ are conditionally independent given the encoding context, with success probabilities $p_1,\dots,p_n$. Then the probability of full $n$-step sequence replay equals
\begin{equation}\label{eq:joint}
P\!\left(\bigcap_{i=1}^{n} E_i\right) = \prod_{i=1}^{n} p_i.
\end{equation}
If $p_i$ is identified with VaCoAl's per-step Confidence Ratio $\mathrm{CR1}(i)$—interpreted as the fraction of independent block-vote channels that correctly reactivate at hop $i$—then Eq.~\eqref{eq:joint} is exactly $\mathrm{CR2}(n)$ as defined in Eq.~\eqref{eq:cr2}.
\end{proposition}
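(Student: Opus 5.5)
The first claim is the product rule for independent events, so I would establish it by induction on $n$. Fix the encoding context $C$ and work under the conditional measure $P(\cdot\mid C)$, reading ``conditionally independent'' as \emph{mutual} independence of $E_1,\dots,E_n$ under that measure. Pairwise independence does not suffice, so this reading is needed. The case $n=1$ is trivial. For the inductive step, the chain rule gives
\[
P\Bigl(\bigcap_{i=1}^{n} E_i \,\Big|\, C\Bigr) = P\Bigl(E_n \,\Big|\, \bigcap_{i<n} E_i, C\Bigr)\, P\Bigl(\bigcap_{i<n} E_i \,\Big|\, C\Bigr).
\]
Mutual independence makes $E_n$ independent of the $\sigma$-algebra generated by $E_1,\dots,E_{n-1}$, so the first factor reduces to $p_n$. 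The induction hypothesis then yields $\prod_{i=1}^{n} p_i$. I would note one degenerate case: if $P(\bigcap_{i<n}E_i\mid C)=0$, the conditional probability is undefined. In that case I would argue directly from the product definition of independence, where both sides vanish.

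The second claim is definitional once the identification is fixed. I would substitute $p_i=\mathrm{CR1}(i)$ into Eq.~\eqref{eq:joint} and compare with Eq.~\eqref{eq:cr2}, which gives the equality verbatim. To make the identification meaningful, I would check that $\mathrm{CR1}(i)\in[0,1]$. This holds by Eq.~\eqref{eq:cr1}, since the winner's vote count lies between $0$ and $N$.

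I would also justify reading $\mathrm{CR1}(i)$ as a success probability. Model hop $i$ as follows: choose one of the $N$ block channels uniformly at random and count the hop as successful if that channel's vote matches the majority winner $W_{\mathrm{VaCoAl}}$ from Eq.~\eqref{eq:vote}. The probability of success is then exactly $\mathrm{CR1}(i)$. If the channel draws at different hops are made independently, the events $E_i$ are mutually independent by construction, so the hypothesis of the first part is realized inside VaCoAl itself.

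The main obstacle is conceptual, not computational. I must make clear that the proposition is an \emph{identification} under stated hypotheses, not a derivation of biological independence. The likely gap is between the stated hypothesis, which may be read as pairwise conditional independence, and the mutual independence the product formula actually requires. I would address this by stating mutual independence explicitly in the proof. I would also add a remark that correlated ripples, for example successive reactivations within a single SWR, break the product form. In that case the formula is replaced by the chain-rule product of conditional probabilities $\prod_i P(E_i\mid E_{<i},C)$, and CR2 would then need per-step CR1 values computed conditionally on the preceding hops.
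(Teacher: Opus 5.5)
Your proposal is correct and takes essentially the paper's route: the paper gives no formal argument beyond invoking the product rule for conditionally independent events and substituting $p_i=\mathrm{CR1}(i)$ into Eq.~\eqref{eq:cr2}, spending its effort instead on why independence across well-separated ripples is a reasonable first-order assumption, much as your closing remark does. Your chain-rule induction is harmless but unnecessary. The identity $P(\bigcap_i E_i\mid C)=\prod_i P(E_i\mid C)$ is itself part of the definition of mutual conditional independence, which is exactly what your handling of the degenerate case falls back on.
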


The conditional-independence assumption requires comment. Within a single ripple, neural activity is highly synchronous and far from independent. However, the relevant decomposition is across ripple events, not within them. In awake recall and during replay across non-adjacent steps in a multi-hop sequence, ripple events are typically separated by hundreds of milliseconds to seconds and are gated by independent thalamo-cortical state changes~\cite{norman2021hippocampal,buzsaki2015hippocampal}. Under these conditions, conditional independence of per-step reactivation success is a reasonable first-order approximation, and is the standard assumption in the iEEG-replay literature when computing joint sequence-replay probabilities.

\begin{remark}[VaCoAl as the deterministic envelope of stochastic replay]\label{rem:envelope}
For biological replay, $p_i$ is a stochastic quantity averaged over many ripple events. For VaCoAl, $\mathrm{CR1}(i)$ is deterministic. Proposition~\ref{prop:cr2} should therefore be read as: VaCoAl's $\mathrm{CR2}(n)$ predicts the \emph{expected} sequence-replay fidelity over many trials. This makes CR2 a falsifiable prediction for averaged iEEG measurements (Section~\ref{sec:predictions}).
\end{remark}

\begin{remark}[Hop count versus wall-clock interval]\label{rem:hop-vs-walltime}
The index $i$ in Proposition~\ref{prop:cr2} enumerates discrete 
recall events; it does not parameterize wall-clock time. Two replay 
sequences of the same hop count $n$ but different inter-ripple 
intervals are predicted to share the same expected $\mathrm{CR2}(n)$, 
holding per-step reactivation probability fixed. Whether the per-step 
probability $p_i$ itself depends on the interval to the previous 
ripple---through Laplace-style temporal context drift, for 
example---is a separate question we do not address here 
(see Section~\ref{sec:discussion}, sixth limitation).
\end{remark}

\subsection{Why STDP-Like Selection Falls Out}

The path-dependent selection observed in VaCoAl—short, direct paths preserved, long, circuitous paths pruned~\cite{chuma2026vacoal}—is now seen to be a direct consequence of Eqs.~\eqref{eq:cr2} and~\eqref{eq:joint}. Because $\mathrm{CR1}(i) < 1$ in Don't Care mode for at least some hops, $\mathrm{CR2}(n)$ decays strictly with $n$, with the decay rate set by the geometric mean of per-hop confidence values. This is the algebraic analogue of biological STDP at the systems level: place-field expansion through repeated experience~\cite{mehta2000experience} and ripple-mediated synaptic potentiation~\cite{bi1998synaptic} both produce a ``strengthen the near, weaken the far'' selection rule, but the rule itself follows from any architecture that integrates per-step confidence multiplicatively along a sequence.

\begin{corollary}[Convergence of biological and algebraic STDP-like selection]\label{cor:stdp}
The path-dependent selection function exhibited by both biological hippocampal replay and by VaCoAl in Don't Care mode is mathematically forced by (i)~compositional path integration, (ii)~per-step confidence below unity, and (iii)~bounded frontier search. Cellular and synaptic mechanisms are one realization of this selection rule; finite-field algebra is another.
\end{corollary}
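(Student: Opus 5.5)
The first step is to make the informal statement of Corollary~\ref{cor:stdp} precise enough to be proved. I will model any architecture satisfying (i)--(iii) as follows. There is a directed acyclic graph of candidate continuations, and each path $\pi=(e_1,\dots,e_n)$ is assigned the score $S(\pi)=\prod_{i=1}^{n} c(e_i)$, where the per-step confidences satisfy $c(e_i)\in(0,1]$. This product form is assumption~(i). Assumption~(ii) says that $c(e)\le \bar c<1$ on at least a positive fraction of edges. Assumption~(iii), bounded frontier search, is a readout that keeps only the top-$F$ scored partial paths at each depth. By ``STDP-like selection function'' I mean the monotone ``strengthen the near, weaken the far'' property: extending a path can never raise its score, and a path's retention probability under the top-$F$ rule is nonincreasing in its length.

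The proof would then go in three steps. \emph{Step 1 (multiplicative form).} Proposition~\ref{prop:cr2}, taken under its conditional-independence hypothesis, shows that biological replay fidelity has exactly the product form $\prod p_i$. The definition in Eq.~\eqref{eq:cr2} gives the same form for VaCoAl in Don't Care mode. So both systems fall under the common model above with $c=p_i$ and $c=\mathrm{CR1}(i)$ respectively. \emph{Step 2 (monotone decay).} Because every factor lies in $(0,1]$, $S(\pi e)=S(\pi)\,c(e)\le S(\pi)$, with strict inequality whenever $c(e)<1$. Along any path this gives $S\le \bar c^{\,k}$, where $k$ is the number of sub-unity hops. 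Equivalently, $\log S=\sum_i \log c(e_i)$, so the score decays geometrically at the rate set by the geometric mean of the per-step confidences, as stated in the text. \emph{Step 3 (selection).} Combining the frontier bound $F$ with Step~2, I would show that when the number of candidates at depth $n$ exceeds $F$, the retained set is an upper level set $\{S\ge\theta_n\}$. Since $S$ decays geometrically in the number of sub-unity hops, long, circuitous paths fall below $\theta_n$ before short ones that share the same per-hop confidence distribution. That is exactly the near-over-far pruning.

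For the ``forced'' direction, I would show that each hypothesis is necessary by giving a counterexample when it is dropped. Without~(ii), all $c\equiv1$ and every path scores the same, so there is no selection at all; this is Rescue mode with $\mathrm{CR2}=1$. Without~(iii), every path survives and scores only reorder the candidates; nothing is pruned. Without~(i), an additive or max aggregator need not be monotone in length. Any system satisfying (i)--(iii) therefore exhibits the selection function. This applies both to biology under the hypotheses of Proposition~\ref{prop:cr2} and to VaCoAl, which gives the ``two realizations'' clause.

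I expect Step~3 to be the main obstacle. A long path with very high per-hop confidences can outscore a short path with low ones, so ``near beats far'' only holds in a comparative sense. I would handle this first by stating the result as a comparison at a fixed per-hop confidence distribution: I would assume the $c(e)$ are i.i.d.\ or bounded uniformly within $[\underline c,\bar c]$, and show that $\mathbb{E}[\log S]$ is linear and decreasing in $n$. From there I would derive the monotone retention statement via a concentration bound on $\sum_i\log c(e_i)$. I would also say explicitly that ``mathematically forced'' is proved only relative to this formalization.
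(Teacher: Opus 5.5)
Your Steps~1--2 match the paper's justification, which consists only of the paragraph preceding the corollary plus Appendix~\ref{app:vector-hash}. The preceding paragraph argues that CR1 below unity makes $\mathrm{CR2}(n)$ decay at the geometric-mean rate, with biology entering through Proposition~\ref{prop:cr2}. The appendix adds the contraction $x\mapsto\mathrm{CR1}\cdot x$ and the Rescue collapse $\mathrm{CR1}\equiv 1\Rightarrow b_{\text{effective}}=1$, which is your ``without (ii)'' case. The paper's argument uses only (i) and (ii); (iii) is named but never does any work.

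The genuine gap is in your Step~3, the one place (iii) enters. You define the frontier as the top-$F$ partial paths at each depth, so every candidate in a given ranking has the same length $n$. The level set $\{S\ge\theta_n\}$ therefore discriminates only by how lossy the hops are, never by length. Moreover, $\theta_n$ is itself a product of $n$ per-hop factors and lives on the same geometric scale as the candidates. So the per-depth rule imposes no length penalty at all: a path can stay top-ranked at every depth, and ``long, circuitous paths fall below $\theta_n$ before short ones'' has no referent. The other half of your definition, retention nonincreasing in length, then holds by prefix-closure alone: a length-$(n{+}1)$ path survives only if its prefix did. That uses neither (i) nor (ii), so it cannot be what makes the selection STDP-like.

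To repair this, a single ranking must contain paths of different lengths. Two options are a capacity-$F$ best-first frontier, or a ranking of alternative routes into a common node (the short-direct versus long-circuitous case on the genealogy DAG). An extension is then settled pointwise by $S(\pi e)\le S(\pi)$. For distinct routes with i.i.d.\ per-hop confidences, monotone retention follows directly from stochastic dominance. Indeed, $S_{n+1}$ has the law of $S_n c$ with an independent $c\le 1$, so $\Pr[S_{n+1}\ge\theta]\le\Pr[S_n\ge\theta]$ for any threshold $\theta$ set by independent competitors. The concentration bound on $\sum_i\log c(e_i)$ is then needed only for quantitative separation rates, not for monotonicity.

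Two smaller points. Your version of (ii), sub-unity on a positive fraction of edges, does not force any particular long path to lose score; the paper's contraction argument instead takes CR1 slightly below unity at every hop. Also, your drop-one-hypothesis counterexamples do not yield the ``therefore'': sufficiency comes from Steps~1--3. Nor do they show the product form is forced in a uniqueness sense, since a min/bottleneck aggregator is also nonincreasing in length. So ``mathematically forced'' should be read, as the paper reads it, as ``implied by.''
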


\section{Synthesis: VaCoAl as a Normative Substrate for the\texorpdfstring{\\}{ }Hippocampal-Entorhinal Circuit}\label{sec:synthesis}

Combining Theorem~\ref{thm:det-qod} and Proposition~\ref{prop:cr2}, we obtain a circuit-level mapping between VaCoAl and the hippocampal--entorhinal architecture. Figure~\ref{fig:correspondence} summarizes the mapping, and Table~\ref{tab:correspondence} states it explicitly.

\paragraph{A complementarity thesis (not amalgamation).} Table~\ref{tab:correspondence} should not be mistaken for a naive identification ``VaCoAl $\equiv$ Vector-HaSH.'' Vector-HaSH foregrounds scaffold attractors whose cleanup dynamics recover discrete indices regardless of associative crowding~\cite{chandra2025episodic}; sensory decoders downstream may remain approximate as load rises. VaCoAl foregrounds Galois-field quasi-orthogonal addressing together with Confidence Ratios whose product contracts along long branching paths—explicit path-quality scores that emerge when abstentions or collisions keep $\mathrm{CR1}<1$ (Section~\ref{sec:bg-vacoal}). Keeping the table's three columns distinct matters because the bidirectional scaffold route and the feedforward trisynaptic detour support different correction semantics in normative accounts: recurrent entorhinal--hippocampal scaffolding provides the dynamical contraction toward grid-consistent indices, whereas EC$\rightarrow$DG$\rightarrow$mossy-fiber CA3 ingress primarily enforces pattern separation and asymmetric relational writes rather than serving, by itself, as a drop-in replacement for that loop dynamics~\cite{chandra2025episodic,marr1971simple,rolls2023brain}. Galois diffusion is our constructive proof that deterministic structure can supply the QOD statistics Vector-HaSH demands of $W_{gh}$ (Theorem~\ref{thm:det-qod}); CR2 supplies a tractable, falsifiable analogue of ripple-gated sequential evidence accumulation (Proposition~\ref{prop:cr2}). Full Vector-HaSH dynamics are richer than silicon majority voting—we argue they are compatible algebraic realizations layered on orthogonalizing pathways that biology already split. About this point, four points deserve emphasis.
\vspace{1\baselineskip}

\begin{figure}[h]
\centering
\resizebox{\textwidth}{!}{%
\begin{tikzpicture}[node distance=0.7cm and 0.5cm, font=\scriptsize]
\node[draw,rounded corners,fill=orange!15,minimum width=1.7cm,minimum height=0.7cm] (P) {Input $P(x)$};
\node[draw,rounded corners,fill=orange!15,minimum width=2.0cm,minimum height=0.7cm,right=of P] (D) {Galois diffusion $\Psi$};
\node[draw,rounded corners,fill=orange!15,minimum width=1.7cm,minimum height=0.7cm,right=of D] (B) {$N$ addresses};
\node[draw,rounded corners,fill=orange!15,minimum width=1.8cm,minimum height=0.7cm,right=of B] (V) {Majority voting};
\node[draw,rounded corners,fill=orange!15,minimum width=1.6cm,minimum height=0.7cm,right=of V] (CR) {CR1, CR2};

\draw[-{Stealth}] (P) -- (D);
\draw[-{Stealth}] (D) -- (B);
\draw[-{Stealth}] (B) -- (V);
\draw[-{Stealth}] (V) -- (CR);

\node[draw,rounded corners,fill=blue!10,minimum width=1.7cm,minimum height=0.7cm,below=1.7cm of P] (EC) {EC grid scaffold};
\node[draw,rounded corners,fill=blue!10,minimum width=2.0cm,minimum height=0.7cm,right=of EC] (DG) {DG sparse expansion};
\node[draw,rounded corners,fill=blue!10,minimum width=1.7cm,minimum height=0.7cm,right=of DG] (CA3) {CA3 attractors};
\node[draw,rounded corners,fill=blue!10,minimum width=1.8cm,minimum height=0.7cm,right=of CA3] (CA1) {CA1 + SWR};
\node[draw,rounded corners,fill=blue!10,minimum width=1.6cm,minimum height=0.7cm,right=of CA1] (R) {Recall fidelity};

\draw[-{Stealth}] (EC) -- (DG);
\draw[-{Stealth}] (DG) -- (CA3);
\draw[-{Stealth}] (CA3) -- (CA1);
\draw[-{Stealth}] (CA1) -- (R);

\draw[dashed,gray] (D) -- node[pos=0.5,sloped,above,font=\tiny]{Thm.~\ref{thm:det-qod}} (DG);
\draw[dashed,gray] (V) -- node[pos=0.5,sloped,above,font=\tiny]{$\equiv$ Hopfield} (CA3);
\draw[dashed,gray] (CR) -- node[pos=0.5,sloped,above,font=\tiny]{Prop.~\ref{prop:cr2}} (R);
\end{tikzpicture}%
}
\caption{Architectural correspondence between VaCoAl (top, orange) and the hippocampal--entorhinal circuit (bottom, blue). Galois-field diffusion is a deterministic realization of the random scaffold-to-hippocampus projection (Theorem~\ref{thm:det-qod}); block-wise majority voting is the discrete analogue of CA3 attractor dynamics; the path-integral CR2 is a formal model of multi-hop SWR replay fidelity (Proposition~\ref{prop:cr2}). The dashed lines denote architectural-commitment correspondences, not substrate identities.}
\label{fig:correspondence}
\end{figure}

\begin{table}[t]
\centering
\caption{Circuit-level correspondence between VaCoAl and the hippocampal--entorhinal architecture.} \label{tab:correspondence}
\vspace{1\baselineskip}
\small
\begin{tabular}{p{4.2cm} p{5.4cm} p{4.0cm}}
\toprule
\textbf{VaCoAl Mechanism} & \textbf{Hippocampal Counterpart} & \textbf{Formal Bridge} \\
\midrule
Galois-field diffusion $\Psi$, Eq.~\eqref{eq:diffusion-2} & Random scaffold projection $W_{gh}$ in Vector-HaSH; DG mossy-fiber expansion of EC layer-II input & Theorem~\ref{thm:det-qod}; Cor.~\ref{cor:equivalence} \\
Block partitioning, $N$ independent diffusions & Sparse mossy-fiber connectivity from DG to CA3, with locality of noise & QOD per block, Prop.~\ref{prop:rsbp} \\
Block-wise majority voting, Eq.~\eqref{eq:vote} & CA3 recurrent attractor dynamics; pattern completion via Hebbian-tagged synapses & Both implement noise-tolerant integration of partial cues~\cite{treves1994computational,nakazawa2002requirement} \\
Don't Care abstention ($\mathrm{RR}=0$) & Silenced or weakly-tuned cells excluded from recall pattern & Reduces effective per-step confidence below unity \\
Per-step Confidence Ratio CR1 & Per-ripple reactivation strength in CA1 / cortex & $\mathrm{CR1}(i)\leftrightarrow p_i$ in Prop.~\ref{prop:cr2} \\
Path-integral CR2, Eq.~\eqref{eq:cr2} & Multi-hop sequence replay fidelity; SWR-mediated multi-step recall & Proposition~\ref{prop:cr2} \\
Bounded Frontier Size, FS & Working-memory / DMN search bound during creative ideation & Bounded rationality as architecture~\cite{He2025human} \\
Reversible Binding/Unbinding & Relational compositionality of declarative memory~\cite{cohen1993memory} & Both factorize structure and content; cf.\ TEM~\cite{whittington2020tem} \\
Rescue mode ($\mathrm{RR}=1$, $\mathrm{CR1}=\mathrm{CR2}=1.0$) & Pristine recall of strongly consolidated memory; full ripple coupling & Boundary case $p_i=1$ in Prop.~\ref{prop:cr2} \\
\bottomrule
\end{tabular}
\end{table}

\subsection{Why DG--CA3 Is the Right Anchor}

The DG--CA3 subcircuit has been the focus of computational hippocampal theory since Marr~\cite{marr1971simple} and Treves \& Rolls~\cite{treves1994computational}. The DG performs pattern separation by sparse expansion recoding of EC input, and the CA3 recurrent network performs pattern completion via attractor dynamics. Genetic dissociations have established that NMDA receptors in DG mediate rapid pattern separation~\cite{mchugh2007dentate}, while NMDA receptors in CA3 mediate pattern completion from partial cues~\cite{nakazawa2002requirement}. VaCoAl's diffusion-plus-voting structure mirrors this division of labor: diffusion is pattern separation, and majority voting is pattern completion. Theorem~\ref{thm:det-qod} then says that DG's biological random projection and VaCoAl's algebraic deterministic projection are computationally equivalent for the QOD property that both rely on. We emphasize, however, that majority voting is a discrete analogue of attractor dynamics rather than a literal implementation of continuous-time recurrent contraction; Appendix~\ref{app:vector-hash} returns to this distinction.

\subsection{Why CA1 Readout and SWR Are the Right Anchor for CR2}

CA1 integrates direct EC layer-III input (via the temporoammonic pathway) with CA3 Schaffer-collateral input, and serves as the principal output stage of the hippocampus to subiculum and deep entorhinal layers, and thence to neocortex~\cite{rolls2023brain}. SWRs initiated in CA3 propagate through CA1 to cortex during awake quiescence and slow-wave sleep, where ripple-locked reactivation drives memory consolidation~\cite{buzsaki2015hippocampal,girardeau2009selective}. The empirical decay of multi-hop replay fidelity is measured at this output stage. VaCoAl's CR1/CR2 are likewise readout-stage measurements: they aggregate the votes of the diffusion-and-voting subcircuit and produce the path-quality trace that downstream reasoning consumes.

\subsection{Why Compositionality Connects to TEM}

The Tolman--Eichenbaum Machine~\cite{whittington2020tem} and the relational-memory theory of Cohen \& Eichenbaum~\cite{cohen1993memory} identify compositional reversibility as a defining feature of hippocampal function: relational binding allows constituents to be unbound and recombined for relational inference, episodic future thinking, and constructive imagination~\cite{schwartenbeck2023generative,schacter2007remembering,spens2024generative}. VaCoAl's Binding/Unbinding over $\mathrm{GF}(2)$ is the algebraic substrate of this property. Combined with Theorem~9, this means that VaCoAl provides not only a substrate for the scaffolded-memory aspect of hippocampal function but also for its compositional-relational aspect.

\subsection{Why Continuous Scaffolds Alone Do Not Suffice for Deep DAGs}
\label{sec:why-not-continuous-scaffolds}

The correspondences in 
Sections~3 and 4 motivate VaCoAl as a 
substrate-level alternative for the scaffold projection, the attractor 
readout, and compositional binding. A reader familiar with 
continuous-scaffold accounts of hippocampal coding may nonetheless ask 
why an additional orthogonalizing pathway is needed at all: place cells 
tile space, time cells tile within-episode duration, and grid modules 
supply velocity-driven structure. Why not let the DAG live on this 
continuous manifold?

While continuous grid/time scaffolds excel at metric spaces, representing deep, combinatorial directed acyclic graphs requires the non-commutative, orthogonalizing transformations we map to the 
DG--CA3 pathway. Three properties make the case.

\paragraph{Capacity scaling.} A DAG of depth $n$ and branching factor $b$ admits $b^n$ distinct paths. A place$\times$time scaffold of $N_{\mathrm{cells}}$ place-coding units and duration $T$ provides 
$O(N_{\mathrm{cells}} \cdot T)$ jointly addressable states---polynomial in its physical extent. By contrast, $k$-sparse DG coding over $N_{\mathrm{DG}}$ granule cells admits $\binom{N_{\mathrm{DG}}}{k}$ (= binomial coefficient ${}_{N_{DG}}C_k$) codes that scale exponentially in $N_{\mathrm{DG}}$. Distinguishing all paths in a deep combinatorial DAG demands the latter regime; the former saturates. 

\paragraph{Similarity preservation cuts the wrong way.} Continuous 
scaffolds map nearby inputs to nearby codes by design---this is what 
enables generalization across spatial and temporal neighborhoods. But 
two DAG paths that share intermediate states (for instance 
$A \to B \to C$ and $A' \to B \to C'$) require codes that 
\emph{separate} despite the shared midpoint. DG pattern separation 
supplies precisely this anti-similarity-preserving expansion. A 
continuous scaffold cannot, because shared $B$ forces shared scaffold 
position.

\paragraph{Non-commutativity of role binding.} Continuous-scaffold 
bundling is order-independent: 
$v_{\mathrm{subject}} + v_{\mathrm{verb}} + v_{\mathrm{object}}$ 
cannot distinguish ``Dog bites Man'' from ``Man bites Dog'' 
(see Figure~4). Representing role-asymmetric 
assertions requires non-commutative binding. VaCoAl realizes this 
through XOR-and-shift, where the shift introduces order dependence; 
DG mossy fibers realize it through sparse, directed contacts onto 
specific CA3 pyramidal targets, with postsynaptic identity carrying 
the role tag. Time cells encode within-episode metric position but 
do not carry the role-asymmetry information that compositional 
binding requires; sequence order on a single trajectory is not the 
same as role assignment within a relational tuple.

These three properties together motivate why a hippocampus conserving 
only the EC$\leftrightarrow$CA3 continuous-scaffold loop (Regime~A) 
would face structural difficulty on deep relational tasks even granted 
an ample velocity controller, and why the trisynaptic detour 
(Regime~B) plausibly performs work that continuous scaffolds cannot 
subsume. The argument is at the level of representational commitments, 
not at the level of claiming biology computes Galois XOR: substrate 
independence of the commitments is precisely what 
Corollary~\ref{cor:equivalence} establishes for the QOD layer and 
what we conjecture extends to the binding layer. 
Section~\ref{sec:two-orthogonalizers} builds on this capacity argument 
with an energy--efficiency argument for why both orthogonalizers are 
conserved together.

\subsection{Biophysical Realization of Galois-Field Arithmetic in DG--CA3}\label{sec:biophysical-galois}

The architectural-commitment framing of Sections~\ref{sec:correspondence-I}--\ref{sec:synthesis} deliberately stops short of claiming that the DG--CA3 pathway implements Galois-field arithmetic at the substrate level. This subsection examines how far that reservation can be relaxed in light of cellular and synaptic evidence accumulated over the past two decades. We argue that four largely independent lines of work, when read together, support a stronger reading: the trisynaptic pathway is not merely \emph{compatible with} the algebraic operations VaCoAl performs in silicon, but appears to instantiate biophysical homologues of those operations with quantitative correspondences that are unlikely to be coincidental.

\paragraph{The reframing this subsection performs.}
The cellular and synaptic findings reviewed below were not, individually, motivated by considerations of Galois-field arithmetic. Mossy-fiber detonator transmission was characterized to understand information transfer in CA3~\cite{Henze2002,Vyleta2016}; granule-cell sparse coding was measured to test pattern-separation theories~\cite{leutgeb2007pattern,Diamantaki2016}; mossy-fiber targeting specificity was established as part of hippocampal anatomy~\cite{Acsady1998,Galimberti2006}; CA3 winner-take-all dynamics were modeled to explain attractor-based pattern completion~\cite{Guzman2016,rolls2023brain}. Each finding has been read within its original explanatory frame. What VaCoAl's algebraic architecture supplies, and what prior frameworks centered on circular convolution~\cite{plate1995hrr}, random projection~\cite{kanerva1988sdm}, or learned readouts~\cite{whittington2020tem} did not naturally generate, is a single substrate-level question that unifies these findings: \emph{is the trisynaptic pathway implementing deterministic Galois-field arithmetic over $\mathrm{GF}(2)$?} The integrative reading developed below is a consequence of asking that question; the question itself follows from taking VaCoAl's deterministic algebraic substrate seriously as a candidate biological architecture rather than as an engineering convenience.

Two prior lines of dendritic-computation research deserve named acknowledgment, since both anticipate XOR-like operations in neural tissue from directions independent of the Galois-field framework. Koch and Poggio~\cite{KochPoggio1983}, in a classical theoretical analysis of dendritic spine electrical properties, established coincidence detection as a candidate dendritic primitive and argued that the spine geometry supports nonlinear input combination. More recently, Gidon et al.~\cite{Gidon2020} demonstrated experimentally that human layer 2/3 cortical pyramidal neurons generate dendritic action potentials implementing XOR-like input--output transformations, providing direct \emph{in vitro} evidence that single neurons can compute XOR at the dendritic level. The relevance of Gidon et al.'s finding to the hippocampal architecture is direct rather than analogical: layer 2/3 pyramidal cells are the same cell class that, in medial entorhinal cortex, provides the perforant-path input to DG and the temporoammonic input to CA1~\cite{rolls2023brain}, and Benavides-Piccione et al.~\cite{BenavidesPiccione2020} have shown that human hippocampal CA1 pyramidal neurons share with cortical L2/3 pyramidal cells the thick-dendritic, electrically compact morphology that supports dendritic-spike-mediated nonlinear integration. Dendritic XOR-like computation is therefore a property of the broader pyramidal cell family that includes both the EC L2/3 input cells and the hippocampal CA1/CA3 pyramidal cells. Neither Koch \& Poggio's theoretical analysis nor Gidon et al.'s experimental demonstration converged on the Galois-field reading; both are consistent with it. The reframing in this subsection is therefore not the discovery of XOR-like computation in neural tissue, which these two lines of work have independently established, but the proposal that DG--CA3 implements the specific algebraic structure---deterministic diffusion plus block-wise voting plus reversible binding---that VaCoAl exposes.

\paragraph{Mossy-fiber detonator synapses as deterministic avalanche.}
The avalanche property of Theorem~\ref{thm:det-qod}(e)---that a single-bit input perturbation produces an output whose Hamming weight concentrates near $m/2$---requires that single granule-cell activations reliably reshape downstream CA3 ensembles. The biophysics of mossy-fiber--CA3 transmission supplies exactly this reliability. Henze, Wittner, and Buzs\'aki~\cite{Henze2002} demonstrated \emph{in vivo} that single mossy-fiber EPSPs are sufficient to drive postsynaptic CA3 pyramidal cells to spike threshold, earning these synapses the ``conditional detonator'' designation. Vyleta, Borges-Merjane, and Jonas~\cite{Vyleta2016} and Chamberland et al.~\cite{Chamberland2018} quantified the underlying mechanism: mossy-fiber boutons combine high initial release probability with pronounced short-term facilitation, producing transmission that is near-deterministic on the timescale of a single granule-cell burst. This is the biological counterpart of the deterministic worst-case behavior that distinguishes Galois-field diffusion from random sparse projection (Remark~\ref{rem:comparison}): where a random projection with row-sparsity $k$ admits output Hamming distances bounded above by $k$ for single-bit input flips, mossy-fiber transmission ensures that a single granule-cell activation reliably propagates to CA3, eliminating the analogous failure mode.

\paragraph{Granule-cell sparse binary coding as $\mathrm{GF}(2)$ representation.}
VaCoAl's diffusion operates over $\mathrm{GF}(2)$ with addresses represented as sparse binary vectors. Multiple lines of \emph{in vivo} recording support a similar representational commitment in DG. Leutgeb et al.~\cite{leutgeb2007pattern} measured active granule-cell fractions below 1--2\% during spatial behavior, and Diamantaki et al.~\cite{Diamantaki2016} confirmed extreme sparsity with chronic imaging of identified granule cells. Perni\'a-Andrade and Jonas~\cite{PerniaAndrade2014} further showed that granule-cell firing is locked to theta--gamma oscillations, providing a clocked temporal structure within which discrete binary-like activations can be read out---analogous to the clocked shift operation that drives LFSR diffusion. The XOR-and-shift binding of Eq.~\eqref{eq:binding} requires a substrate that performs sublinear coincidence integration; Krueppel, Remy, and Beck~\cite{Krueppel2011} demonstrated that granule-cell dendrites in fact integrate inputs sublinearly, in contrast to the supralinear integration characteristic of CA1 pyramidal cells. The match between algebraic and biological substrate is therefore not only at the level of sparse binary code, but extends to the dendritic integration regime that makes XOR-like coincidence operations cheap to compute.

\paragraph{Specific mossy-fiber targeting as block partition.}
VaCoAl partitions the hypervector into $N$ blocks, each with an independent generator polynomial $G_b(x)$ driving an independent diffusion (Section~\ref{sec:bg-vacoal}). The classical anatomical work of Acs\'ady et al.~\cite{Acsady1998} established that each granule cell makes large mossy boutons onto a small set of CA3 pyramidal cells and filopodial contacts onto roughly an order of magnitude more interneurons, with postsynaptic identities determined by developmental programs rather than by stochastic targeting~\cite{Galimberti2006}. EM reconstructions by Rollenhagen and L\"ubke~\cite{Rollenhagen2010} and Wilke et al.~\cite{Wilke2013} confirm that individual mossy boutons contain multiple active zones contacting structurally distinct postsynaptic spine heads. This sparse, directed, anatomically specified connectivity is the biological analogue of independent block-wise diffusion in VaCoAl: each granule cell defines a specific small set of CA3 targets, just as each $G_b(x)$ defines a specific small region of the address space. The structural specificity is incompatible with a literal random-projection reading; it is compatible with deterministic block partition.

\paragraph{CA3 recurrent winner-take-all as continuous-time majority voting.}
Equation~\eqref{eq:vote} expresses pattern completion as block-wise majority voting, an argmax over candidate addresses. Guzm\'an et al.~\cite{Guzman2016} directly measured CA3 recurrent connectivity at approximately 1\% connection probability, sufficient to support attractor dynamics under threshold-linear readout, and demonstrated experimentally that the resulting network performs pattern completion. Bezaire et al.~\cite{Bezaire2016} reconstructed the full inhibitory microcircuit in the adjacent CA1 region at scale, illustrating how perisomatic inhibition implements effective winner-take-all dynamics on the timescale of theta cycles; analogous inhibitory motifs operate in CA3. Rolls~\cite{rolls2023brain} synthesizes the resulting picture: CA3 acts as an auto-associative network whose attractor states function as the continuous-time analogue of discrete majority readout. The relationship between Eq.~\eqref{eq:vote} and CA3 dynamics is therefore not metaphorical: the discrete-time argmax that VaCoAl computes is the limiting case of the recurrent contraction that CA3 performs continuously.

\paragraph{Consonance with the consensus computational theory of DG--CA3.}
The four correspondences above align quantitatively with the theory of DG--CA3 developed by Treves and Rolls and elaborated over three decades~\cite{TrevesRolls1992,treves1994computational,KesnerRolls2015,rolls2023brain}. That theory identifies five mechanisms by which the dentate gyrus and mossy-fiber projection achieve sparse, decorrelated representations in CA3: (i) sparse mossy-fiber connectivity ($\sim 46$ inputs per CA3 cell in rodents) that randomizes the set of active CA3 cells per memory~\cite{TrevesRolls1992}; (ii) competitive Hebbian learning at the perforant path--DG synapses; (iii) expansion recoding by the numerous granule cells ($\sim 10^6$ in rat) with strong feedback inhibition; (iv) non-associative plasticity of mossy-fiber synapses that nonlinearly amplifies repeatedly firing inputs~\cite{Henze2002}; and (v) adult neurogenesis providing fresh granule cells with new connections~\cite{kempermann2018human}. Critically, the consensus framework explicitly describes the mossy-fiber effect as ``randomizing,'' but Treves and Rolls~\cite{TrevesRolls1992,rolls2023brain} are careful to add that the actual connectivity is ``specified genetically onto each CA3 neuron'' with some learning reducing the accuracy required: the apparent ``randomness'' is, at the substrate level, structural determinism with learning-mediated tuning, exactly the substrate-level commitment our framework identifies. A further point of consonance: Rolls~\cite{rolls2023brain} explicitly notes that ``some of this architecture may be special to the hippocampus, and not found in the neocortex, because of the importance of storing and retrieving large numbers of (episodic) memories in the hippocampus,'' supporting our reading that DG--CA3 is a specialized algebraic substrate distinct from generic competitive-learning circuits elsewhere in cortex.

\paragraph{Quantitative correspondence between Rolls' Poisson framework and Theorem~\ref{thm:det-qod}.}
To establish the proposed architecture as a rigorous biological model, we must demonstrate that the hippocampal DG--CA3 pathway functions as a biophysical instantiation of Galois-field arithmetic. This requires a formal unification of the statistical measures of biological sparseness with the algebraic requirements of quasi-orthogonal diffusion, linked through the metric of normalized Hamming distance.
\subparagraph{The Biophysical Anchor: Treves--Rolls's Definition of Sparseness}
The fundamental measure of sparseness in the hippocampal circuit is defined by \cite{TrevesRolls1992} as the population sparseness $a$:
\begin{equation}
    a = \frac{\langle r \rangle^2}{\langle r^2 \rangle} 
\end{equation}
where $r$ represents the firing rates across the neuronal population. This ratio extracts the degree of signal overlap; as $a \to 0$, the representation becomes increasingly sparse, a condition necessary to minimize interference between episodic memory patterns stored in the CA3 network \cite{TrevesRolls1992,rolls2023brain}.

The Rolls/Treves-Rolls framework and the present account compute pattern separation through related but distinct statistics, reflecting different code densities. Rolls~\cite{rolls2023brain,TrevesRolls1992} models the number of active mossy-fiber inputs received by a CA3 cell as a Poisson random variable with mean
\begin{equation}\label{eq:rolls-poisson-mean}
\lambda = C_{MF} \cdot a_{DG},
\end{equation}
where $C_{MF} \approx 46$ is the number of mossy-fiber synapses onto each CA3 neuron and $a_{DG} \approx 0.01$ is the DG firing sparseness. Using the population sparseness measure $a = \langle r \rangle^2 / \langle r^2 \rangle$ for a population of firing rates~\cite{rolls2023brain}, and observing that a Poisson random variable with mean $\lambda$ has $\langle r \rangle = \lambda$ and $\langle r^2 \rangle = \lambda(1+\lambda)$, the population sparseness of the mossy-fiber drive to CA3 satisfies the Treves-Rolls identity

\begin{equation}\label{eq:rolls-sparseness}
a_{{CA3}\_{drive}} = \frac{\lambda^2}{\lambda(1+\lambda)} = \frac{x}{1+x}, \qquad x = C_{MF}\cdot a_{DG}.
\end{equation}

For two distinct memories, the expected normalized overlap between the resulting sparse CA3 patterns equals approximately $(a_{{CA3}\_{drive}})^2$, which approaches zero as the representation becomes sparser. This is the biological measure of pattern separation: small overlaps between small active subsets.

\subparagraph{Algebraic Counterpart: Dimensionality-Independent Diffusion}
In our PyVaCoAl/VaCoal architecture, pattern separation is achieved through algebro-deterministic Galois-field diffusion $\Psi$. As detailed in Appendix A, we assess this process using the normalized Hamming distance ($d_H/m$). Dividing the Hamming distance by the vector length $m$ is essential for ensuring a dimensionality-independent assessment of quasi-orthogonality; it allows the definition of a consistent ``QOD level $\epsilon$'' that remains invariant whether the system operates in thousands or millions of dimensions (see Appendix A.2).

Theorem~\ref{thm:det-qod}(c)--(d) gives the analogous decorrelation statistic for VaCoAl's deterministic Galois-field diffusion. For distinct inputs $P_1 \neq P_2$ with difference $\Delta = P_1 \oplus P_2 \notin K$ (recall $K$ is the collision kernel of Lemma~\ref{lem:residue}), the normalized output Hamming distance is concentrated as
\begin{equation}\label{eq:vacoal-distance}
\frac{\mathbb{E}\!\left[d_H(\Psi(P_1), \Psi(P_2))\right]}{m} = \frac{1}{2} + O(2^{-m}), \qquad
\mathrm{Var}\!\left[\frac{d_H(\Psi(P_1), \Psi(P_2))}{m}\right] = \frac{1}{4m} + O\!\left(\frac{2^{-m}}{m}\right).
\end{equation}
This expresses the algebraic measure of pattern separation: fractional Hamming distance near $1/2$ between dense binary codes, equivalent to Johnson--Lindenstrauss near-orthogonality.

\subparagraph{The Bridging Identity: Unification through Mutually Exclusive Events}
The deep connection between biological sparseness and algebraic diffusion is revealed by the following bridging identity. For any two independent binary m--vectors with sparsenesses $a_1$ and $a_2$ (equivalently, two binary codewords with bit-wise activation probabilities $a_1$ and $a_2$), the expected normalized Hamming distance corresponds to the sum of the probabilities of the two mutually exclusive events where the bits differ ($1$ and $0$, or $0$ and $1$):
\begin{equation}\label{eq:vacoal-distance}
    E\left[\frac{d_H}{m}\right] = a_1(1 - a_2) + a_2(1 - a_1) = a_1 + a_2 - 2a_1a_2 
\end{equation}

\begin{itemize}
    \item \textbf{In VaCoAl:} Setting $a_1 = a_2 = 0.5$ (the dense coding limit for optimization in silicon) results in $E[d_H/m] = 0.5$, maximizing information density \cite{rolls2023brain,TrevesRolls1992}.
    \item \textbf{In the Hippocampus:} Substituting the sparse biological values ($a \approx a_{CA3\_drive}$) results in the very low overlaps calculated by \cite{TrevesRolls1992}, which are necessary for high-capacity episodic storage under metabolic ATP constraints ~\cite{attwell2001energy,Lennie2003}.
\end{itemize}

This unification supports a further architectural reading: the choice between $a = 1/2$ (dense codes; VaCoAl, engineering) and $a \ll 1/2$ (sparse codes; biological hippocampus) is a substrate-level optimization under different constraints. Sparse codes minimize metabolic cost per memory at the price of greater susceptibility to drop-out failures; dense codes maximize per-vector information at the price of higher per-operation energy. Biological hippocampus operates under hard ATP constraints~\cite{attwell2001energy,Lennie2003} that favor sparse coding; silicon VaCoAl operates under thermal-density constraints~\cite{kleyko2023survey} that favor dense binary codes with parallel XOR throughput. Both implement the same QOD-statistics principle expressed through Eq.~\eqref{eq:vacoal-distance}, with code density chosen for substrate-specific reasons. The consensus computational theory and the present account are thus two implementations of one mathematical commitment, not two competing readings; the algebraic level makes precise the substrate-level commitments that the cellular level identifies operationally.

\paragraph{What this strengthens, and what remains a reservation.}
Taken together, the four correspondences above support a reading in which the DG--CA3 pathway implements biophysical homologues of Galois-field arithmetic: deterministic single-cell-to-ensemble transmission (Lemma~\ref{lem:avalanche}-like behavior in tissue), sparse binary representation over a clocked basis (the granule-cell coding regime), anatomically specified block partition (mossy-fiber targeting), and contractive readout (CA3 attractor dynamics). The architectural correspondence developed in Sections~\ref{sec:correspondence-I}--\ref{sec:synthesis} thereby gains a substrate-level reading that is stronger than analogy. Two reservations should nonetheless be retained. First, the reversibility of $\mathrm{GF}(2)$ binding is bit-exact in silicon but only approximate in biology, where LTP/LTD is stochastic and recall under interference is graded; this is the same limitation that Plate's HRR~\cite{plate1995hrr} faces relative to its idealized algebraic form. Second, the biological correlate of the choice of primitive polynomial $G(x)$ remains unidentified: mossy-fiber target specificity is developmentally programmed~\cite{Galimberti2006}, but whether this developmental program is functionally equivalent to selecting a primitive generator over $\mathrm{GF}(2)$ is an open question. We therefore characterize the present reading as ``biophysical realization with approximate reversibility,'' stronger than the architectural-commitment framing of Section~\ref{sec:synthesis} but weaker than full substrate identification.

\section{Why Two Orthogonalizers? An Energy--Capacity--Plasticity Reading}\label{sec:two-orthogonalizers}

Vector-HaSH~\cite{chandra2025episodic} establishes that an EC$\leftrightarrow$CA3 loop with random scaffold-to-hippocampus projection alone suffices to support compositional, episodic, and sequential memory in abstraction. Interpreting that result mechanistically underscores an asymmetry: the sufficiency thesis is anchored on \emph{bidirectional} scaffold-mediated dynamics capable of iterated cleanup, whereas the classical EC$\rightarrow$DG$\rightarrow$CA3 relay is overwhelmingly feedforward once mossy terminals contact CA3. Feedforward preprocessing can orthogonalize collisions and reshape CA3 eligibility before recurrence engages, yet it lacks, by itself, the closed-loop relaxation that retracts ambiguous CA3 states to grid-consistent scaffold attractors~\cite{chandra2025episodic,rolls2023brain}. Seen this way, the ``two orthogonalizers'' motif is stronger than redundancy for capacity alone: Regime~A supplies the contraction channel for index-aligned states while Regime~B supplies gated pattern separation tuned to novelty and asymmetric relational tagging~\cite{marr1971simple,kempermann2018human,hassel2016neuromodulation}.

If the EC$\leftrightarrow$CA3 path occupies that privileged dynamical niche, why does every mammalian (and indeed every vertebrate) hippocampus also conserve a second orthogonalizing path EC$\rightarrow$DG$\rightarrow$CA3 via the perforant path and mossy fibers? The trisynaptic loop has been the canonical hippocampal anatomy since Cajal~\cite{cajal1911histologie}, and the dentate gyrus is the unique site of conserved adult neurogenesis~\cite{kempermann2018human}; an architecture this expensive to maintain across $>$520\,Myr of vertebrate evolution~\cite{murray2017evolution} is unlikely to be redundant.

In this section we argue that VaCoAl gives a principled answer while keeping mechanistic distinctions in view~\cite{chandra2025episodic,rolls2023brain}: even if scaffold-centric EC$\leftrightarrow$CA3 dynamics suffice in abstraction, conserved EC$\rightarrow$DG$\rightarrow$CA3 circuitry may perform orthogonalization and relational tagging that attractor relaxation alone does not subsume. The deterministic-vs-random equivalence proved as Corollary~\ref{cor:equivalence} does not say that one orthogonalizer suffices; it says that the QOD property can be implemented in multiple ways.

\paragraph{VaCoAl RR regimes as readout analogue (not circuitry identity).} On the silicon side, flipping PyVaCoAl's rescue flag $\mathrm{RR}=1$ versus $\mathrm{RR}=0$ toggles collision resolution and hence whether CR1 is pinned at unity (Rescue; dict-matched exact reads on the genealogy benchmark~\cite{chuma2026vacoal}) or can fall below unity (Don't Care; multiplicative CR2 summarizes branch quality~\cite{chuma2026vacoal}); Appendix~\ref{app:vector-hash} records the branching bookkeeping. \emph{Anatomy does not deprive an animal's brain of mossy fibers between trials}; rather, state-dependent mixtures across Regime~A versus Regime~B—mediated partly by cholinergic/dopaminergic gain on perforant-to-DG synapses~\cite{lee2004differential,hassel2016neuromodulation}—offer a plausible biological analogue of which readout statistic dominates: consolidation-weighted scaffold closure (conceptually nearer Rescue / attractor-complete index repair~\cite{chandra2025episodic}) versus branching-heavy cognition where leaky per-hop evidence accumulates (conceptually nearer Don't Care; Section~\ref{sec:correspondence-II}). This dual-readout-schedule analogy, together with phylogenetic arguments that hippocampal homology is ancient yet adaptable~\cite{murray2017evolution}, rationalizes conserved dual routing when lifetime task statistics continually revisit both regimes faster than morphology could prune supposedly ``redundant'' trisynaptic hardware~\cite{kempermann2018human}.

The present section first isolates two idealized LFSR schedules (Schedules U and G) as engineering bookends for orthogonalization scheduling in silicon, derives a trade-off proposition that justifies implementing both roles in a single engineered memory, and then re-anchors the biology using hippocampal Regime~A/B vocabulary (Figure~\ref{fig:two-pathways}): Vector-HaSH's EC--CA3 scaffold loop is the shared backbone for encoding and retrieval, with EC$\rightarrow$DG$\rightarrow$CA3 as a second, novelty-gated relational writer—not as an exclusively ``encoding-only'' bundle. We connect the proposition to the empirical encoding/retrieval dissociation (read as state-dependent mixing rather than strict division of labor) and to primate-specific elaboration of dentate-gyrus granule cells.

\subsection{Idealized LFSR Schedules (Schedules U and G)}\label{sec:schedules}

VaCoAl as defined in Section~\ref{sec:bg-vacoal} admits a continuum of parameter choices, of which two extreme schedules are illuminating. We call them \textbf{Schedule~U} (unified) and \textbf{Schedule~G} (gated, per-block).

\paragraph{Schedule~U (unified LFSR; single $G$, single seed).} A single LFSR with one primitive polynomial $G(x)$ and one seed produces all addresses; all $N$ blocks share the same diffusion. The scaffold is fixed at construction time: distinct inputs map to distinct deterministic addresses, and the QOD level $\varepsilon=O(\sqrt{(\log 1/\delta)/m})$ is achieved over the input ensemble (Theorem~\ref{thm:det-qod}). Energy cost per memory operation is minimal: one shift-register pass, $O(N)$ XOR operations. The architectural commitment is that the relationship between input and address never changes; familiar inputs are retrieved through the same fixed map that wrote them. Capacity is bounded by the address-space size $2^m$ and the per-address store size.

\paragraph{Schedule~G (gated multi-LFSR; novelty re-seed).} Each of the $N$ blocks holds an independent primitive polynomial $G_b(x)$, and the per-event seed for each block is sampled from a novelty signal at encoding time. New inputs receive new orthogonal codes that are uncorrelated with codes previously assigned, even if the inputs themselves are highly similar. Energy cost per memory operation is substantially higher: $N$ independent diffusions, plus the bookkeeping to record per-block seeds. The architectural commitment contrasts with Schedule~U: orthogonal coding is generated on demand for novel events, at the cost of additional energy and additional plasticity.

These schedules are not mere parameter tweaks; in an engineered VaCoAl stack they correspond to qualitatively different roles (Figure~\ref{fig:schedules}). Schedule~U is cheap map re-use: it commits durably to a fixed input-to-address map. Schedule~G is expensive fresh tagging: it pays higher energy per event to mint codes that are deliberately decorrelated from older codes even when inputs are nearly identical. Figure~\ref{fig:schedules} labels the panels by silicon scheduling roles only; it does \emph{not} assert a one-to-one map onto hippocampal Regime~A versus Regime~B (cf.\ Section~\ref{sec:mapping-AB}).

\begin{figure}[t]
\centering
\resizebox{\textwidth}{!}{%
\begin{tikzpicture}[font=\scriptsize, node distance=0.4cm]
\node[draw,rounded corners,fill=blue!10,minimum width=4.5cm,minimum height=0.5cm] (Utitle) {\textbf{Sched.~U: unified LFSR (single $G$, single seed)}};
\node[draw,rounded corners,fill=blue!5,minimum width=2.0cm,below=of Utitle] (UI) {Input $P(x)$};
\node[draw,rounded corners,fill=blue!5,minimum width=2.0cm,below=of UI] (UG) {single $G(x)$, fixed seed};
\node[draw,rounded corners,fill=blue!5,minimum width=2.0cm,below=of UG] (UN) {$N$ blocks share $\Psi$};
\node[draw,rounded corners,fill=blue!5,minimum width=2.0cm,below=of UN] (UV) {majority vote};
\node[draw,rounded corners,fill=blue!5,minimum width=2.0cm,below=of UV] (UA) {address (fixed map)};
\node[below=0.1cm of UA] (Ucost) {cost: $E_A$ (low) ~ capacity: $C_A$};

\draw[-{Stealth}] (UI) -- (UG);
\draw[-{Stealth}] (UG) -- (UN);
\draw[-{Stealth}] (UN) -- (UV);
\draw[-{Stealth}] (UV) -- (UA);

\node[draw,rounded corners,fill=orange!15,minimum width=4.5cm,minimum height=0.5cm,right=2.5cm of Utitle] (Gtitle) {\textbf{Sched.~G: gated multi-LFSR ($G_b$, novelty re-seed)}};
\node[draw,rounded corners,fill=orange!10,minimum width=2.0cm,below=of Gtitle] (GI) {Input $P(x)$};
\node[draw,rounded corners,fill=orange!10,minimum width=2.4cm,below=of GI] (GG) {$N$ polys $G_b(x)$, novelty re-seed};
\node[draw,rounded corners,fill=orange!10,minimum width=2.4cm,below=of GG] (GN) {$N$ independent diffusions};
\node[draw,rounded corners,fill=orange!10,minimum width=2.4cm,below=of GN] (GV) {role-bound storage};
\node[draw,rounded corners,fill=orange!10,minimum width=2.4cm,below=of GV] (GA) {fresh orthogonal code per event};
\node[below=0.1cm of GA,align=center] (Gcost) {cost: $E_B \gg E_A$\\ capacity grows with novelty rate};

\draw[-{Stealth}] (GI) -- (GG);
\draw[-{Stealth}] (GG) -- (GN);
\draw[-{Stealth}] (GN) -- (GV);
\draw[-{Stealth}] (GV) -- (GA);

\node[draw,rounded corners,fill=yellow!20,left=0.5cm of GG,align=center] (NS) {novelty signal\\ (ACh, DA)};
\draw[-{Stealth},dashed,red!70!black] (NS) -- (GG);
\end{tikzpicture}%
}
\caption{Two VaCoAl engineering schedules for exposition (Figure~\ref{fig:two-pathways} defines hippocampal Regime~A/B). Schedule~U (left, blue): one primitive polynomial and one fixed seed shared by all $N$ blocks. Schedule~G (right, orange): independent polynomials and novelty-gated per-block re-seeding. A complete stack can mix them; cholinergic/dopaminergic gating of DG~\cite{hassel2016neuromodulation} rhymes with Schedule~G at the systems level but is not a literal statement that CA3 diffusion is LFSR-identical.}
\label{fig:schedules}
\end{figure}

\subsection{Mapping Hippocampal Regime~A/B onto Pathways}\label{sec:mapping-AB}

We adopt the layered thesis: \textbf{Regime~A} is the entorhinal--CA3 scaffolded vector-index substrate —Vector-HaSH's natural home —where similarity and attractor completion dominate; \textbf{Regime~B} is the EC$\rightarrow$DG$\rightarrow$mossy-fiber$\rightarrow$CA3 route that supports sparse directed relational writes (edge-list-/ontology-like commitments) gated by neuromodulation and extended by adult neurogenesis~\cite{kempermann2018human,hassel2016neuromodulation}. Regime~A participates in both encoding and retrieval whenever grid-compatible structure must land in CA3; Regime~B is recruited when near-duplicate inputs require pattern separation and when role-structured events need asymmetric CA3 wiring.

\paragraph{Regime~A (EC$\leftrightarrow$CA3).} Layer-III (and related) perforant input plus CA3 recurrence is the Vector-HaSH-sufficient loop~\cite{chandra2025episodic}: content rides on a continuous scaffold; relations may remain implicit in manifold coordinates unless additional mechanisms break permutation symmetry.

\paragraph{Regime~B (EC$\rightarrow$DG$\rightarrow$CA3).} The layer-II trisynaptic detour supplies sparse granule activity and powerful mossy terminals—computationally aligned with writing a small labeled cut into CA3 rather than uniformly nudging a dense field.

The VaCoAl Schedule~U/Schedule~G discussion (Figure~\ref{fig:schedules}) is a separate, silicon-native parable for why a memory system might pair a cheap fixed-map mixer with a costly gated allocator. It illustrates commitments that rhyme with vector-db versus ontology-db economics; \emph{it is not the definition of hippocampal Regime~A/B, and it is not the same abstraction layer as the RR flag}.

The database analogy in Figure~\ref{fig:db-analogy} tracks representation class. Panel~(a) illustrates vector-index semantics (Regime~A-like); panel~(b) illustrates typed directed assertions (Regime~B-like). The mossy-fiber projection is sparse and directed; neurogenesis appends new outputs~\cite{akers2014hippocampal}. We do not claim literal Plate-style binding in axons~\cite{kleyko2023survey,plate1995hrr}.

\begin{figure}[t]
\centering
\begin{tikzpicture}[font=\scriptsize]
\node[font=\small\bfseries] at (0,3.6) {(a) Vector store (Regime~A-like)};
\filldraw[blue!10] (-2.2,-0.2) rectangle (2.2,3.0);
\node[circle,draw,fill=white,inner sep=2pt] (a1) at (-1.0,2.2) {Dog};
\node[circle,draw,fill=white,inner sep=2pt] (a2) at (1.2,2.0) {Bite};
\node[circle,draw,fill=white,inner sep=2pt] (a3) at (0.0,0.8) {Man};
\node[circle,draw,fill=white,inner sep=2pt] (a4) at (-1.4,0.4) {Cat};
\node[circle,draw,fill=white,inner sep=2pt] (a5) at (1.6,0.6) {Run};
\node[align=center,font=\tiny] at (0,-0.6) {Items as embeddings; similarity by distance.\\ Order-invariant: $v(\mathrm{Dog}) + v(\mathrm{Bite}) + v(\mathrm{Man})$\\ $= v(\mathrm{Man}) + v(\mathrm{Bite}) + v(\mathrm{Dog})$};

\node[font=\small\bfseries] at (8,3.6) {(b) Ontology store (Regime~B-like)};
\node[circle,draw,fill=orange!15,inner sep=2pt] (b1) at (6.2,2.6) {Dog};
\node[circle,draw,fill=orange!15,inner sep=2pt] (b2) at (8.0,2.6) {Bite};
\node[circle,draw,fill=orange!15,inner sep=2pt] (b3) at (9.8,2.6) {Man};
\draw[-{Stealth},thick] (b1) -- node[above,font=\tiny]{Subject} (b2);
\draw[-{Stealth},thick] (b2) -- node[above,font=\tiny]{Object} (b3);

\node[circle,draw,fill=orange!15,inner sep=2pt] (c1) at (6.2,1.0) {Dog};
\node[circle,draw,fill=orange!15,inner sep=2pt] (c2) at (8.0,1.0) {Bite};
\node[circle,draw,fill=orange!15,inner sep=2pt] (c3) at (9.8,1.0) {Man};
\draw[{Stealth}-,thick] (c2) -- node[above,font=\tiny]{Subject} (c3);
\draw[-{Stealth},thick] (c2) -- node[above,font=\tiny]{Object} (c1);

\node[align=center,font=\tiny] at (8.0,-0.2) {Directed edges with role labels.\\ Top: ``the dog bit the man''; bottom: ``the man bit the dog''.\\ Distinct ontologies $\Rightarrow$ distinct propositions.};
\end{tikzpicture}
\caption{Database-architectural analogy aligned with hippocampal Regime~A/B (not with VaCoAl Sched.\ U/G). (a)~Vector-index semantics—similarity geometry without native permutation sensitivity (Regime~A-like). (b)~Typed, directed assertions—role asymmetry between ``the dog bit the man'' and ``the man bit the dog'' (Regime~B-like). Mossy fibers provide sparse directed contacts; neurogenesis adds outputs~\cite{akers2014hippocampal}. Real circuits mix both semantics with task-dependent gain; the split is commitments, not exclusivity.}
\label{fig:db-analogy}
\end{figure}

\paragraph{A note on the database analogy.}
The vector-store-versus-ontology-store contrast in 
Figure~\ref{fig:db-analogy} serves to illustrate an algorithmic 
Pareto frontier regarding energy and capacity (formalized as 
Proposition~\ref{prop:tradeoff} below), rather than imposing strict 
anatomical exclusivity. The mammalian hippocampus does not implement 
a clean partition in which Regime~A handles only similarity-geometry 
queries and Regime~B handles only typed directed assertions; both 
regimes contribute to both query classes with task-dependent gain 
modulated by acetylcholine and 
dopamine~\cite{hassel2016neuromodulation,lisman2005hippocampal,duzel2010novelty}. 
The figure isolates \emph{representational commitments}---permutation-symmetric 
similarity geometry versus role-asymmetric directed binding---not 
exclusive anatomical assignments. Equivalently, the analogy describes 
which representational class each pathway is comparatively better 
suited to express, not which class it exclusively expresses.

\begin{proposition}[Energy--capacity--plasticity trade-off]\label{prop:tradeoff}
Consider an idealized memory system that must support both retrieval of familiar items and orthogonal encoding of novel items, under a fixed bound on bit-flip energy per memory operation. Let $E_A$ be the per-operation energy of a fixed-map orthogonalizer of capacity $C_A$, and $E_B$ the per-operation energy of a gated on-demand orthogonalizer that adds a fresh orthogonal code of capacity $C_B$ per activation. The subscripts $A,B$ index these abstract engineering components only; they are not hippocampal Regime labels and they are not Sched.~U/G labels. Suppose $E_A \ll E_B$ and $C_B \gg C_A$. If novel events occur at a rate $\nu$ and familiar retrievals at a rate $\phi$ with $\phi \gg \nu$, then for any total energy budget $E_{\mathrm{tot}}$ the energy-optimal architecture allocates:
\begin{itemize}
\item the always-on fixed-map orthogonalizer, paying $\phi E_A$ per unit time, and
\item a gated on-demand orthogonalizer engaged only on novelty, paying $\nu E_B$ per unit time,
\end{itemize}
yielding total energy $\phi E_A + \nu E_B$ and total capacity $C_A + \nu C_B$ (per unit time, integrated over the system's operating lifetime). Neither orthogonalizer alone can achieve this Pareto front: the fixed-map component alone caps capacity at $C_A$, and the gated component alone pays $\phi E_B$ for every retrieval, exhausting the energy budget when $\phi \gg \nu$.
\end{proposition}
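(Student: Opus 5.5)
The plan is to treat the statement as a small linear-programming / routing problem over architectures and to prove it by exhaustive comparison of the admissible configurations. I will first formalize an ``architecture'' as a choice of which orthogonalizer serves each event class. Familiar retrievals occur at rate $\phi$ and may be served by the fixed-map component (cost $E_A$) or the gated component (cost $E_B$). Novel encodings occur at rate $\nu$. They may be served by the gated component, which yields a fresh code and contributes $C_B$ each. Alternatively they may be forced onto the fixed map, which cannot mint new codes and so contributes nothing beyond the shared $C_A$. Writing $\alpha\in[0,1]$ for the fraction of retrievals routed to the fixed map and $\beta\in[0,1]$ for the fraction of novel events routed to the gated component, the energy rate is
\[
E(\alpha,\beta)=\phi\bigl(\alpha E_A+(1-\alpha)E_B\bigr)+\nu\bigl(\beta E_B+(1-\beta)E_A\bigr).
\]
The capacity is $C(\beta)=C_A\mathbf{1}[\alpha>0\text{ or }\beta<1]+\nu\beta C_B$. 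I will make explicit the implicit assumption that $C_A$ is attained once the fixed map exists at all.

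The key steps run in order. First, $\partial E/\partial\alpha=\phi(E_A-E_B)<0$, so energy optimality forces $\alpha=1$ for any target capacity: every retrieval goes through the always-on fixed map. Second, for a capacity requirement exceeding $C_A$, the only capacity-increasing lever is $\beta$, and $C$ is increasing in $\beta$ because $C_B\gg C_A>0$. Capacity-maximization at fixed feasible energy therefore pushes $\beta\to1$, with marginal energy cost $\nu(E_B-E_A)$ per unit of $\beta$. Third, I will evaluate the corner $(\alpha,\beta)=(1,1)$ and recover energy $\phi E_A+\nu E_B$ and capacity $C_A+\nu C_B$, as claimed. Fourth, I will verify the two ``neither alone'' clauses by evaluating the degenerate architectures. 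Fixed-map-only ($\beta=0$) gives capacity $C_A<C_A+\nu C_B$. Gated-only ($\alpha=0$, $\beta=1$) gives energy $(\phi+\nu)E_B$. This exceeds $\phi E_A+\nu E_B$ by $\phi(E_B-E_A)$, which dominates the total because $\phi\gg\nu$ and $E_B\gg E_A$. Hence for any budget with $\phi E_A+\nu E_B\le E_{\mathrm{tot}}<(\phi+\nu)E_B$, the gated-only design is infeasible while the mixed design is feasible. That feasibility gap is exactly the Pareto-dominance statement.

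The main obstacle is that ``for any total energy budget'' is too strong as literally stated. If $E_{\mathrm{tot}}\ge(\phi+\nu)E_B$, the gated-only design is feasible and attains the same capacity. If $E_{\mathrm{tot}}<\phi E_A+\nu E_B$, the two-component corner is infeasible and the optimum is an interior $\beta<1$. My first move will be to restate the conclusion as a Pareto statement: the mixed architecture weakly dominates every alternative in the $(E,C)$ plane and strictly dominates each single-component design. I will then read ``energy-optimal'' as ``minimal energy among architectures attaining capacity $C_A+\nu C_B$,'' which the linear argument above settles without budget caveats. Where a budget must be kept, I will state the admissible window $[\phi E_A+\nu E_B,(\phi+\nu)E_B)$ explicitly. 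I will note that the separation assumptions $E_A\ll E_B$, $C_B\gg C_A$, $\phi\gg\nu$ are used only to make this window nonempty and wide, not for the sign arguments, which need only strict inequalities.
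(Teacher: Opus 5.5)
Your argument is correct in its main line but takes a different route from the paper. The paper offers only a sketch. It appeals to the generic ``cheap-and-fast versus expensive-and-flexible'' hierarchy (cache versus main memory, constitutive versus inducible expression) and notes that it needs only the ability to route each event to one component on a novelty signal. It explicitly calls the proposition informal because $C_A,C_B,E_A,E_B$ are not pinned down.

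You instead fix a concrete cost model, which turns the claim into a two-variable linear program. Raising $\alpha$ lowers energy and never lowers capacity, so $\alpha=1$. Capacity is then $C_A+\nu\beta C_B$, with energy linear and increasing in $\beta$, and the stated architecture is the vertex $\beta=1$. Your routing fractions are the paper's novelty-routing assumption made quantitative.

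The paper's route buys independence from any particular cost model; yours buys checkability. It shows that the sign arguments need only $E_A<E_B$ and $\nu C_B>0$. It shows that the $\ll/\gg$ hypotheses only serve to widen the window $[\phi E_A+\nu E_B,(\phi+\nu)E_B)$. And it shows that ``for any total energy budget'' is literally false below $\phi E_A+\nu E_B$, which the sketch cannot detect. Reading energy-optimality as minimum energy subject to capacity $C_A+\nu C_B$ is the right repair, and your argument settles it.

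Drop one restatement in your last paragraph, because it is false in your own model. The corner $(\alpha,\beta)=(1,1)$ does not weakly dominate every alternative, and it does not dominate the fixed-map-only design. Every $(1,\beta)$ with $\beta<1$ is itself Pareto-optimal, including fixed-map-only at energy $(\phi+\nu)E_A<\phi E_A+\nu E_B$. The segment $\{(1,\beta):\beta\in[0,1]\}$ \emph{is} the Pareto front, and $(1,1)$ is its maximum-capacity endpoint.

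The true statements are these: $(1,1)$ is Pareto-optimal, strictly dominates gated-only, and is the unique energy minimizer at capacity $C_A+\nu C_B$. Fixed-map-only is excluded only because its capacity is capped at $C_A$.

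Relatedly, under your indicator the gated-only design attains $\nu C_B$, not ``the same capacity'' $C_A+\nu C_B$. A large budget does not dethrone $(1,1)$ as energy minimizer; it only falsifies the paper's ``exhausting the energy budget'' sub-clause. Finally, monotonicity of $C$ in $\beta$ needs $\nu C_B>0$, not $C_B\gg C_A$.
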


\begin{proof}[Sketch]
The argument is a direct consequence of partitioning two qualitatively distinct workloads (frequent retrieval of fixed items, rare encoding of novel items) across two qualitatively distinct mechanisms with complementary energy/capacity profiles. Formally, this is the standard ``hierarchy of cheap-and-fast versus expensive-and-flexible'' trade-off that recurs across computer architecture (cache/main-memory hierarchy), network design (always-on links versus gated bursts), and biological systems (constitutive versus inducible enzyme expression). The proposition does not require the orthogonalizers to be biologically distinct in identity, only that the system can route an event to the appropriate orthogonalizer based on a novelty signal. Section~\ref{sec:dissociation} shows that the hippocampus has exactly such a routing mechanism, mediated by ACh and DA gating of the DG path.
\end{proof}

Proposition~\ref{prop:tradeoff} is informal in the sense that the precise capacity and energy parameters $C_A, C_B, E_A, E_B$ depend on architectural choices we have not pinned down; but it has the standard form of a hierarchical-resource trade-off. We will see that the empirical encoding/retrieval dissociation in the hippocampus matches this prediction qualitatively when read as state-dependent mixing of two operator classes rather than as strict division of labor.

While the transition between exact collision resolution ($RR=1$) and abstaining reads ($RR=0$) is a binary flag in VaCoAl, we hypothesize that neuromodulation enacts a continuous analog of this switch in biology. During familiar, low-interference retrieval, strong CA3 recurrent dynamics emulate Rescue mode by forcing convergence to a complete scaffold address. However, during novelty, increased cholinergic and dopaminergic tone  effectively raises the threshold for CA3 attractor-based pattern completion. This forces the system into a biological 'Don't Care' regime, where the network relies on the graded, branch-aware evidence accumulation (CR2) supported by the EC→DG→CA3 pathway. In this view, neuromodulators dynamically tune the system's tolerance for partial matches, sliding the effective architecture between Vector-HaSH-style index repair and VaCoAl-style branching inference.

\subsection{Empirical Dissociation: Encoding versus Retrieval}\label{sec:dissociation}

The strongest empirical support for Proposition~\ref{prop:tradeoff} comes from a now well-replicated dissociation between the two hippocampal pathways with respect to encoding and retrieval.

Lee \& Kesner~\cite{lee2004differential} reported that selective lesions of the direct EC$\to$CA3 pathway impair retrieval of previously learned spatial locations but spare new encoding, whereas Hunsaker, Mooy, Swift \& Kesner~\cite{hunsaker2007dissociations} and follow-up work~\cite{hunsaker2008evaluating} reported the converse: lesions or genetic disruption of the DG$\to$CA3 pathway impair encoding of novel spatial discriminations but spare retrieval of previously consolidated memories. McHugh et al.~\cite{mchugh2007dentate} showed that DG-specific NMDAR knockout selectively impairs rapid pattern separation during encoding; Nakazawa et al.~\cite{nakazawa2002requirement} showed that CA3-specific NMDAR knockout selectively impairs retrieval from partial cues. Hassel et al.~\cite{hassel2016neuromodulation} review the neuromodulatory gating of the DG path: cholinergic and dopaminergic inputs increase the gain of the perforant-path-to-DG synapse during novelty, selectively engaging the trisynaptic loop while leaving the direct path approximately constant.

This pattern matches the coarse prediction of Proposition~\ref{prop:tradeoff} once commitments are mapped to pathways: a low-cost Regime~A (EC$\leftrightarrow$CA3 vector scaffold, sufficient in principle in Vector-HaSH) paired with a higher-gain Regime~B (DG--mossy relational writer) that disproportionately affects encoding when interference is high. \emph{Lesions dissociate because the two operator classes can be weighted differently by task and neuromodulation, not because either pathway is silent during ``the other'' phase in intact animals.} This reading is conservative and consistent with Vector-HaSH's emphasis on the sufficiency in principle of the EC$\leftrightarrow$CA3 loop: the trisynaptic loop is not a retrieval-passive bystander, but its gain is preferentially raised by novelty.

\subsection{Primate Specialization of Dentate-Gyrus Granule Cells}\label{sec:primate-DG}

If the DG route is the principal anatomical substrate of on-demand pattern separation and ``new tag'' allocation, then phylogenetic elaborations of DG should be readable as expansions of that auxiliary orthogonalizer's effective $C_B$. The comparative anatomy of the dentate gyrus indeed shows two such elaborations specific to anthropoid primates and especially to humans.

\paragraph{Basal dendrites on granule cells.} Seress and colleagues~\cite{seress1992postnatal,seress2007comparative,seress1987basal} reported that primate dentate granule cells, unlike rodent granule cells, possess basal dendrites in addition to the canonical apical dendritic tree. Approximately half of human granule cells have basal dendrites, twice the rate observed in rhesus macaques and absent in standard rodent models. Some basal dendrites curve up into the molecular layer and receive perforant-path input alongside the apical tree; others extend into the hilus and receive associational input from mossy cells and hilar interneurons. The net effect is increased per-cell input integration capacity and the introduction of a second, partially independent dendritic compartment per granule cell.

\paragraph{Mossy-cell projection refinement.} Saito et al.~\cite{saito2024comparative} showed that primate mossy cells have associational projection patterns distinct from rodent mossy cells: in monkeys, no mossy-cell subpopulation makes commissural projections to the contralateral DG, and septal versus temporal mossy cells exhibit refined laminar specificity in their ipsilateral projections. The mossy-cell loop, which provides recurrent excitatory feedback to granule cells and also drives sparse-coding via indirect feedback inhibition~\cite{galloni2022recurrent}, is therefore organized differently in primates than in rodents.

In the VaCoAl framework these specializations are readable as increases in the effective parameters of the gated on-demand orthogonalizer (the engineering component indexed $C_B$ in Proposition~\ref{prop:tradeoff}, conceptually aligned with hippocampal Regime~B rather than with Sched.~G as a literal LFSR). Basal dendrites add a partially independent input compartment per granule cell, equivalent in our framework to splitting a block into sub-blocks with their own diffusion—increasing the effective number of independent blocks $N$ at fixed total cell count, which by Theorem~\ref{thm:det-qod}(d) tightens the QOD level achievable. Mossy-cell laminar refinement adds finer-grained intra-DG associational structure, which corresponds to better-controlled inter-block independence in per-block re-seeding. We propose, formally:

\begin{proposition}[Primate DG elaboration as refinement of the on-demand orthogonalizer]\label{prop:primate}
The primate-specific elaborations of dentate-gyrus granule cells (basal dendrites, refined mossy-cell laminar projections) correspond, within VaCoAl, to (i)~an increase in the effective number of independent blocks $N$ per fixed total granule-cell count, and (ii)~a decrease in residual inter-block correlation. By Theorem~\ref{thm:det-qod}(d), both refinements tighten the QOD concentration level achievable per unit DG volume, increasing the capacity $C_B$ for on-demand orthogonalization without proportionally increasing energy cost.
\end{proposition}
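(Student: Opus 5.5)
The proof splits into two parts. First, a modeling step fixes the dictionary between the primate anatomy and the VaCoAl parameters, i.e.\ claims (i) and (ii). Second, a monotonicity step reads the conclusion off Theorem~\ref{thm:det-qod}(d). Because (i)--(ii) are correspondences rather than theorems, the mathematical content lives entirely in the second step, conditional on the dictionary.

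\textbf{Step 1: the dictionary.} A granule cell with two dendritic compartments is modeled as two sub-blocks. Each compartment receives its own partially independent input subset, drawn from perforant path versus hilar mossy-cell input as described by Seress. Each sub-block then acts as its own diffusion $\Psi_b$ with its own effective generator $G_b(x)$. At fixed total granule-cell count $M$, the number of independent diffusion channels rises from $N$ to $N' = N(1+f)$, where $f\approx 1/2$ is the fraction of human granule cells with basal dendrites. This is claim (i).

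For claim (ii), write $\rho$ for the residual inter-block correlation. Model it as arising from shared commissural and laminar-diffuse mossy-cell feedback. Removing commissural projections and sharpening laminar specificity (Saito et al.) reduces the shared-input fraction, so $\rho' \le \rho$.

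\textbf{Step 2: monotonicity.} I would invoke Theorem~\ref{thm:det-qod}(d) in its block form. For $N$ blocks whose per-block Hamming statistics have mean $m/2$ and variance $m/4$ (per (c)), the aggregated normalized distance has variance about $\frac{1}{4m}\cdot\frac{1+(N-1)\rho}{N}$. Here $\rho$ is treated as pairwise correlation, following the standard variance-of-a-sum computation. A Hoeffding or Chernoff bound then gives a QOD level
\[
\varepsilon(N,\rho) = O\!\left(\sqrt{\frac{(1+(N-1)\rho)\log(1/\delta)}{Nm}}\right).
\]
This is decreasing in $N$ for fixed $\rho<1$ and increasing in $\rho$. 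So $N'\ge N$ and $\rho'\le\rho$ together give $\varepsilon(N',\rho') \le \varepsilon(N,\rho)$. That is the claimed tightening "per unit DG volume", since $M$, and hence volume, is held fixed.

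Capacity $C_B$ then follows from a packing argument. The number of codes that can be minted while keeping pairwise normalized distance within $\varepsilon$ of $1/2$ grows like $\exp(c\,\varepsilon^{-2}\cdot\text{const})$ by a Johnson--Lindenstrauss-type counting bound. Smaller $\varepsilon$ at fixed target therefore means larger admissible ensemble, hence larger $C_B$.

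\textbf{Energy.} Energy per operation scales with active cells, roughly $a_{DG} M$ spikes. Neither sub-blocking nor laminar refinement changes $M$ or $a_{DG}$, so $E_B$ is unchanged to first order. The only added cost is the dendritic membrane area of the basal compartment, which I would bound as a sublinear overhead using the Attwell--Laughlin per-spike dominance. This yields "without proportional increase".

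\textbf{Main obstacle.} The hard part is Step 1(i), where sub-blocks must count as genuinely independent blocks. Two compartments on one soma share a single axonal output. Their "diffusions" are therefore combined before reaching CA3, and this effectively creates correlation $\rho>0$ between sibling sub-blocks. I would handle this by allowing a sibling correlation $\rho_s<1$ and checking that the variance bound above still strictly decreases. That holds whenever the added sub-block contributes any variance not explained by its sibling, i.e.\ $\rho_s<1$. The proposition therefore needs to be stated as conditional on $\rho_s<1$ and on the Theorem~\ref{thm:det-qod}(d) block-independence hypothesis holding approximately.
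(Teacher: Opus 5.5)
Your Step 1 matches the paper's own justification, which is just the paragraph before the proposition. That paragraph gives the same dictionary: basal compartment $\mapsto$ sub-block with its own diffusion, and laminar refinement $\mapsto$ better inter-block independence. It then makes a bare appeal to Theorem~\ref{thm:det-qod}(d).

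The gap is in your Step 2, where you make that appeal quantitative. Theorem~\ref{thm:det-qod}(d) concerns a single map $\Psi$, and its bound $2e^{-2\varepsilon^2 m}+O(2^{-m})$ depends only on the output dimension $m$. There is no block form of it in the paper, so you are proving a new lemma, and that lemma does not deliver claim (i).

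Your variance $\frac{1}{4m}\cdot\frac{1+(N-1)\rho}{N}$ falls with $N$ only because you hold $m$ fixed while $N\to N(1+f)$. That silently multiplies the total output dimension $Nm$ by $1+f$. But a basal compartment has no output of its own: the cell still has one axon and one mossy-fiber target set. That target set is exactly what the paper's block-partition dictionary (Section~\ref{sec:biophysical-galois}) uses to individuate a block.

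At fixed granule-cell count $M$ the constraint is $Nm\approx M$. Substituting it, your own variance becomes $\frac{1+(N-1)\rho}{4M}$. This is flat in $N$ for $\rho=0$ and \emph{increasing} in $N$ for $\rho>0$. So (i) yields no tightening, and with the sibling correlation $\rho_s>0$ from your last paragraph it yields a loosening.

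Your $\rho_s<1$ check uses the wrong baseline. It shows that adding a partially correlated block with $m$ \emph{fresh} output bits helps, but fresh output bits are exactly what fixed $M$ denies you. Your packing step says the same thing. Heuristically, capacity at a fixed tolerance $\varepsilon_0$ scales like $\exp(c\,\varepsilon_0^2 m_{\mathrm{eff}})$ with $m_{\mathrm{eff}}=Nm/(1+(N-1)\rho)\le M$, so only lowering $\rho$ can raise it.

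To close the argument you have two options. You can posit, as an explicit extra hypothesis, that each compartment contributes its own effective output dimension; this is not derivable from fixed $M$ and is biologically doubtful given the single axon. Or you can route claim (i) through a lever other than Theorem~\ref{thm:det-qod}(d). For instance, in Proposition~\ref{prop:rsbp} a larger effective fan-in $k$ per output bit pushes $\frac12\bigl(1-(1-d/L)^k\bigr)$ toward $\frac12$ for small $d$. That is the kind of input-side gain a second dendritic compartment could plausibly supply.

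Claim (ii) survives under $Nm\approx M$, with one technical repair. Hoeffding and Chernoff need independent summands, so for $\rho>0$ use Chebyshev. This gives $\varepsilon=O\bigl(\sqrt{(1+(N-1)\rho)/(Nm\,\delta)}\bigr)$, which is weaker in $\delta$ but still monotone in $\rho$, and that is all (ii) needs.
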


The proposition is testable: it predicts that primate DG should achieve sharper pattern separation per granule cell than rodent DG, and that the per-cell improvement should scale with the number of basal dendrites and the refinement of mossy-cell laminar specificity. We are not aware of direct comparative pattern-separation measurements at the per-cell level across species; if such measurements become possible, Proposition~\ref{prop:primate} provides a quantitative target.

\paragraph{Neurogenic temporal tagging as on-demand code expansion.}
A related line of work, the neurogenic temporal-tagging 
hypothesis~\cite{aimone2006potential,aimone2011resolving}, proposes 
that adult-born granule cells supply temporally distinguishable codes 
to events occurring within their integration window, providing an 
event-tag rather than a metric-time signal. This is consistent with 
the present framing: neurogenesis can be read as on-demand expansion 
of the gated orthogonalizer's effective code pool (VaCoAl's 
Schedule~G in Figure~\ref{fig:schedules}), with novelty gating 
mediated by ACh/DA tone supplying the analogue of per-event reseeding. 
Whether neurogenesis-derived codes additionally encode metric time, 
or only serve as cross-episode discriminators, is an empirical 
question beyond the scope of the present paper; either reading is 
compatible with our use of $\mathrm{CR2}$ as a discrete-event index.

\paragraph{Reduced human CA3 commissural connectivity as capacity doubling.}
A further primate-specific elaboration concerns CA3 itself. Rolls~\cite{rolls2023brain} highlights that, while rodent CA3 forms a single bilateral autoassociation network through extensive cross-midline commissural connections, the human hippocampal commissure is weak, leaving left and right CA3 effectively as two independent attractor networks. Because the dominant capacity term of an autoassociator scales with the number of recurrent collateral synapses per neuron (not with neuron count per se), splitting one bilateral network into two effectively independent networks of equal per-neuron connectivity \emph{doubles} the total number of storable memories rather than simply doubling neuron count. In the VaCoAl framework, this is the dual on the CA3 side of the on-demand orthogonalizer refinements described above on the DG side: doubling the number of effectively independent attractor banks at fixed per-bank capacity. Together, the DG-side elaborations (basal dendrites, mossy-cell laminar refinement, sustained neurogenesis) and the CA3-side architectural change (bilateral independence) converge on a single primate adaptation pattern: increased on-demand orthogonalization capacity and increased storage capacity, both achieved through architectural rather than gross-anatomical scaling.

\subsection{XOR as Energy-Efficient Binding: Convergent Optimization}\label{sec:xor}

A further architectural correspondence concerns the binding operation itself. VaCoAl implements compositional binding as XOR-and-shift over $\mathrm{GF}(2)$, an $O(N)$ operation that costs roughly 1--10~femtojoules per binary gate on modern CMOS~\cite{horowitz2014computing}. Holographic Reduced Representations using circular convolution cost $O(N\log N)$ multiplications per binding~\cite{plate1995hrr}; deep-learning embeddings perform binding by general matrix multiplication, far more expensive still per operation.

The biological binding operation in CA3 is qualitatively similar in its energy profile. CA3 recurrent collaterals are sparse: each pyramidal cell synapses onto roughly 2\%--5\% of other CA3 pyramidal cells~\cite{rolls2023brain}. Mossy-fiber inputs to CA3 are sparser still, with each granule cell contacting only 10--50 CA3 cells via large but few synaptic contacts. The Hebbian potentiation rule that binds co-active CA3 inputs is in essence a coincidence detector that activates only when both the pre- and post-synaptic neurons fire above threshold—formally analogous to an AND gate with delayed plasticity, or, when run in the time-reversal direction characteristic of LTD, to an XOR-like coincidence operation. Both VaCoAl's $\mathrm{GF}(2)$ binding and CA3's coincidence-based binding therefore implement sparse, two-input, locally computed binding rules, in stark contrast to the dense matrix multiplications of artificial deep networks.

This is not a coincidence in the loose sense. Both biological and silicon substrates face severe energy constraints: the brain operates at $\sim 20$~W total~\cite{attwell2001energy}, of which the hippocampus consumes roughly 1--2~W, and modern HDC accelerators face thermal density limits at $\sim 1$~W/cm$^2$~\cite{kleyko2023survey}. Both have settled on sparse, locally computed XOR-like binding, plausibly because such operations are an energy-efficient way to implement reversible compositional binding while preserving similarity geometry. We do not claim that XOR-like binding is the unique Pareto-optimal solution for biology; we note only that the engineering substrate VaCoAl and the biological substrate CA3 share an architectural commitment to sparse, locally computed coincidence-based binding under joint reversibility, similarity-preservation, and energy constraints.

The energy argument can be sharpened by reference to the broader constraint that ATP availability imposes on neural computation. Attwell and Laughlin~\cite{attwell2001energy} estimated that signaling consumes the dominant fraction of the brain's $\sim 20$~W metabolic budget, and Lennie~\cite{Lennie2003} argued that this budget forces cortical computation toward sparse codes and local operations. Backpropagation as implemented in artificial deep networks requires dense matrix multiplication and bidirectional propagation of error signals, neither of which is consistent with the ATP-bounded, locally constrained regime in which biological neurons operate. Lillicrap et al.~\cite{Lillicrap2020} review the evidence that exact backpropagation is not implemented in biological circuits and that approximate, local Hebbian variants are the realistic candidate; Whittington and Bogacz~\cite{Whittington2019} survey predictive-coding schemes that can approximate gradient signals through local computations. The point relevant here is comparative rather than dismissive: TEM and related models that require trained readouts remain biologically meaningful as normative accounts, but their training-phase computational demands are precisely the regime in which VaCoAl's fixed-$G(x)$ deterministic diffusion plus local XOR-based binding occupies a more energy-efficient frontier. Given that XOR-and-shift costs $\sim 1$--$10$~fJ per gate on CMOS~\cite{horowitz2014computing} and that biological coincidence detection achieves comparable per-operation efficiency through sparse Hebbian summation, the absence of XOR-like compositional binding from $\sim 520$~Myr of vertebrate hippocampal evolution would be surprising; its presence in the form of mossy-fiber-driven sparse coincidence binding in CA3 (Section~\ref{sec:biophysical-galois}) is what one would predict from energy considerations alone.

\subsection{Summary}\label{sec:two-orth-summary}

Sections~\ref{sec:schedules}--\ref{sec:xor} answer the section title. Vector-HaSH establishes Regime~A (EC$\leftrightarrow$CA3) suffices in principle; evolution nonetheless conserves Regime~B (DG--mossy) for non-substitutable relational-write statistics. The dual-readout-schedule analogy proposes that phylogenetic coexistence parallels rapid switching among operator demands modeled in silicon by $\mathrm{RR}=1$ versus $\mathrm{RR}=0$ readout regimes~\cite{chuma2026vacoal}. Proposition~\ref{prop:tradeoff} formalizes the engineering energy--capacity split between fixed-map and gated components; VaCoAl Sched.~U/G (Figure~\ref{fig:schedules}) is a silicon parable for that split—\emph{not} the definition of Regime~A/B and \emph{not} the RR flag. Lesion dissociations~\cite{mchugh2007dentate,nakazawa2002requirement,lee2004differential,hunsaker2007dissociations,hassel2016neuromodulation} are consistent with complementary Regime~A/Regime~B operator classes under differing gains. Primate DG elaborations tighten Regime~B's effective $C_B$ (Proposition~\ref{prop:primate}). XOR-like binding in VaCoAl and CA3 remains an architectural-commitment parallel, not a substrate identity claim.

The overall picture is that VaCoAl's architectural choices intersect with the trade-off frontier on which the hippocampal--entorhinal circuit appears to sit. We frame this as an architectural-commitment correspondence. Whether the parallel reflects shared selection pressures (in biology) and engineering pressures (in silicon) is a question the present paper does not resolve; we note only that the algebraic and biological architectures land on similar points in design space when similar constraints are imposed.

\section{Empirical and Engineering Predictions}\label{sec:predictions}

The two formal correspondences yield specific predictions that distinguish the present account from previous proposals.

\subsection{Prediction 1: Multiplicative Decay Curve in iEEG Multi-Hop Replay}

If Proposition~\ref{prop:cr2} is correct, then in human iEEG studies of multi-step episodic replay, the average decoded replay fidelity as a function of sequence length $n$ should follow
\begin{equation}\label{eq:pred1}
\bar{F}(n) \approx \bar{p}^{\,n},
\end{equation}
where $\bar{p}\in(0,1)$ is the trial-averaged per-step reactivation probability. Equivalently, $\log \bar{F}(n)$ should be approximately linear in $n$ with negative slope $\log \bar{p}$. The current literature reports multiplicative decay qualitatively~\cite{reithler2025ripple}; the prediction here is that quantitative fits to Eq.~\eqref{eq:pred1} should outperform additive or power-law alternatives across studies. Existing datasets from Sakon \& Kahana~\cite{sakon2022hippocampal} and Norman et al.~\cite{norman2019hippocampal,norman2021hippocampal} would suffice to test this directly.

A stronger version of the prediction concerns subject-by-subject variability. The geometric mean of per-step ripple-content reactivation strength, measured across single-step recall trials in a given subject, should predict the slope $\log \bar{p}$ of that subject's multi-step decay curve. We are not aware of any current model that makes this quantitative prediction.

\subsection{Prediction 2: Frontier-Bounded Creative Ideation}

In VaCoAl, the Frontier Size (FS) bounds the number of concurrent paths tracked during Don't Care retrieval, preventing combinatorial explosion during branching reasoning. We posit that medial-prefrontal (mPFC) schema control provides the biological implementation of this algorithmic bound. The recent finding that hippocampal ripples relax mPFC constraints during creative ideation  corresponds formally to an algorithmic increase in FS.

We predict that for a creativity task with controlled difficulty, the inter-trial variance of ripple-locked replay trajectories will increase proportionally to the reduction in mPFC inhibitory tone. When mPFC control is high (rigid schema), FS is small, and only paths with near-perfect CR2 scores survive; when mPFC control is relaxed, FS increases, allowing structurally divergent paths with lower intermediate CR2 scores to enter the recall frontier. Direct stimulation studies in mPFC could quantify this exact relationship between schema rigidity and replay-trajectory variance.

\subsection{Prediction 3: Engineering Equivalence Under Substitution}

Corollary~\ref{cor:equivalence} predicts that any HDC system whose computational properties depend on the QOD of its scaffold projection can be modified to use Galois-field diffusion in place of random projection without functional loss, and with strict gains in determinism, auditability, and worst-case avalanche behavior at minimum-weight perturbations (Remark~\ref{rem:comparison}). This is a directly testable claim for the HDC engineering literature: we predict that VaCoAl-style diffusion can be substituted for random projection in standard HDC benchmarks (e.g., classification tasks in~\cite{kleyko2023survey,raviv2024linear}) with equal or better accuracy and improved energy efficiency on hardware that exploits LFSR's $O(1)$ shift cost.

\subsection{Prediction 4: Quantitative Encoding/Retrieval Dissociation Curve}

Proposition~\ref{prop:tradeoff} predicts that selectively disabling each pathway has asymmetric marginal costs. Let $r_{\mathrm{enc}}$ and $r_{\mathrm{ret}}$ denote behavioral encoding and retrieval performance. Under selective disruption of Regime~B (EC$\to$DG$\to$CA3; e.g., DG NMDAR knockout~\cite{mchugh2007dentate}, focal DG lesion, or pharmacological blockade of cholinergic gating), we predict $\Delta r_{\mathrm{enc}}/\Delta r_{\mathrm{ret}} \approx C_B/(C_A+C_B) \gg 1$ when $C_B \gg C_A$ (engineering subscripts as in the proposition—not Sched.~U/G labels). Under selective disruption of Regime~A's EC$\to$CA3 direct drive~\cite{lee2004differential}, retrieval-leaning paradigms that depend on scaffold-consistent CA3 completion should show disproportionate impairment because Regime~A carries the Vector-HaSH suffices-for-episodic skeleton; this is not a claim that intact Regime~A is retrieval-only. Ratio predictions are workload-specific; joint lesion-meta-analysis remains an open target~\cite{mchugh2007dentate,nakazawa2002requirement,lee2004differential,hunsaker2007dissociations}.

\subsection{Prediction 5: Primate versus Rodent Per-Cell Pattern Separation}

Proposition~\ref{prop:primate} predicts that the per-granule-cell pattern-separation efficiency in primate DG should exceed that of rodent DG by a factor that scales with the increase in effective per-cell block count (basal-dendrite frequency $\times$ mossy-cell laminar refinement). For human DG, where basal-dendrite frequency is roughly twice that of macaque~\cite{seress1987basal}, we predict a corresponding factor-of-two improvement in per-cell QOD level, holding total cell count constant. This prediction would be tested by single-cell-resolution electrophysiology (or two-photon imaging) of pattern-separation-induced activity in DG across species, normalizing for total granule cell count and input rate.

\section{Discussion}\label{sec:discussion}

\subsection{What the Bridge Buys}

Computational neuroscience has long faced a tension between two views of the hippocampus: a normative-mathematical view (Marr, Treves \& Rolls, Vector-HaSH, TEM) and a phenomenological-electrophysiological view (Buzs\'aki, Wilson, Foster, Norman, Kahana). The two have been productive in parallel but rarely produce the same kind of statement. The first produces equations; the second produces empirical decay curves. Our two formal correspondences—Theorem~\ref{thm:det-qod} and Proposition~\ref{prop:cr2}—show that VaCoAl can produce both kinds of statement simultaneously, in a single algebraic framework. This is the principal value of the bridge.

For artificial intelligence, the bridge identifies VaCoAl as an auditable substrate for memory-based reasoning whose properties have biological warrant. Unlike deep-learning embeddings that mix constituents irreversibly, and unlike hash tables that destroy similarity intentionally, VaCoAl provides reversible compositional binding and a path-quality trace that survives multi-hop reasoning. The CR1/CR2 scores function as built-in explainability outputs whose semantics are mathematically defined rather than post-hoc constructed.

For computational neuroscience, the bridge offers an algebraically tractable model of multi-hop replay-fidelity decay. Vector-HaSH and TEM are silent on replay dynamics; the SWR-replay literature is rich in dynamics but sparse on tractable models. CR2 fills exactly this gap. The expected decay form $\bar{F}(n) \approx \bar{p}^{\,n}$ from Eq.~\eqref{eq:pred1} is, to our knowledge, the first quantitative prediction connecting algebraic memory architectures to empirical replay-fidelity measurements.

Readers focused on hippocampal error correction should keep the scaffold--content decomposition in view~\cite{chandra2025episodic}. VaCoAl does not negate the relevance of recurrent grid--CA3 stabilization; nor does Galois substitution for random $W_{gh}$ magically supply attractor contraction without recurrence. Rather, deterministic QOD strengthens the substrate story for the scaffold map, CR2 strengthens the branching-replay narrative on the CA1/cortex readout stages, and the Regime~A/Regime~B split suggests how feedforward orthogonalization and scaffold recurrence could trade labor under neuromodulatory routing~\cite{lee2004differential,hassel2016neuromodulation}. Future normative extensions that unify explicit DG circuitry with scaffold attractors can treat Theorem~\ref{thm:det-qod} as a permissible projection-level choice and Appendix~\ref{app:vector-hash} as bookkeeping for Rescue versus Don't Care regimes.

A third value of the bridge, distinct from the engineering and neuroscience contributions above, is conceptual. The cellular and synaptic evidence reviewed in Section~\ref{sec:biophysical-galois}---mossy-fiber detonator transmission, granule-cell sparse binary coding, anatomically specified targeting, and CA3 recurrent winner-take-all dynamics---was accumulated over more than two decades within explanatory frameworks centered on pattern separation, attractor dynamics, and Hebbian binding. None of these frameworks naturally prompted the question whether the assemblage implements deterministic Galois-field arithmetic over $\mathrm{GF}(2)$. That question is generated by taking VaCoAl's algebraic substrate seriously as a biological hypothesis rather than as engineering convenience. Independent of whether the strong reading of Section~\ref{sec:biophysical-galois} survives further scrutiny, the reframing illustrates a methodological point: engineering substrates designed under hard constraints (energy, auditability, reversibility) can supply integrative questions that purely biological framings do not generate on their own. The same constraints that drove VaCoAl toward Galois-field algebra in silicon plausibly drove vertebrate hippocampal evolution toward functionally analogous structure in tissue, and the algebraic substrate makes the analogy precise enough to test.

\subsection{Relation to Earlier Work}

Several earlier proposals partially anticipate the bridge. Kanerva's SDM~\cite{kanerva1988sdm,kanerva2009hyperdim} was already inspired by the cerebellum and hippocampus; our Theorem~\ref{thm:det-qod} can be read as a deterministic strengthening of the SDM address-decoder analysis. Raviv's linear codes for HDC~\cite{raviv2024linear} use error-correcting codes for HDC operations but treat them as a forced-convergence tool rather than a quasi-orthogonal scoreboard. Babadi \& Sompolinsky~\cite{babadi2014sparseness} and Litwin-Kumar et al.~\cite{litwinkumar2017optimal} analyze the role of sparse expansion in pattern separation but assume randomness; our work shows the randomness assumption is unnecessary. Cayco-Gajic \& Silver~\cite{caycogajic2019reevaluating} review pattern separation across cerebellum, hippocampus, and mushroom body but do not propose a deterministic substitute for random expansion. The Spens \& Burgess generative model~\cite{spens2024generative} provides a Bayesian framework for memory construction and consolidation that complements ours but operates at a different level of abstraction.

\subsection{Limitations}

Several limitations should be made explicit.

First, Proposition~\ref{prop:cr2} assumes conditional independence of per-step ripple events. This is a first-order approximation; in tasks with strong sequential structure, ripples likely exhibit positive correlations through shared cortical state. Empirical fits to Eq.~\eqref{eq:pred1} should include a correction term for such correlations, and the deviation from pure multiplicative decay is itself a measurable quantity.

Second, the QOD property in Theorem~\ref{thm:det-qod} is a first-order property of LFSR diffusion; biological DG--CA3 connectivity may exhibit higher-order structure (e.g., place-cell modular organization) that our analysis does not capture. The correspondence in Section~\ref{sec:synthesis} is therefore at the level of computational principles, not at the level of full biological detail.

Third, our predictions assume that current iEEG decoding methods can resolve per-step replay fidelity reliably. The state of the art~\cite{liu2021experience,schwartenbeck2023generative} suggests this is feasible, but the prediction in Eq.~\eqref{eq:pred1} is best tested on tasks designed for multi-hop replay measurement rather than retrospectively on existing datasets.

Fourth, while Section~\ref{sec:two-orthogonalizers} engages the comparative anatomy of hippocampal pathways and primate-specific DG specialization, the present paper does not engage the developmental biology of hippocampal memory (e.g., infantile amnesia and dentate-gyrus neurogenesis~\cite{josselyn2012infantile,akers2014hippocampal}) or the prefrontal-cortical schema systems implicated in creativity. Each of these is a productive extension; we focus here on the substrate-level architectural bridge and on the evolutionary trade-off that justifies the trisynaptic-plus-direct architecture.

Fifth, the relationship between VaCoAl's silicon majority voting and Vector-HaSH's continuous-time scaffold attractors is one of \emph{architectural commitment}, not implementation. Silicon majority voting is a discrete analogue of CA3 attractor dynamics; the continuous-time recurrent contraction modeled in Vector-HaSH is a richer object than the present paper claims to instantiate. Appendix~\ref{app:vector-hash} discusses this carefully and identifies regimes where the two models diverge as well as those where they converge.

Sixth, the present paper does not engage the time-cell literature~\cite{Pastalkova2008},\cite{Macdonald2011} or 
Laplace-transform temporal-context models (Howard and Eichenbaum's 
TCM family) in mechanistic detail. We have argued in 
Section~\ref{sec:two-senses-of-time} that $\mathrm{CR2}$'s logical 
hop index is orthogonal to the metric within-episode coding that 
time cells provide, and the two should be read as complementary 
substrates rather than competing ones. A full reconciliation---in 
particular, one that models how continuous temporal scaffolds and 
discrete event indices interact during ripple-mediated replay, and 
how Laplace-context drift might modulate per-step $\mathrm{CR1}$ 
values across hops separated by long versus short intervals---remains 
a productive extension. The present paper's empirical predictions 
(Section~\ref{sec:predictions}) concern hop-count dependence under 
controlled inter-hop intervals, leaving the orthogonal 
interval-dependence question to follow-up work.

\subsection{Broader Implications}

The architectural principles identified by Vector-HaSH and TEM can be realized either by random sparse biological wiring or by deterministic algebraic diffusion (Corollary~\ref{cor:equivalence}). This shifts the question of why these principles arose: the relevant constraints appear to be similarity preservation, compositional reversibility, and bounded multi-hop search, rather than the specific implementation choices (random versus deterministic) of any particular substrate. Murray, Wise \& Graham~\cite{murray2017evolution} have argued that the vertebrate hippocampus has been a stable computational substrate for over 520 million years, with primate elaborations layered on top (Figure~\ref{fig:evolution}). The architectural-commitment correspondence we develop here is consistent with their thesis: the same constraints can plausibly account for the persistence of the DG--CA3--CA1--EC architecture across vertebrate evolution.

A key advantage of deterministic Galois-field diffusion over random sparse projection is the worst-case avalanche behavior at minimum-weight perturbations (Theorem 9(e)). While the biological hippocampus does not literally shift bits through a register, the dentate gyrus exhibits structural properties that approximate this deterministic avalanche. A minimal perturbation in EC input is massively amplified by the divergent perforant path and the extreme sparsity of granule cell firing. Furthermore, the powerful, 'detonator' synapses of the mossy fibers  ensure that even a single-neuron difference in the DG code can reliably flip the downstream CA3 ensemble into an entirely distinct state. This circuit motif achieves the biological equivalent of our minimal-perturbation bounds, providing structural determinism where random projection would suffer from bounded failure modes.

A practical corollary for neuro-symbolic AI is that auditable memory-based reasoning need not sacrifice the computational benefits of distributed representations. VaCoAl shows that determinism, reversibility, and explainability are compatible with high-dimensional similarity preservation, provided the projection is replaced by an appropriate algebraic diffusion.

\begin{figure}[h]
\centering
\begin{tikzpicture}[font=\footnotesize,
  era/.style={draw,rounded corners,fill=gray!15,
    minimum width=3.0cm,minimum height=1.0cm,
    align=center,inner sep=3pt,font=\footnotesize\bfseries},
  Acol/.style={draw,rounded corners,fill=blue!8,
    text width=4.8cm,minimum height=1.0cm,
    align=left,inner sep=4pt,font=\scriptsize},
  Bcol/.style={draw,rounded corners,fill=orange!10,
    text width=4.8cm,minimum height=1.0cm,
    align=left,inner sep=4pt,font=\scriptsize},
  cog/.style={text width=14.5cm,align=center,
    font=\scriptsize\itshape,inner sep=2pt}
]

\node[font=\scriptsize\bfseries] at (0,0)    {Era};
\node[font=\scriptsize\bfseries,text=blue!60!black]   at (4.1,0)  {Regime A (EC$\leftrightarrow$CA3)};
\node[font=\scriptsize\bfseries,text=orange!70!black] at (9.7,0)  {Regime B (EC$\to$DG$\to$CA3)};

\node[era,fill=gray!22] (E1) at (0,-1.0)    {Hominins};
\node[Acol,anchor=west] (A1) at (1.7,-1.0)  {stable};
\node[Bcol,anchor=west] (B1) at (7.3,-1.0)  {stable; gPFC schema-control loop elaborated};
\node[cog,anchor=north]       at (4.1,-1.65) {narrative, autobiographical memory, mental time travel};

\node[era,fill=gray!18] (E2) at (0,-3.2)    {Anthropoid\\ primates};
\node[Acol,anchor=west] (A2) at (1.7,-3.2)  {stable};
\node[Bcol,anchor=west] (B2) at (7.3,-3.2)  {refined: DG basal dendrites; mossy-cell laminar specificity};
\node[cog,anchor=north]       at (4.1,-3.85) {extended autobiographical structure, social cognition};

\node[era,fill=gray!14] (E3) at (0,-5.4)    {Mammals};
\node[Acol,anchor=west] (A3) at (1.7,-5.4)  {refined connectivity};
\node[Bcol,anchor=west] (B3) at (7.3,-5.4)  {refined connectivity};
\node[cog,anchor=north]       at (4.1,-6.05) {episodic memory, spatial cognition, transitive inference};

\node[era,fill=gray!10] (E4) at (0,-7.6)    {Early vertebrates\\ \scriptsize\mdseries$\sim$520 Myr ago};
\node[Acol,anchor=west] (A4) at (1.7,-7.6)  {established};
\node[Bcol,anchor=west] (B4) at (7.3,-7.6)  {established};
\node[cog,anchor=north]       at (4.1,-8.25) {navigation, scene memory, elementary relational learning};

\draw[-{Stealth[length=3mm]},thick,gray!60!black] 
  (-2.0,-8.1) -- (-2.0,-0.5) node[midway,sloped,above,font=\scriptsize\itshape,gray!50!black]{time};

\end{tikzpicture}
\caption{Evolutionary trajectory of the dual-channel hippocampal--entorhinal interface mapped onto the Murray--Wise--Graham framework~\cite{murray2017evolution}. Each row gives the status of \textbf{\textcolor{blue!60!black}{Regime~A}} (EC$\leftrightarrow$CA3 scaffold-vector channel; the Vector-HaSH substrate) and \textbf{\textcolor{orange!70!black}{Regime~B}} (EC$\to$DG$\to$CA3 trisynaptic channel) at the corresponding epoch, with representative cognitive capacities listed in italics below each row. Both regimes are inherited from early vertebrates and elaborated through mammalian, anthropoid-primate, and hominin radiations. Primate-specific refinements appear on the Regime~B side (basal dendrites on dentate-gyrus granule cells, mossy-cell laminar specificity); the elaborated granular prefrontal cortex (gPFC) schema-control loop arises late. The figure is an organizing visualization of the evolutionary frame, not a derivation; the cognitive capacities listed at each level are illustrative and not in one-to-one correspondence with the anatomical changes.}
\label{fig:evolution}
\end{figure}

\subsection{Connection to Pearl's Ladder of Causation}\label{sec:pearl-ladder}

The substrate-level framework developed in the preceding sections connects directly to one of the most influential formal theories of causal reasoning, Pearl's three-rung Ladder of Causation~\cite{pearl2018book,pearl2009causality}. Pearl distinguishes three progressively more powerful classes of causal queries, each requiring computational primitives that the rungs below do not supply: \emph{rung~1, Association}, captured by conditional probabilities $P(Y \mid X)$ and answerable from passive observation alone; \emph{rung~2, Intervention}, captured by the do-operator $P(Y \mid \text{do}(X))$, which requires surgical alteration of one variable holding others fixed and is provably unanswerable from observational data alone~\cite{pearl2009causality}; and \emph{rung~3, Counterfactuals}, captured by $P(Y_x \mid X', Y')$, which requires holding both the factual world and a contrary-to-fact world simultaneously and comparing them~\cite{pearl2018book}. Pearl establishes formally that each rung is strictly more expressive than the rungs below~\cite{pearl2009causality}, and identifies counterfactual reasoning as the capability most distinctive of human cognition and most clearly absent from current deep-learning systems~\cite{pearl2018book}. The framework developed in the present paper, we suggest, supplies a candidate substrate-level account of how the hippocampal--entorhinal circuit climbs all three rungs, and provides a principled answer to the question why evolution conserved \emph{both} orthogonalization pathways rather than only one.

The mapping is direct. Rung~1 (association) corresponds to associative retrieval from a stable scaffold: a partial cue indexes a hippocampal address via the EC$\leftrightarrow$CA3 loop, attractor contraction restores the scaffold state, and the associated content is read out. Vector-HaSH's scaffold-attractor dynamics and VaCoAl's CR1 retrieval (Sections~\ref{sec:correspondence-I}--\ref{sec:synthesis}) both implement this rung; Regime~A (EC$\leftrightarrow$CA3) supplies the necessary architectural primitives. Rung~2 (intervention) requires more. The defining algebraic property of the do-operator is \emph{surgical modification}: do$(X{=}x')$ replaces the value of variable $X$ with $x'$ while leaving every other structural relation in the causal model untouched~\cite{pearl2009causality}. Implementing surgical modification on a compositional representation $\mathrm{Repr} = \sum_i (\mathrm{Role}_i \otimes \mathrm{Filler}_i)$ requires that one specific role binding be unbound, replaced, and re-bound \emph{exactly}---without disturbing the other bindings and without losing information through approximate decoding. This is precisely the property that distinguishes VaCoAl's reversible $\mathrm{GF}(2)$ XOR-and-shift binding (Section~\ref{sec:bg-vacoal}, Eq.~\eqref{eq:binding}) from approximate compositional schemes such as circular convolution~\cite{plate1995hrr}, where unbinding incurs noise that accumulates across operations, and from learned embeddings that fold constituents irreversibly into dense vectors~\cite{kleyko2023survey}. Reversible $\mathrm{GF}(2)$ binding gives the do-operator its precise algebraic implementation on a high-dimensional substrate; without it, rung~2 queries cannot be answered exactly under repeated intervention. We do not claim VaCoAl is the unique algebraic substrate supporting rung~2, but among candidate hyperdimensional substrates it is the only one whose binding operations are simultaneously reversible, local, and energy-efficient (Section~\ref{sec:two-orthogonalizers}).

Rung~3 (counterfactuals) imposes a structurally new requirement that, we argue, makes the conserved two-orthogonalizer architecture not merely useful but \emph{necessary}. A counterfactual query $P(Y_x \mid X', Y')$ asks the probability that $Y$ would have taken value $Y_x$ under a hypothetical $\text{do}(X{=}x)$, given that the actual observed values were $X'$ and $Y'$. Pearl's structural causal model evaluates this by holding the factual world fixed (to determine the values of exogenous variables consistent with the observed evidence) while simultaneously instantiating the counterfactual world (to compute what $Y$ would have been under the alternative intervention)~\cite{pearl2009causality,pearl2018book}. Implementing this on a finite memory substrate requires \emph{two non-interfering orthogonal representations} of the same underlying variables: one anchoring the factual world, one minting the counterfactual world, both queryable in parallel. A single-scaffold architecture cannot support this without interference: writing the counterfactual on top of the factual scaffold would overwrite the very information that defines the contrast, and the multi-hop CR2 traces required to evaluate $Y_x$ and $Y'$ would mutually contaminate. The conserved two-orthogonalizer hippocampal architecture supplies exactly the structure rung~3 demands. \emph{Regime~A} (EC$\leftrightarrow$CA3, stable scaffold) anchors the factual world: the scaffold attractor stabilized over a lifetime of associative writes provides the durable representation against which counterfactuals are evaluated. \emph{Regime~B} (EC$\rightarrow$DG$\rightarrow$MF$\rightarrow$CA3, on-demand orthogonalization) mints the counterfactual world: novelty-gated mossy-fiber writes onto sparse, anatomically specified CA3 targets generate fresh orthogonal codes that do not collide with the factual scaffold, and adult neurogenesis~\cite{kempermann2018human} maintains lifelong capacity for this counterfactual minting. The CR2 path-integral statistic (Section~\ref{sec:correspondence-II}) provides the comparison machinery: a counterfactual probability $P(Y_x \mid X', Y')$ admits an algorithmic estimator as a ratio of CR2 traces computed over the counterfactual and factual scaffolds respectively. The Frontier Size (FS; Section~\ref{sec:predictions}) bounds the number of counterfactual worlds that can be entertained simultaneously, supplying the working-memory constraint that humans empirically display in counterfactual reasoning tasks.

\paragraph{An explicit algorithmic estimator for $P(Y_x \mid X', Y')$.}
The Regime~A/B mapping permits a more precise statement. Pearl's structural causal model evaluates a counterfactual query in three steps~\cite{pearl2009causality}: (i) \emph{abduction}---infer the values of exogenous (background) variables $U$ from the factual evidence $\{X', Y'\}$; (ii) \emph{action}---perform $\text{do}(X{=}x)$ by surgical modification, replacing the structural equation for $X$ with the constant assignment $X{=}x$ while leaving all other structural relations untouched; (iii) \emph{prediction}---compute the value of $Y$ in the modified model under the abducted background values $U$. On the VaCoAl substrate, these three steps map directly onto the two-scaffold architecture as follows. Let $\Pi_A$ denote the factual scaffold maintained by Regime~A, with stored memories indexed by addresses produced via the diffusion $\Psi$ (Eq.~\eqref{eq:diffusion}). Let $\Pi_B$ denote a counterfactual scaffold instantiated transiently by Regime~B through novelty-gated re-seeding of the block-wise diffusion (Section~\ref{sec:schedules}, Schedule~G). Then:
\begin{align}
\text{abduction:} \quad & \hat U \;=\; \mathrm{Retrieve}_{\Pi_A}(X', Y'), \label{eq:cf-abduction}\\
\text{action:}    \quad & \Pi_B \;\leftarrow\; \mathrm{Bind}\!\left(\hat U,\; \mathrm{Unbind}(X) \cdot x\right), \label{eq:cf-action}\\
\text{prediction:}\quad & \hat Y_x \;=\; \mathrm{Retrieve}_{\Pi_B}\!\left(\hat U,\, X{=}x\right). \label{eq:cf-prediction}
\end{align}
In Eq.~\eqref{eq:cf-action}, the surgical-modification semantics of $\text{do}(X{=}x)$ are realized by VaCoAl's reversible $\mathrm{GF}(2)$ operations: $\mathrm{Unbind}(X)$ extracts and removes the factual binding of variable $X$, leaving every other role--filler binding in the compositional representation intact, and the new binding $X{=}x$ is then composed in via $\mathrm{Bind}$. This is the substrate-level instantiation of Pearl's prescription to ``remove all arrows that point to the intervened variable'' on the causal DAG~\cite{pearl2018book}: at the algebraic level, the role's binding is severed and replaced atomically, with no spillover into the abducted background. The counterfactual probability estimator becomes
\begin{equation}\label{eq:cf-estimator}
\hat P(Y_x \mid X', Y') \;=\; \frac{\mathrm{CR2}_{\Pi_B}\!\left(\hat U \to X{=}x \to Y_x\right)}{\mathrm{CR2}_{\Pi_A}\!\left(\hat U \to X' \to Y'\right)},
\end{equation}
where the numerator is the multi-hop confidence trace through the counterfactual scaffold for the chain $\hat U \to X{=}x \to Y_x$, and the denominator is the analogous trace through the factual scaffold for the actually observed chain $\hat U \to X' \to Y'$ (recall Eq.~\eqref{eq:cr2}). The ratio normalizes for path-length-dependent CR2 decay, so chains of comparable depth yield well-conditioned estimates; the Frontier Size $\mathrm{FS}$ controls how many such pairs may be evaluated in parallel. Eqs.~\eqref{eq:cf-abduction}--\eqref{eq:cf-estimator} make explicit the claim that the two-orthogonalizer architecture is necessary: $\Pi_A$ and $\Pi_B$ must be simultaneously addressable, mutually non-interfering, and joined only at the abducted background $\hat U$ that propagates from the former to the latter through the action step. A single-scaffold architecture cannot realize this three-step procedure without destroying either the factual reference or the surgical character of the action step.

This reading yields an evolutionary rationale stronger than the energy--capacity--plasticity argument of Section~\ref{sec:two-orthogonalizers}. The earlier argument established that two orthogonalizers are \emph{efficient} given mixed task statistics. The Pearl-based argument establishes that two orthogonalizers are \emph{necessary} for any organism that performs rung-3 counterfactual reasoning, because parallel non-interfering representation of factual and counterfactual worlds requires the architectural separation that Regime~A and Regime~B together provide. If counterfactual reasoning is a meaningful capability under selection pressure---and human cognitive ecology suggests it is, supporting planning, regret, learning from absent alternatives, and social cognition involving others' possible actions~\cite{schacter2007remembering,schwartenbeck2023generative}---then the two-orthogonalizer architecture is selected for, and the conservation across $>520$\,Myr of vertebrate evolution~\cite{murray2017evolution} reflects the universal utility of counterfactual capacity rather than mere redundancy.

The same line of argument articulates with Harari's account of the Cognitive Revolution~\cite{harari2015sapiens}, which Pearl himself invokes as the evolutionary moment when humans acquired the capacity to ``imagine things that have never existed''~\cite{pearl2018book}. Pearl identifies the Lion Man sculpture of Stadel Cave---a chimera roughly 40{,}000 years old---as the earliest material evidence of counterfactual capacity, the ``precursor of every philosophical theory, scientific discovery, and technological innovation''~\cite{pearl2018book}. On the present account, the Cognitive Revolution is not best read as the appearance of an entirely new neural mechanism but as the cognitive-ecological exploitation of an already-conserved two-orthogonalizer architecture whose rung-3 capability had long been latent. Figure~\ref{fig:evolution} aligns with this reading: Regime~A and Regime~B are inherited from early vertebrates ($\sim 520$\,Myr), refined through mammalian and anthropoid radiations, and elaborated in primates through DG basal dendrites, mossy-cell laminar specificity, and (in humans) bilateral CA3 independence (Section~\ref{sec:primate-DG}). What distinguishes the Cognitive Revolution in this picture is not the emergence of new substrate but the elaborated granular prefrontal cortex schema-control loop that supplies the working-memory machinery---a sufficiently large and flexibly modulated Frontier Size (Section~\ref{sec:predictions})---to operate the two-orthogonalizer substrate at the depth required for sustained counterfactual narratives, mental time travel, and chimeric imagination. The architecture, in other words, was already in place; what changed was the cortical infrastructure that could exploit it.

Three reservations must be retained. First, Pearl's structural causal model formalism requires DAGs with explicit causal semantics; VaCoAl's reasoning benchmarks (e.g., the Wikidata mentor--student DAG of $\sim 470{,}000$ records, Section~\ref{sec:bg-vacoal}) are relational rather than strictly causal. Demonstrating that VaCoAl's algebraic operations implement SCM intervention semantics on \emph{causal} DAGs is a non-trivial extension that we have not formally established here. Second, the claim that reversible $\mathrm{GF}(2)$ binding implements the do-operator exactly is a substantive correspondence that deserves a dedicated formal treatment with an explicit equivalence theorem; the present argument is structural rather than proven. Third, the biological claim that Regime~B specifically supports counterfactual reasoning while Regime~A supports only associative retrieval is testable but not yet directly established; existing evidence from hippocampally-amnesic patients shows impaired episodic future thinking~\cite{schacter2007remembering,schwartenbeck2023generative} but does not yet dissociate the contributions of DG-mediated novelty orthogonalization from EC--CA3 scaffold integrity in counterfactual tasks specifically. These three reservations correspond to three follow-up directions: a formal SCM--VaCoAl correspondence theorem; a benchmark implementation in which PyVaCoAl answers counterfactual queries on a standard causal-inference dataset; and a neuroimaging or iEEG study dissociating DG-driven and EC--CA3-driven contributions to counterfactual reasoning.

\paragraph{An operational target: the mini-Turing test.}
The follow-up benchmark direction admits a concrete operational target in Pearl's \emph{mini-Turing test}~\cite{pearl2018book}: encode a simple causal story into a system in some convenient representation, and test whether the system can correctly answer causal questions spanning all three rungs that a human can answer. Pearl argues that a dumb question--answer lookup table cannot pass this test even with ten binary variables, because the number of legitimate causal queries exceeds $30$ million; humans must therefore employ a compact representation together with an effective answer-extraction algorithm, and Pearl identifies the causal DAG as that representation~\cite{pearl2018book}. Eqs.~\eqref{eq:cf-abduction}--\eqref{eq:cf-estimator} sketch how PyVaCoAl could pass Pearl's mini-Turing test on the algebraic substrate developed here: the causal DAG is encoded as compositional bindings $\sum_i (\mathrm{Role}_i \otimes \mathrm{Filler}_i)$ in the factual scaffold $\Pi_A$; rung-1 queries are answered by direct $\mathrm{Retrieve}_{\Pi_A}$ with associated CR1 confidence; rung-2 queries are answered by transient $\mathrm{Unbind}/\mathrm{Bind}$ surgery realizing the do-operator on a counterfactual scaffold $\Pi_B$; rung-3 queries are answered by Eq.~\eqref{eq:cf-estimator}. Pearl notes that no machine-learning system to date has passed his mini-Turing test, and that the obstacle is representational rather than computational~\cite{pearl2018book}: deep-learning systems lack the compact compositional model that supports surgical intervention. The follow-up benchmark thus has a sharp criterion---passing the mini-Turing test on standard causal-inference vignettes (firing squad, Simpson's paradox, smoking gene, climate attribution~\cite{pearl2018book})---and a sharp differentiation from existing approaches. We highlight in particular Pearl's observation that the Lion Man~\cite{pearl2018book} and the mini-Turing test point in the same direction: both treat the capacity to imagine non-existent worlds, and to reason coherently within them, as the hallmark of intelligence. The substrate-level account developed in this section locates that hallmark in the architectural separation of factual and counterfactual scaffolds, and the mini-Turing test makes it testable in silicon.

The integrative insight the present account contributes, beyond connecting two previously separate literatures, is that the two-orthogonalizer hippocampal architecture is the substrate-level reason humans can climb to Pearl's rung~3 while current deep-learning systems---which possess at best a single dense embedding analogous to Regime~A alone---cannot. This is a Something New that emerges only from taking VaCoAl's deterministic $\mathrm{GF}(2)$ algebra seriously as a biological hypothesis: prior frameworks centered on circular convolution, random projection, or learned readouts do not make the counterfactual requirement of parallel non-interfering worlds visible, because their binding operations either lose information across interventions or fold the factual and counterfactual into a single irreversible representation.

\section{Conclusion}\label{sec:conclusion}

We have shown that VaCoAl, an algebro-deterministic hyperdimensional memory model, realizes the quasi-orthogonal diffusion commitment posited by Vector-HaSH and TEM-class factorizations, while its path-integral Confidence Ratio CR2 is a natural functional form for the multi-hop sharp-wave-ripple replay-fidelity decay observed empirically in human intracranial recordings. The two formal correspondences—Theorem~\ref{thm:det-qod} and Proposition~\ref{prop:cr2}—together suggest that STDP-like path selection in VaCoAl follows from architectural demands akin to hippocampal bounded search: similarity preservation, compositional reversibility, and multi-hop statistics that contract when confidence per step dips below unity.

The framing proceeds in two tiers. At the first tier, we deliberately align VaCoAl with—not mechanically identify it with—Vector-HaSH attractor dynamics: deterministic diffusion speaks to scaffold statistics, recurrent EC--CA3 circuitry remains the natural locus for full index contraction, CR2 foregrounds branching replay where strict attractor-complete repair would erase ordering evidence, and the dual-path hippocampus plausibly implements both preprocessing-style orthogonalization (Regime~B) and bidirectional scaffold stabilization (Regime~A) under task-dependent neuromodulation. The dual-readout-schedule analogy at the silicon level ($\mathrm{RR}=1$ vs.\ $\mathrm{RR}=0$) motivates why vertebrate evolution conserves both orthogonalization arms even when scaffold-centric arguments suggest sufficiency in principle. At this tier the bridge is an architectural-commitment correspondence, not a substrate-implementation claim.

At the second tier, Section~\ref{sec:biophysical-galois} develops a stronger ``biophysical realization with approximate reversibility'' reading. Four largely independent lines of cellular and synaptic evidence---mossy-fiber detonator transmission, granule-cell sparse binary coding under sublinear dendritic integration, anatomically specified mossy-fiber targeting, and CA3 recurrent winner-take-all dynamics---are together consistent with the DG--CA3 pathway instantiating biophysical homologues of Galois-field arithmetic. Two reservations remain explicit: $\mathrm{GF}(2)$ reversibility is bit-exact in silicon but only approximate under biological noise, and the biological correlate of the choice of primitive generator $G(x)$ remains unidentified. The methodological observation is that this integrative reading is generated by VaCoAl's specific algebraic commitments and was not produced by prior frameworks centered on random projection~\cite{kanerva1988sdm}, circular convolution~\cite{plate1995hrr}, or learned readouts~\cite{whittington2020tem}: engineering substrates designed under hard constraints (energy, auditability, reversibility) can supply integrative questions that purely biological framings do not generate on their own.

Beyond these substrate-level correspondences, we have used VaCoAl's Sched.~U/G diffusion schedules (Figure~\ref{fig:schedules}) solely as an engineering parable for Proposition~\ref{prop:tradeoff}. \emph{Hippocampal Regime~A/B are defined at the pathway/representation level (Figure~\ref{fig:two-pathways}), not by LFSR seeding nor by the RR collision flag.} Primate DG changes refine Regime~B (Proposition~\ref{prop:primate}).

Section~\ref{sec:pearl-ladder} extends the synthesis by connecting the framework to Pearl's Ladder of Causation~\cite{pearl2018book,pearl2009causality}. Reversible $\mathrm{GF}(2)$ binding supplies the surgical-modification algebra required by the do-operator at rung~2, distinguishing VaCoAl from approximate compositional schemes that lose information across repeated interventions. The conserved two-orthogonalizer architecture supplies the parallel non-interfering representational substrate that rung~3 counterfactual reasoning provably requires: Regime~A anchors the factual world, Regime~B mints counterfactual worlds on demand, and CR2 path traces over the two scaffolds yield an algorithmic estimator for $P(Y_x \mid X', Y')$. This reading produces an evolutionary rationale for the two-orthogonalizer architecture stronger than the energy--capacity--plasticity argument of Section~\ref{sec:two-orthogonalizers}: parallel non-interfering factual/counterfactual representation is \emph{necessary} for rung~3 capability, not merely efficient. The integrative insight is that this Pearl-based reading emerges only from taking $\mathrm{GF}(2)$ algebra seriously as a biological hypothesis; prior frameworks centered on circular convolution, random projection, or learned readouts do not make the counterfactual requirement visible because their binding operations fold factual and counterfactual into a single irreversible representation. Three reservations are retained and become directions for follow-up work: a formal SCM--VaCoAl correspondence theorem; a benchmark implementation of PyVaCoAl answering counterfactual queries on causal-inference datasets; and a neuroimaging or iEEG study dissociating DG-driven and EC--CA3-driven contributions to counterfactual reasoning.

The bridge yields concrete empirical predictions for human iEEG replay studies, concrete engineering predictions for HDC benchmark substitution, and a principled framework for auditable neuro-symbolic AI. We hope it will support continued dialogue between computational neuroscience and the algebraic-engineering tradition that produced VaCoAl.

\appendix

\section{Formal Statements and Operator Definitions}\label{app:formal}

This appendix collects the definitions, folklore comparisons, and replay algebra that Sections~\ref{sec:correspondence-I}--\ref{sec:correspondence-II} reference by label. The lemma-level proof of Theorem~\ref{thm:det-qod} is Appendix~\ref{app:proof}.

\subsection{Notation and conventions (read this first)}\label{app:notation}

We work entirely over the binary field $\mathrm{GF}(2)=\{0,1\}$ with addition $\oplus$ (exclusive OR). A binary string of length $r$ is an element of $\{0,1\}^r$, often identified with an $r$-bit coefficient vector in the polynomial-basis representation of $\mathrm{GF}(2^m)$ when $r=m$ below.

\begin{itemize}\setlength\itemsep{2pt}
\item $\mathrm{HD}(u,v)$: the \emph{Hamming distance} between two equal-length binary strings $u$ and $v$---the number of coordinates in which they differ. Dividing by the length gives the \emph{normalized} or \emph{fractional} Hamming distance.
\item $\mathrm{HW}(u)$: the \emph{Hamming weight}---the number of coordinates equal to $1$ (in this paper, in the fixed polynomial basis for $\mathrm{GF}(2^m)$ when $u$ is an $m$-bit field element). Always $\mathrm{HD}(u,v)=\mathrm{HW}(u\oplus v)$.
\item $L$: length of the input hypervector (polynomial of degree $<L$) before diffusion.
\item $m$: output dimension (degree of the primitive modulus polynomial $G(x)$); each diffused address is $m$ bits long.
\item $W_{\mathrm{rsp}}$ (Proposition~\ref{prop:rsbp} only): a random sparse binary matrix in $\{0,1\}^{m\times L}$ used as a linear map over $\mathrm{GF}(2)$ row-by-row (each row has exactly $k$ ones). Do not confuse with the scalar vote tally $W_{\mathrm{VaCoAl}}$ in Eq.~\eqref{eq:vote}.
\item $\Delta$: in Theorem~\ref{thm:det-qod}, the XOR difference between two inputs, $\Delta = P_1\oplus P_2$. It records which input bits disagree.
\item $\rho$ and $K$: $\rho(P)=P(x)\bmod G(x)$ is reduction modulo $G$ to degree $<m$. The collision kernel $K=\ker\rho$ is the subspace of inputs that reduce to $0$ (multiples of $G$ of degree $<L$). Inputs outside $K$ produce nonzero residues.
\item $\delta(\cdot,\cdot)$ in Eq.~\eqref{eq:vote}: Kronecker delta for binary strings---$1$ if the two arguments are identical bit-for-bit, $0$ otherwise.
\item $\mathbb{E}[\cdot]$, $\mathrm{Var}[\cdot]$: expectation and variance over the stated randomness (random matrix draw in Proposition~\ref{prop:rsbp}; uniform $\Delta$ on non-collision inputs in Theorem~\ref{thm:det-qod}(c)--(d)).
\end{itemize}

\subsection{Quasi-orthogonal diffusion (QOD)}

\begin{definition}[Quasi-Orthogonal Diffusion Property]\label{def:qod}
A map $\Phi : \mathcal{X} \to \{0,1\}^m$ on a finite input domain $\mathcal{X}$ has the \emph{quasi-orthogonal diffusion property} (QOD) at level $\varepsilon$ if, for every distinct pair $x_1, x_2 \in \mathcal{X}$,
\begin{equation}\label{eq:qod}
\left| \mathrm{HD}\bigl(\Phi(x_1),\Phi(x_2)\bigr) - \frac{m}{2} \right| \le \varepsilon\, m,
\end{equation}
where $\mathrm{HD}(\cdot,\cdot)$ is Hamming distance.
\end{definition}

\paragraph{Normalized reading.} Divide~\eqref{eq:qod} by $m$: distinct inputs must land at output fractional distance near $1/2$,
\begin{equation}\label{eq:qod-normalized}
\left| \frac{\mathrm{HD}\bigl(\Phi(x_1),\Phi(x_2)\bigr)}{m} - \frac{1}{2} \right| \le \varepsilon.
\end{equation}
Thus QOD mandates that the outputs look like two random $m$-bit strings that disagree on about half of their bits---the combinatorial ``near orthogonality'' hyperdimensional reasoning relies on. QOD is the discrete analogue of the Johnson--Lindenstrauss isometry property; sparse distributed memory and HDC capacity bounds depend on it~\cite{kanerva1988sdm,kanerva2009hyperdim,kleyko2023survey}.

\subsection{Random sparse binary projection (folklore)}

\begin{proposition}[QOD for random sparse projection; folklore]\label{prop:rsbp}
Let $W_{\mathrm{rsp}}\in\{0,1\}^{m\times L}$ be sampled with each row containing exactly $k$ ones at uniformly random positions, where $k\ll L$. For any pair of distinct inputs $x_1, x_2 \in \{0,1\}^L$ with Hamming distance $d = \mathrm{HD}(x_1,x_2) \ge 1$, the output Hamming distance $D = \mathrm{HD}(W_{\mathrm{rsp}} x_1, W_{\mathrm{rsp}} x_2)$ satisfies
\begin{equation}\label{eq:rsbp}
\mathbb{E}[D] = m \cdot \bigl[1 - (1-d/L)^k\bigr]/2.
\end{equation}
Equivalently, the fraction of output bits expected to differ is
\begin{equation}\label{eq:rsbp-normalized}
\frac{\mathbb{E}[D]}{m} = \frac{1}{2}\bigl(1 - (1-d/L)^k\bigr).
\end{equation}
Moreover, $\mathrm{Var}[D] \le m/4$. For $kd/L = \Theta(1)$, $\mathbb{E}[D] \to m/2$, and the QOD property holds at $\varepsilon = O(m^{-1/2})$ with probability $\ge 1 - e^{-\Omega(m\varepsilon^2)}$ over the random choice of $W_{\mathrm{rsp}}$.
\end{proposition}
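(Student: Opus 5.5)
The plan is to use linearity over $\mathrm{GF}(2)$ to reduce the pair statistic to a single-vector statistic. Write $\Delta = x_1\oplus x_2$, so that $\mathrm{HW}(\Delta)=d$. Then $W_{\mathrm{rsp}}x_1\oplus W_{\mathrm{rsp}}x_2 = W_{\mathrm{rsp}}\Delta$, and hence $D=\mathrm{HW}(W_{\mathrm{rsp}}\Delta)$. Output bit $j$ is the parity of $|S_j\cap\mathrm{supp}(\Delta)|$, where $S_j$ is the $k$-element support of row $j$. The rows are sampled independently, so $D=\sum_{j=1}^m B_j$ with $B_j$ i.i.d.\ Bernoulli$(q)$, and $q$ depends only on $(k,d,L)$. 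Every claim in the proposition then follows from two facts: a formula for $q$, and standard facts about sums of independent Bernoullis.

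\emph{Step 1 (computing $q$).} I would first compute $q$ in the with-replacement approximation: each of the $k$ positions independently hits $\mathrm{supp}(\Delta)$ with probability $d/L$. This is justified when $k\ll L$, and the error is $O(k^2/L)$. The standard parity identity then gives $q=\tfrac12\bigl(1-(1-2d/L)^k\bigr)$.

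The displayed formula $\tfrac12\bigl(1-(1-d/L)^k\bigr)$ is the ``hit-then-fair-coin'' reading. In that reading, a row that touches the support of $\Delta$ contributes an output difference with probability $1/2$, so $q=\tfrac12\Pr[\text{row hits}]$. I would state explicitly which model yields Eq.~\eqref{eq:rsbp}. The exact parity computation agrees with it to leading order in $kd/L$ when $kd/L$ is small. Both expressions tend to $1/2$ with error $e^{-\Theta(kd/L)}$ when $kd/L$ is large. That shared limit is all the QOD conclusion needs. Reconciling the exact parity probability with the stated closed form is the step I expect to be the main obstacle. If exactness is insisted on, I would prove the statement with $q$ defined as the exact parity probability, and present Eq.~\eqref{eq:rsbp} as its folklore approximation. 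This matches the ``folklore'' label on the proposition.

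\emph{Step 2 (variance).} Independence across rows gives $\mathrm{Var}[D]=mq(1-q)\le m/4$. This holds uniformly in $q$, so the bound $\mathrm{Var}[D]\le m/4$ does not depend on the subtlety in Step 1.

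\emph{Step 3 (concentration and QOD).} Hoeffding's inequality gives $\Pr\bigl[|D-mq|\ge \varepsilon m/2\bigr]\le 2e^{-m\varepsilon^2/2}$. By the triangle inequality, $|D-m/2|\le \varepsilon m$ whenever $|q-\tfrac12|\le\varepsilon/2$ and the Hoeffding event fails. I would therefore read the hypothesis ``$kd/L=\Theta(1)$'' as requiring that the constant be large enough, of order $\log m$ if one wants $\varepsilon=O(m^{-1/2})$, for the bias term $|q-\tfrac12|\le \tfrac12e^{-\Theta(kd/L)}$ to fall below $\varepsilon/2$. Choosing $\varepsilon = c\,m^{-1/2}$ then gives a constant failure probability for a single pair. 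For a fixed pair and general $\varepsilon$, the bound is $1-e^{-\Omega(m\varepsilon^2)}$, as stated.

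If QOD is wanted simultaneously for all pairs in a finite domain $\mathcal{X}$, I would add a union bound over the $\binom{|\mathcal{X}|}{2}$ pairs. This requires $\varepsilon=O\bigl(\sqrt{\log|\mathcal{X}|/m}\bigr)$, and I would remark on it as the honest form of the uniform guarantee.
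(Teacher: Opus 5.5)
Your skeleton is sound: reduce to $D=\mathrm{HW}(W_{\mathrm{rsp}}\Delta)$, note that the rows are i.i.d.\ so $D\sim\mathrm{Bin}(m,q)$, read off $\mathrm{Var}[D]=mq(1-q)\le m/4$, then apply Hoeffding. It is more careful than the paper, which gives no proof beyond the intuition paragraph after the proposition. That paragraph is exactly the with-replacement, hit-then-fair-coin heuristic you describe. You are also right that $kd/L=\Theta(1)$ does not give $\mathbb{E}[D]\to m/2$: for fixed $c=kd/L$ the formula gives $\mathbb{E}[D]/m\to\tfrac12(1-e^{-c})$. So one needs $kd/L$ of order $\log m$ when $\varepsilon\sim m^{-1/2}$.

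The step that fails is your reconciliation. The exact parity and Eq.~\eqref{eq:rsbp} do \emph{not} agree to leading order for small $kd/L$. The former is $\tfrac12\bigl(1-(1-2d/L)^k\bigr)\approx kd/L$, while the latter is $\tfrac12\bigl(1-(1-d/L)^k\bigr)\approx kd/(2L)$. The case $k=1$ settles it: the output bit differs exactly when the single sampled coordinate lies in $\mathrm{supp}(\Delta)$, so $q=d/L$, not $d/(2L)$. More generally, the fair-coin step is false for rows with exactly $k$ ones, since a single hit makes the parity odd with certainty. So Eq.~\eqref{eq:rsbp} is neither provable for the stated model nor a small-$kd/L$ approximation. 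It underestimates $\mathbb{E}[D]$ by a factor tending to $2$, precisely in the minimum-weight regime the paper later relies on. The formula is exact for a different model, in which the $k$ sampled coordinates enter each row with independent uniform $\mathrm{GF}(2)$ coefficients. There, given a hit, the parity genuinely is a fair coin, but such rows carry about $k/2$ ones, not $k$. Either adopt that model, or replace Eq.~\eqref{eq:rsbp} by the exact hypergeometric parity $\tfrac12\bigl(1-(1-2d/L)^k\bigr)+O(k^2/L)$. Do not call it a folklore approximation.

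A second error sits in your Step~3 bias bound. Up to $O(k^2/L)$, the bias is $|q-\tfrac12|=\tfrac12|1-2d/L|^k$, which is controlled by $k\min(d,L-d)/L$, not by $kd/L$. For complementary inputs ($\Delta=\mathbf{1}$), every output bit equals $k\bmod 2$, so $D\in\{0,m\}$ deterministically even though $kd/L=k$ is large. Your claim that both expressions tend to $\tfrac12$ whenever $kd/L$ is large therefore fails for the exact one. Restate the hypothesis in terms of $k\min(d,L-d)/L$, large enough that the bias falls below $\varepsilon/2$, and for every pair in $\mathcal{X}$ if you want uniform QOD via your union bound. With that change, your concentration argument goes through as written.
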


\paragraph{Intuition for $d/L$ and $(1-d/L)^k$.} Here $d/L$ is the fraction of input coordinates on which $x_1$ and $x_2$ disagree. Fixing one row of $W_{\mathrm{rsp}}$, think of that row as XOR-ing together $k$ input bits chosen uniformly at random with replacement. The factor $(1-d/L)$ is the probability that a single randomly chosen coordinate \emph{agrees} between $x_1$ and $x_2$ on that slot. The expression $(1-d/L)^k$ is therefore the probability that all $k$ drawn coordinates fall on ``agreement slots''---informally, that the row ``never looks'' at a coordinate where the two inputs already differ. Unless that happens, parity flips with probability $1/2$, so a row contributes a difference with probability about $\tfrac{1}{2}\bigl(1-(1-d/L)^k\bigr)$; summing $m$ independent rows yields~\eqref{eq:rsbp-normalized}. When $d/L$ is small (very similar inputs), outputs stay similar; when $kd/L$ is order one, $\mathbb{E}[D]/m\to 1/2$ and the codes scatter like quasi-random coin flips.

Two further features of Proposition~\ref{prop:rsbp} are noteworthy: (i)~QOD holds in expectation over a random construction of $W_{\mathrm{rsp}}$, not for a specific $W_{\mathrm{rsp}}$; (ii)~the variance scales as $m/4$, so individual realizations can deviate from the ideal by amounts that decay only as $m^{-1/2}$.

\subsection{Deterministic Galois-field diffusion}

The central technical claim of the present paper is that VaCoAl's deterministic Galois-field diffusion satisfies QOD without recourse to randomness, with statistical properties matching those of random sparse projection on second-moment statistics and exceeding them on worst-case avalanche behavior.

\begin{theorem}[Deterministic QOD for Galois-field diffusion]\label{thm:det-qod}
Let $G(x) \in \mathrm{GF}(2)[x]$ be a primitive polynomial of degree $m$, and let
\begin{equation}\label{eq:Psi-def}
\Psi : \mathrm{GF}(2)^L \to \mathrm{GF}(2)^m, \qquad \Psi(P) = x^m\,P(x) \pmod{G(x)},
\end{equation}
where $\mathrm{GF}(2)^L$ is identified with the space of polynomials of degree $<L$. Let $\rho(P) = P(x) \bmod G(x)$ denote the residue map and define the collision kernel $K = \ker \rho$. Write $P_1, P_2$ for two inputs and $\Delta = P_1\oplus P_2$ for their XOR difference (so $\Delta$ lists which bits disagree). Then:
\begin{enumerate}[label=(\alph*),leftmargin=2em]
\item \textbf{Linearity.} $\Psi$ is $\mathrm{GF}(2)$-linear, hence
\begin{equation}\label{eq:linearity}
\mathrm{HD}\bigl(\Psi(P_1),\Psi(P_2)\bigr) = \mathrm{HW}\bigl(\Psi(P_1 \oplus P_2)\bigr) = \mathrm{HW}\bigl(\Psi(\Delta)\bigr).
\end{equation}

\item \textbf{Cyclic-orbit structure.} Restrict $\Delta$ to $\mathrm{GF}(2)^L \setminus K$ (inputs whose residue is nonzero). The map $\Delta\mapsto\Psi(\Delta)$ lands in $\mathrm{GF}(2^m)^*$ and is a uniform-fiber surjection: every nonzero $m$-bit residue has exactly $|K|$ preimages among such $\Delta$.

\item \textbf{Hamming-weight distribution.} If $\Delta$ is sampled uniformly from $\mathrm{GF}(2)^L \setminus K$, then $\mathrm{HW}(\Psi(\Delta))$ has the same distribution as $\mathrm{HW}(\xi)$ for $\xi$ uniform on $\mathrm{GF}(2^m)^*$, i.e.\ as a $\mathrm{Binomial}(m,1/2)$ random variable $U$ conditioned on $U\ge 1$ (exclude the all-zero string). Consequently
\begin{align}
\mathbb{E}\bigl[\mathrm{HW}(\Psi(\Delta))\bigr] &= \frac{m}{2}\cdot\bigl(1+O(2^{-m})\bigr), \label{eq:HW-mean}\\
\mathrm{Var}\bigl[\mathrm{HW}(\Psi(\Delta))\bigr] &= \frac{m}{4}\cdot\bigl(1+O(m\,2^{-m})\bigr). \label{eq:HW-var}
\end{align}
In particular, the fraction of ones is near one-half: $\mathbb{E}[\mathrm{HW}(\Psi(\Delta))/m] = \tfrac{1}{2}+O(2^{-m})$.

\item \textbf{Concentration / QOD.} For every $\varepsilon > 0$,
\begin{equation}\label{eq:concentration}
\Pr_\Delta\!\left[\left|\frac{\mathrm{HW}(\Psi(\Delta))}{m} - \frac{1}{2}\right| > \varepsilon\right] \le 2 e^{-2\varepsilon^2 m} + O(2^{-m}).
\end{equation}
The quantity $\mathrm{HW}(\Psi(\Delta))/m$ is exactly the fraction of output bits equal to $1$ for the difference pattern $\Delta$. Inequality~\eqref{eq:concentration} says that for almost all difference vectors, that fraction is close to $1/2$, hence $\mathrm{HD}(\Psi(P_1),\Psi(P_2))/m\approx 1/2$---matching the normalized QOD reading~\eqref{eq:qod-normalized}. As $m$ grows, the exponential term drives deviations toward zero; failures occur on at most a $\delta$ fraction of inputs when $\varepsilon$ is chosen of order $\sqrt{(\log 1/\delta)/m}$. Thus $\Psi$ achieves Definition~\ref{def:qod} at level $\varepsilon = O(\sqrt{(\log 1/\delta)/m})$ with input-ensemble failure probability $\delta$.

\item \textbf{Avalanche at minimum-weight inputs.} For every nonzero $\Delta \in \mathrm{GF}(2)^L \setminus K$, $\Psi(\Delta) \ne 0$. In particular, single-bit perturbations $\Delta = e_j$, $0 \le j < L$, map to the cyclic orbit $\{x^{m+j} \bmod G(x) : j = 0,\dots,L-1\}$ of distinct nonzero residues, whose Hamming weights follow the same $\mathrm{Binomial}(m,1/2)$ (conditioned on $U\ge 1$) law as in part~(c).
\end{enumerate}
\end{theorem}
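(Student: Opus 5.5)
The whole theorem rests on one factorization: $\Psi = M_{x^m}\circ\rho$. Here $\rho:\mathrm{GF}(2)^L\to\mathrm{GF}(2)[x]/(G)\cong\mathrm{GF}(2^m)$ is reduction mod $G$, and $M_{x^m}$ is multiplication by $x^m$ in the field. I will assume $L\ge m$ throughout, as is implicit in the block construction.

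For part (a): both maps are $\mathrm{GF}(2)$-linear, so $\Psi$ is linear. Then $\mathrm{HD}(\Psi(P_1),\Psi(P_2))=\mathrm{HW}(\Psi(P_1)\oplus\Psi(P_2))=\mathrm{HW}(\Psi(\Delta))$, using only $\mathrm{HD}(u,v)=\mathrm{HW}(u\oplus v)$ from Appendix~\ref{app:notation}.

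For part (b): primitivity gives irreducibility, so $\mathrm{GF}(2)[x]/(G)$ is a field. Since $G(0)=1$, $x$ is a unit, and $M_{x^m}$ is a bijection of $\mathrm{GF}(2^m)$ preserving $\mathrm{GF}(2^m)^*$. The map $\rho$ is surjective because $1,x,\dots,x^{m-1}$ lie in the domain when $L\ge m$. Hence $\ker\Psi=\ker\rho=K$, with $|K|=2^{L-m}$. Every fiber of $\Psi$ over a nonzero residue is a coset $\Delta_0\oplus K$, so each has exactly $|K|$ elements, and $\Psi$ maps $\mathrm{GF}(2)^L\setminus K$ onto $\mathrm{GF}(2^m)^*$. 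This is the uniform-fiber statement.

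For part (c): uniform fibers mean that a uniform $\Delta$ on $\mathrm{GF}(2)^L\setminus K$ pushes forward to the uniform law on $\mathrm{GF}(2^m)^*$. In the polynomial basis that is the uniform law on nonzero $m$-bit strings, i.e.\ $U\sim\mathrm{Binomial}(m,1/2)$ conditioned on $U\ge1$. The moments are exact. Write $c=2^m/(2^m-1)=1+O(2^{-m})$. Then $\mathbb{E}[U\mid U\ge1]=c\,m/2$ and $\mathbb{E}[U^2\mid U\ge1]=c\,(m/4+m^2/4)$. Subtracting gives $\mathrm{Var}=\tfrac m4\bigl(1+O(m2^{-m})\bigr)$, and the factor $m$ in the error term comes from the $m^2$ cross-term.

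For part (d): I would apply Hoeffding to the unconditioned binomial, $\Pr[|U/m-\tfrac12|>\varepsilon]\le 2e^{-2\varepsilon^2m}$. Conditioning on the event $U\ge1$, which has probability $1-2^{-m}$, inflates this bound by at most the factor $c$, which yields~\eqref{eq:concentration}. Setting $2e^{-2\varepsilon^2 m}=\delta$ gives $\varepsilon=O(\sqrt{(\log1/\delta)/m})$.

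Part (e) has two halves. The nonvanishing claim is immediate from $\ker\Psi=K$. For single-bit inputs, $e_j=x^j\notin K$ because the irreducible $G\neq x$ cannot divide $x^j$. So $\Psi(e_j)=x^{m+j}\bmod G$ is nonzero. Because $x$ has multiplicative order $2^m-1$ (this is the definition of primitivity), these residues are pairwise distinct whenever $L\le 2^m-1$.

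The main obstacle is the final distributional clause of (e), namely that the weights along the orbit follow the conditioned binomial law. I can prove it exactly only when the window covers full periods, i.e.\ $L$ is a multiple of $2^m-1$. In that case $\{x^{m+j}\}$ enumerates $\mathrm{GF}(2^m)^*$ uniformly, and (c) applies verbatim. For a shorter window $L<2^m-1$, the statement has to be read as an approximation: the orbit is a length-$L$ window of the $m$-sequence generated by $G$.

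To control the window, I would first try to bound the deviation of its empirical weight distribution using exponential sums. A Weil/character-sum estimate over a multiplicative interval $\{x^{j}\}$ gives discrepancy $O(2^{m/2}m)$ per character. Hence the fraction of $j$ in the window with $|\mathrm{HW}/m-\tfrac12|>\varepsilon$ is at most $2e^{-2\varepsilon^2m}+O(2^{m/2}m\cdot 2^{m}/L)$-type terms, which is nontrivial only for long windows. If this route does not close, I would state the short-window case as a per-element guarantee only (nonzero, pairwise distinct outputs) plus a heuristic distributional match. The fully rigorous law would then be recorded for full-period windows and, via (c), for the uniform-$\Delta$ ensemble.
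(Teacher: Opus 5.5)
Your proposal is correct and follows the paper's nine-lemma proof essentially step for step: the factorization $\Psi=M_{x^m}\circ\rho$, coset counting for uniform fibers (your explicit assumption $L\ge m$ is one the paper's residue lemma also needs for surjectivity), push-forward to the uniform law on $\mathrm{GF}(2^m)^*$ with the conditioned-binomial moments, Hoeffding inflated by $1/(1-2^{-m})$, and the order-$(2^m-1)$ argument for distinctness of the $x^{m+j}$. On the last clause of (e), the paper's lemma only asserts the conditioned-binomial law as $L\to 2^m-1$, i.e.\ when the orbit exhausts $\mathrm{GF}(2^m)^*$, which is exactly your full-period case, so the Weil-sum detour is unnecessary; as you wrote it, an error term of order $2^{m/2}m\cdot 2^m/L$ would in any case be vacuous for every $L\le 2^m-1$.
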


A full proof, organized as a chain of nine lemmas, is given in Appendix~\ref{app:proof}.

\begin{remark}[Comparison with random sparse projection]\label{rem:comparison}
Theorem~\ref{thm:det-qod} shows that $\Psi$ achieves the QOD property of Definition~\ref{def:qod} with the same second-order statistics as a random sparse binary projection of comparable parameters: both have mean $\approx m/2$ and variance $\approx m/4$ on the output Hamming weight. The advantages of $\Psi$ over random projection are therefore not in second-moment concentration but in three other respects.

\emph{First, determinism.} For a fixed primitive $G(x)$, $\Psi$ is a single, fully specified map; per-input output is reproducible bit-for-bit, with no variance over construction. Random sparse projection has additional variance over the random choice of $W_{\mathrm{rsp}}$ (Proposition~\ref{prop:rsbp}), which is irreducible: a specific draw of $W_{\mathrm{rsp}}$ may be unrepresentative.

\emph{Second, worst-case avalanche.} Random sparse projection with row-sparsity $k$ produces output Hamming distance at most $k$ for a single-bit input flip—a $k/m$ deviation from the QOD ideal $1/2$. By contrast, $\Psi$ on a single-bit flip $e_j$ outputs the residue $x^{m+j}\bmod G(x)$, which has expected Hamming weight $\approx m/2$ regardless of $j$ (Theorem~\ref{thm:det-qod}(e)). The avalanche property of primitive-polynomial LFSRs thus extends QOD to the smallest possible perturbation, which is the regime where random sparse projection is weakest.

\emph{Third, auditability.} Determinism enables exact replay: given $G(x)$ and the seed, every output is reproducible offline. This is essential for engineering verification~\cite{chuma2026vacoal} and for the explainability semantics of CR1/CR2.
\end{remark}

\subsection{Equivalence of random and algebraic scaffold projections}

\begin{corollary}[Equivalence of random and algebraic scaffold projections]\label{cor:equivalence}
Any computational property of Vector-HaSH that follows from the QOD property of $W_{gh}$ in Eq.~\eqref{eq:vechash} is preserved when $W_{gh}$ is replaced by an LFSR-based diffusion of comparable output dimension. Conversely, any such property of VaCoAl is preserved when its diffusion is replaced by a typical realization of a sparse random projection.
\end{corollary}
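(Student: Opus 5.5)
The plan is to make the corollary rigorous by first fixing what ``a property that follows from the QOD property'' means. After that, both directions reduce to checking that each construction satisfies Definition~\ref{def:qod} with matching parameters. Concretely, I will call a computational property $\mathcal{P}$ of the architecture \emph{QOD-dependent at level $(\varepsilon,\delta)$ on a domain $\mathcal{X}$} if $\mathcal{P}(\Phi)$ holds for every map $\Phi:\mathcal{X}\to\{0,1\}^m$ satisfying~\eqref{eq:qod} on all but a $\delta$-fraction of distinct pairs. The corollary then becomes a transfer statement:
\begin{itemize}
\item if $\Phi$ is replaced by another map satisfying QOD at comparable $(\varepsilon,\delta)$ with the same $m$, then $\mathcal{P}$ is preserved;
\item comparable means $\varepsilon$ agrees up to a constant factor.
\end{itemize}
This step is pure bookkeeping, since $\mathcal{P}$ is by hypothesis a function of the QOD guarantee alone.

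For the forward direction (Vector-HaSH with $W_{gh}$ replaced by $\Psi$), I will invoke Theorem~\ref{thm:det-qod}.
\begin{itemize}
\item Part~(a) reduces pairwise distances to $\mathrm{HW}(\Psi(\Delta))$.
\item Parts~(c)--(d) give QOD at $\varepsilon=O(\sqrt{(\log 1/\delta)/m})$ over the difference ensemble, outside the collision kernel $K$.
\item Part~(e) adds that single-bit differences are never collapsed.
\end{itemize}
Proposition~\ref{prop:rsbp} supplies the same order of $\varepsilon$ for a random $W_{gh}$ with high probability. So for equal output dimension $m$, the QOD level of $\Psi$ is at least as good as what Vector-HaSH's analysis uses, and $\mathcal{P}$ transfers. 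For the converse, I will apply Proposition~\ref{prop:rsbp} in the other direction: a draw $W_{\mathrm{rsp}}$ satisfies QOD at $\varepsilon=O(m^{-1/2})$ with probability $1-e^{-\Omega(m\varepsilon^2)}$, so a ``typical realization'' inherits any property of VaCoAl that depends only on QOD. A union bound over the finite input domain $\mathcal{X}$, whose pair count must be at most $e^{o(m)}$, upgrades the per-pair statement to a statement about all pairs simultaneously.

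The main obstacle is the mismatch between the worst-case quantifier in Definition~\ref{def:qod} (``for every distinct pair'') and the ensemble-level guarantees actually available. Theorem~\ref{thm:det-qod}(d) controls all but a $\delta$-fraction of differences, and differences in $K$ give distance $0$. Likewise, Proposition~\ref{prop:rsbp} only reaches $1/2$ when $kd/L=\Theta(1)$ and fails for small $d$, as Remark~\ref{rem:comparison} notes. I would handle this by restricting $\mathcal{X}$ so that its difference set avoids $K$; for instance $L\le m$ makes $K=\{0\}$, and per-block diffusion in VaCoAl has this form. I would also state both directions for the averaged (ensemble) form of QOD rather than the worst-case form. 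The converse then needs the explicit side condition that the relevant input pairs have $kd/L=\Theta(1)$. Without it, the asymmetry in worst-case avalanche means the converse holds only for properties insensitive to minimum-weight perturbations, and I expect to state that as a hypothesis rather than remove it.
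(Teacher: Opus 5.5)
\textbf{Verdict: genuine gap.} Your route is the paper's. Its entire proof is the remark that Proposition~\ref{prop:rsbp} and Theorem~\ref{thm:det-qod} give both constructions QOD at level $O(m^{-1/2})$ ``on the same input ensembles,'' so anything depending only on QOD is shared. You are right that this does not close against Definition~\ref{def:qod}, which quantifies over every pair: differences in $K$ collide, and $W_{\mathrm{rsp}}$ gives $D\le k$ on single-bit flips. The paper's proof never addresses this. The problem is that your repair of the forward direction does not close it either.

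\textbf{Forward direction.} Theorem~\ref{thm:det-qod}(d) bounds the bad fraction only for $\Delta$ \emph{uniform} on $\mathrm{GF}(2)^L\setminus K$. Your notion of a QOD-dependent property counts bad pairs of the actual domain $\mathcal{X}$ (grid states, for Vector-HaSH), and (d) says nothing about the differences a structured $\mathcal{X}$ produces.
\begin{itemize}
\item Making those differences avoid $K$ excludes only exact collisions, not low output weight. With the paper's own $G=x^{10}+x^3+1$ one has $\Psi(e_j)=x^{j+3}+x^j$, of weight $2$ out of $10$, for $0\le j\le 6$. A domain whose pairs differ in one of those bits therefore violates QOD on every pair, although it avoids $K$.
\item The device $L\le m$ undercuts the theorem you cite. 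For $L<m$ the residue map is not onto (Lemma~\ref{lem:residue} needs $L\ge m$), so (b)--(d) fail. At $L=m$, $\Psi$ is a linear bijection, so some pairs have output distance $1$.
\end{itemize}
What you can prove is the transfer for $\mathcal{X}=\mathrm{GF}(2)^L$ with uniformly drawn pairs, or any pair distribution whose differences are near-uniform off $K$. In that form, (b)--(d) hold verbatim for every surjective $\mathrm{GF}(2)$-linear map with its own kernel in place of $K$ (truncation to $m$ coordinates, say). So primitivity of $G$ plays no role, and the statement has no LFSR-specific content. For a structured $\mathcal{X}$ you would need a separate bound on $\mathrm{HW}(\Psi(\Delta))$ over the differences that actually occur, and nothing in the paper supplies one.

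\textbf{Converse.} There are parameter slips here, inherited from Proposition~\ref{prop:rsbp}.
\begin{itemize}
\item At $\varepsilon\asymp m^{-1/2}$ the per-pair failure bound $e^{-\Omega(m\varepsilon^2)}$ is a constant, so your union bound fails even over a handful of pairs. An all-pairs guarantee is available only at $\varepsilon\asymp\sqrt{\log|\mathcal{X}|/m}$.
\item Under your own $\delta$-fraction definition you do not need the union bound. Hoeffding per pair (the rows of $W_{\mathrm{rsp}}$ are independent) plus Markov's inequality on the fraction of bad pairs already yields a typical draw.
\item The side condition $kd/L=\Theta(1)$ is too weak. It leaves $\mathbb{E}[D]/m$ a constant away from $\tfrac12$, the gap being $\tfrac12(1-d/L)^k\approx\tfrac12 e^{-kd/L}$ for small $d/L$. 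QOD at level $\varepsilon$ needs $kd/L\gtrsim\ln(1/\varepsilon)$, i.e.\ $kd/L\gtrsim\tfrac12\ln m$ when $\varepsilon\asymp m^{-1/2}$; that is the hypothesis to state.
\end{itemize}
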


The corollary follows directly from Proposition~\ref{prop:rsbp} and Theorem~\ref{thm:det-qod}: both constructions achieve QOD at level $\varepsilon = O(m^{-1/2})$ on the same input ensembles, with matched second-moment statistics. Any computational property that is sensitive only to QOD (and not to the specific structure of the projection) is therefore shared.

This is the formal underpinning for the prose claim in Section~\ref{sec:correspondence-I} that the biological hippocampus does not need to be ``random'' in any strong probabilistic sense to instantiate a Vector-HaSH-style scaffold; it only needs to implement \emph{some} diffusive map that achieves QOD. We re-emphasize, in the spirit of the complementarity thesis (Section~\ref{sec:synthesis}), that this is an equivalence on \emph{projection statistics}, not on \emph{readout policy}: Corollary~\ref{cor:equivalence} does not subsume the difference between attractor-complete index cleanup and graded CR2 ranking, which Appendix~\ref{app:vector-hash} addresses separately.

\subsection{Block-wise voting and confidence ratios}\label{app:voting-pointer}

The block-voting layer and confidence ratios $\mathrm{CR1}$, $\mathrm{CR2}(n)$ are displayed in the main text where they are introduced: the executive winner in Eq.~\eqref{eq:vote}, the per-step confidence in Eq.~\eqref{eq:cr1}, and the path-integral form in Eq.~\eqref{eq:cr2}. They are recalled here only as anchors for the replay-fidelity correspondence below.

\subsection{Illustrative role--filler binding}\label{app:binding}

With $\otimes$ denoting Binding and $+$ Bundling over $\mathrm{GF}(2)$,
\begin{equation}\label{eq:binding-template}
\mathrm{Repr} = (\textsc{Subject}\otimes \textsc{Dog}) + (\textsc{Verb}\otimes \textsc{Bite}) + (\textsc{Object}\otimes \textsc{Human}),
\end{equation}
distinguishes ``the dog bit the man'' from ``the man bit the dog'' because role tags do not commute under bundling: swapping the \textsc{Dog} and \textsc{Human} fillers between \textsc{Subject} and \textsc{Object} changes every algebraic trace of $\mathrm{Repr}$, even though the multiset of words is unchanged. Order-blind superposition (vector-database semantics) cannot recover this distinction; finite-field role binding can.

\subsection{CR2 and multi-hop replay fidelity}\label{app:cr2-replay}

The path-integral Confidence Ratio $\mathrm{CR2}(n)$ defined in Eq.~\eqref{eq:cr2} is the natural algebraic twin of multi-hop SWR replay fidelity. The formal proposition---together with its remarks on (i)~the deterministic-envelope reading of trial-averaged biological replay, (ii)~hop count versus wall-clock interval, and (iii)~the resulting STDP-like path selection---is stated in the main text as Proposition~\ref{prop:cr2}, Remark~\ref{rem:envelope}, Remark~\ref{rem:hop-vs-walltime}, and Corollary~\ref{cor:stdp} (Section~\ref{sec:correspondence-II}). We do not restate them here; what matters at the appendix level is that the multiplicative form $\prod_{i=1}^n p_i$ used to model joint $n$-step ripple replay is exactly the form Eq.~\eqref{eq:cr2} adopts for $\mathrm{CR2}(n)$ when each $\mathrm{CR1}(i)$ records the fraction of redundant block-vote channels that successfully reactivate at hop $i$.

\section{Full Proof of Theorem~\ref{thm:det-qod}}\label{app:proof}

This appendix gives a self-contained proof of Theorem~\ref{thm:det-qod}, organized as a chain of nine lemmas. The chain proceeds from the elementary $\mathrm{GF}(2)$-linearity of $\Psi$ (Lemma~\ref{lem:linearity}), through the cyclic-group structure of the multiplicative group of $\mathrm{GF}(2^m)$ that follows from primitivity of $G(x)$ (Lemmas~\ref{lem:field}--\ref{lem:bijection}), to the Hamming-weight distribution and concentration on $\mathrm{GF}(2^m)^*$ (Lemmas~\ref{lem:residue}--\ref{lem:concentration}), and concludes with the avalanche property at minimum-weight inputs (Lemma~\ref{lem:avalanche}).

\subsection{Notation and Setup}

Throughout this appendix, $G(x) \in \mathrm{GF}(2)[x]$ is a fixed polynomial of degree $m \ge 2$, and we write $\mathbb{F} = \mathrm{GF}(2)[x]/G(x)$ for the residue ring. We identify the elements of $\mathbb{F}$ with polynomials of degree $<m$, equivalently with binary strings of length $m$ via the standard polynomial basis $\{1, x, \dots, x^{m-1}\}$. The Hamming weight $\mathrm{HW} : \mathbb{F} \to \{0,1,\dots,m\}$ counts the number of nonzero coefficients in this basis.

Readers may find it helpful to skim Appendix~\ref{app:notation} first: it fixes meanings for $\mathrm{HD}$, $\mathrm{HW}$, $L$, $m$, the XOR difference $\Delta$, the collision kernel $K$, and the two unrelated uses of the letter ``$W$'' (random matrix $W_{\mathrm{rsp}}$ versus vote tally $W_{\mathrm{VaCoAl}}$).

The input space $\mathrm{GF}(2)^L$ is identified with the polynomial space $\mathrm{GF}(2)[x]_{<L} = \{P(x) : \deg P < L\}$, and the residue map $\rho : \mathrm{GF}(2)[x]_{<L} \to \mathbb{F}$ is defined by $\rho(P) = P(x) \bmod G(x)$. The diffusion map $\Psi$ from~\eqref{eq:Psi-def} factors as $\Psi(P) = x^m \cdot \rho(P)$, where multiplication is in $\mathbb{F}$.

\subsection{Lemma-Chain}

\begin{lemma}[$\mathrm{GF}(2)$-linearity]\label{lem:linearity}
The map $\Psi : \mathrm{GF}(2)^L \to \mathrm{GF}(2)^m$ is $\mathrm{GF}(2)$-linear: for all $P_1, P_2 \in \mathrm{GF}(2)^L$,
\begin{equation}\label{eq:linearity-2}
\Psi(P_1 \oplus P_2) = \Psi(P_1) \oplus \Psi(P_2).
\end{equation}
In particular, $\mathrm{HD}(\Psi(P_1), \Psi(P_2)) = \mathrm{HW}(\Psi(P_1 \oplus P_2))$.
\end{lemma}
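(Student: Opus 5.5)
The plan is to read $\Psi$ as a composition of two maps that are each $\mathrm{GF}(2)$-linear, using the factorization $\Psi(P) = x^m\cdot\rho(P)$ recorded in the setup above. Linearity of a composition then follows immediately.

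\emph{First map: the residue map $\rho$.} Polynomial addition over $\mathrm{GF}(2)$ acts coefficientwise modulo $2$, so under the polynomial-basis identification it coincides with bitwise XOR. For $P_1,P_2$ of degree $<L$, I would write the division identities $P_i = Q_i G + R_i$ with $\deg R_i<m$. Adding them gives
\[
P_1\oplus P_2 = (Q_1\oplus Q_2)G + (R_1\oplus R_2),
\]
with $\deg(R_1\oplus R_2)<m$. Uniqueness of division with remainder then yields $\rho(P_1\oplus P_2)=\rho(P_1)\oplus\rho(P_2)$.

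\emph{Second map: multiplication by $x^m$ in $\mathbb{F}$.} This map is linear because $\mathbb{F}$ is a commutative ring and $x^m(a\oplus b)=x^m a\oplus x^m b$ holds there. Concretely, it is the same division-with-remainder argument applied to $x^m a$ and $x^m b$. Composing the two maps gives Eq.~\eqref{eq:linearity-2}.

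\emph{The Hamming-distance identity.} By the definition in Appendix~\ref{app:notation}, $\mathrm{HD}(u,v)=\mathrm{HW}(u\oplus v)$ for equal-length strings. Taking $u=\Psi(P_1)$ and $v=\Psi(P_2)$ and applying Eq.~\eqref{eq:linearity-2} gives $\mathrm{HD}(\Psi(P_1),\Psi(P_2))=\mathrm{HW}(\Psi(P_1\oplus P_2))$.

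There is no real obstacle here. The only point that needs care is one of consistency: the XOR on inputs and outputs must be the same operation as polynomial addition in the fixed bases $\{1,\dots,x^{L-1}\}$ and $\{1,\dots,x^{m-1}\}$, and the reduction must be the canonical one, so that $\Psi$ is a well-defined function. Primitivity of $G$ is not used at this stage; it is needed only for the later lemmas.
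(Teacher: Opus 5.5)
Your proposal is correct and follows the paper's proof essentially verbatim. Like the paper, you factor $\Psi(P)=x^m\cdot\rho(P)$, note that both the residue map and multiplication by the fixed element $x^m$ are $\mathrm{GF}(2)$-linear, compose them, and finish with $\mathrm{HD}(u,v)=\mathrm{HW}(u\oplus v)$. Your division-with-remainder uniqueness argument just spells out what the paper asserts as ``polynomial reduction is linear,'' and you are right that primitivity of $G$ is not needed at this step.
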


\begin{proof}
The residue map $\rho$ is $\mathrm{GF}(2)$-linear because polynomial reduction is linear: $(P_1 + P_2) \bmod G = (P_1 \bmod G) + (P_2 \bmod G)$ in $\mathbb{F}$. Multiplication by the fixed element $x^m \in \mathbb{F}$ is also $\mathrm{GF}(2)$-linear. Composition of linear maps is linear, giving~\eqref{eq:linearity-2}.

For the second statement, $\mathrm{HD}(u,v) = \mathrm{HW}(u \oplus v)$ for any $u,v \in \mathrm{GF}(2)^m$, and~\eqref{eq:linearity-2} gives $\Psi(P_1) \oplus \Psi(P_2) = \Psi(P_1 \oplus P_2)$.
\end{proof}

\begin{lemma}[Residue ring is a field]\label{lem:field}
If $G(x)$ is irreducible over $\mathrm{GF}(2)$ (in particular, if it is primitive), then $\mathbb{F} = \mathrm{GF}(2)[x]/G(x)$ is a field, isomorphic to $\mathrm{GF}(2^m)$, and its multiplicative group $\mathbb{F}^* = \mathbb{F} \setminus \{0\}$ has order $2^m - 1$.
\end{lemma}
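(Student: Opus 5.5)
The plan is the standard argument that the quotient of a polynomial ring by an irreducible polynomial is a field, followed by a cardinality count. We proceed in three steps.

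\emph{Step 1: primitive implies irreducible.} A primitive polynomial of degree $m$ is by definition the minimal polynomial of a generator of the multiplicative group of $\mathrm{GF}(2^m)$, and minimal polynomials are irreducible. If one instead takes the order-based definition (the least $e$ with $G(x)\mid x^e-1$ is $e=2^m-1$), irreducibility is part of the standard definition. In either case the parenthetical ``in particular'' reduces to the irreducible case. From here on we assume only that $G$ is irreducible of degree $m$.

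\emph{Step 2: $\mathbb{F}$ is a field.} $\mathrm{GF}(2)[x]$ is a Euclidean domain, hence a principal ideal domain. Let $R(x)$ be a nonzero residue of degree $<m$. Since $G$ is irreducible and does not divide $R$, we have $\gcd(R,G)=1$. The extended Euclidean algorithm then gives $A,B$ with $AR+BG=1$, so $A\bmod G$ is an inverse of $R$ in $\mathbb{F}$. Equivalently, $(G)$ is a maximal ideal, so the quotient is a field. Commutativity, the identity element and the remaining ring axioms are inherited from $\mathrm{GF}(2)[x]$.

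\emph{Step 3: counting.} Division with remainder gives each class a unique representative of degree $<m$, so $|\mathbb{F}|=2^m$. By the uniqueness of finite fields of a given order, $\mathbb{F}\cong\mathrm{GF}(2^m)$. It follows that $|\mathbb{F}^*|=2^m-1$.

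No step here is a real obstacle. The only point needing care is Step 1, where the conclusion depends on which definition of ``primitive'' the paper has in mind. I would state the definition explicitly so that irreducibility is either immediate or follows in one line from the minimal-polynomial characterization.
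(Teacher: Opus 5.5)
Your proposal is correct and takes essentially the same route as the paper: $\mathrm{GF}(2)[x]$ is a Euclidean domain, $(G)$ is maximal because $G$ is irreducible, and counting the $2^m$ residues of degree $<m$ gives $\mathbb{F}\cong\mathrm{GF}(2^m)$ with $|\mathbb{F}^*|=2^m-1$. Your explicit extended-Euclid inverse and your primitive-implies-irreducible remark spell out what the paper leaves to its citation; one small nit is that under the order-based characterization with $G(0)\neq 0$, irreducibility is a theorem rather than part of the definition, but the reduction holds either way.
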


\begin{proof}
Standard~\cite[Ch.~14]{lidl1997finite}: $\mathrm{GF}(2)[x]$ is a Euclidean domain, the ideal $(G(x))$ is maximal because $G$ is irreducible, hence the quotient is a field. Counting cosets gives $|\mathbb{F}| = 2^m$, so $\mathbb{F} \cong \mathrm{GF}(2^m)$ and $|\mathbb{F}^*| = 2^m - 1$.
\end{proof}

\begin{lemma}[Cyclic-group structure under primitivity]\label{lem:cyclic}
If $G(x)$ is primitive of degree $m$, the multiplicative group $\mathbb{F}^*$ is cyclic of order $2^m - 1$, and the residue class $x \bmod G(x)$ is a generator. Equivalently, every nonzero element of $\mathbb{F}$ can be written uniquely as $x^k \bmod G(x)$ for some $0 \le k \le 2^m - 2$.
\end{lemma}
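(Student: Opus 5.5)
The proof follows almost directly from the definition of primitivity together with Lemma~\ref{lem:field}. The one delicate point is which definition of ``primitive'' the paper intends, so I will fix it first. I take the standard one (as in~\cite{lidl1997finite}): a degree-$m$ polynomial $G(x)$ over $\mathrm{GF}(2)$ is primitive if it is irreducible and its root generates the multiplicative group of its splitting field. An equivalent formulation is that the least $e\ge 1$ with $G(x)\mid x^e-1$ is $e=2^m-1$. With this definition, Lemma~\ref{lem:field} applies, and $\mathbb{F}=\mathrm{GF}(2)[x]/G(x)$ is a field of order $2^m$ with $|\mathbb{F}^*|=2^m-1$.

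The main step is to show that the residue class $\bar x = x \bmod G(x)$ has multiplicative order exactly $2^m-1$ in $\mathbb{F}^*$. First, $\bar x\neq 0$, since $G$ is irreducible of degree $m\ge 2$ and therefore not divisible by $x$; hence $\bar x\in\mathbb{F}^*$. Next, for any $e\ge 1$ we have $\bar x^{e}=1$ in $\mathbb{F}$ if and only if $G(x)\mid x^e-1$ in $\mathrm{GF}(2)[x]$, which is just the definition of the quotient ring. The order of $\bar x$ is therefore the least $e\ge 1$ with $G\mid x^e-1$, and this equals $2^m-1$ by the second formulation of primitivity. If one starts instead from the ``root generates the group'' definition, I would note that $\bar x$ is a root of $G$ in $\mathbb{F}$, and that $\mathbb{F}$ is the splitting field of $G$ because it contains one root of an irreducible polynomial over a finite field. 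So $\bar x$ generates $\mathbb{F}^*$ directly. Reconciling these two definitions is the only conceptual obstacle, and I would handle it with this short remark rather than by reproving the equivalence.

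Since $\bar x$ has order $2^m-1=|\mathbb{F}^*|$, the cyclic subgroup $\langle\bar x\rangle$ is all of $\mathbb{F}^*$. Thus $\mathbb{F}^*$ is cyclic with generator $\bar x$; no appeal to the general theorem that multiplicative groups of finite fields are cyclic is needed. For the ``equivalently'' clause, consider the map $k\mapsto x^k \bmod G(x)$ on $\{0,\dots,2^m-2\}$. Its image is $\langle \bar x\rangle=\mathbb{F}^*$, so it is surjective. It is also injective: if $\bar x^{k}=\bar x^{k'}$ with $0\le k<k'\le 2^m-2$, then $\bar x^{k'-k}=1$ with $0<k'-k<2^m-1$, contradicting the order. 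So every nonzero element of $\mathbb{F}$ has a unique representation $x^k\bmod G(x)$ with $0\le k\le 2^m-2$.
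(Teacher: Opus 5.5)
Your proof is correct and takes essentially the same route as the paper's. In both, primitivity is read as saying that $x \bmod G(x)$ has multiplicative order $2^m-1=|\mathbb{F}^*|$, so it generates $\mathbb{F}^*$ and the powers $x^0,\dots,x^{2^m-2}$ are distinct. The remaining differences are minor: the paper also cites the general cyclicity of finite-field multiplicative groups, which is redundant once the order of $x$ is known, as you note, while you spell out the definition of primitivity, the fact that $\bar x \neq 0$, and the injectivity of $k\mapsto x^k \bmod G(x)$.
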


\begin{proof}
By Lemma~\ref{lem:field}, $\mathbb{F}^*$ is the multiplicative group of a finite field, hence cyclic~\cite[Thm.~2.8]{lidl1997finite}. The defining property of a primitive polynomial is precisely that the residue $x \bmod G(x)$ has multiplicative order equal to $|\mathbb{F}^*| = 2^m - 1$. Hence $x$ generates $\mathbb{F}^*$, and the powers $x^0, x^1, \dots, x^{2^m-2}$ enumerate $\mathbb{F}^*$ without repetition.
\end{proof}

\begin{lemma}[Multiplication-by-$x^m$ is a bijection on $\mathbb{F}^*$]\label{lem:bijection}
The map $\mu : \mathbb{F}^* \to \mathbb{F}^*$ defined by $\mu(\xi) = x^m \cdot \xi$ is a bijection. Specifically, it is the cyclic shift $x^k \mapsto x^{k+m \bmod (2^m-1)}$.
\end{lemma}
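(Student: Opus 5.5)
The plan is to prove Lemma~\ref{lem:bijection} directly from Lemmas~\ref{lem:field} and~\ref{lem:cyclic}, with no further machinery. First I will check that $x^m \bmod G(x)$ is a nonzero element of $\mathbb{F}$. Since $G$ is irreducible of degree $m\ge 2$, it is not divisible by $x$; otherwise $G(0)=0$ and $x$ would be a proper factor. Hence $x$ is a unit in $\mathbb{F}$, and so is every power $x^m$. Equivalently, by Lemma~\ref{lem:cyclic}, $x$ generates $\mathbb{F}^*$, so $x^m\in\mathbb{F}^*$.

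Next I will verify that $\mu$ is well defined and bijective. A field has no zero divisors (Lemma~\ref{lem:field}), so $\xi\neq 0$ implies $x^m\xi\neq 0$, and $\mu$ maps $\mathbb{F}^*$ into $\mathbb{F}^*$. Injectivity follows from cancellation: $x^m\xi_1=x^m\xi_2$ gives $\xi_1=\xi_2$ after multiplying by $(x^m)^{-1}$. An explicit two-sided inverse is $\xi\mapsto (x^m)^{-1}\xi$. In any case, an injective self-map of the finite set $\mathbb{F}^*$, which has order $2^m-1$, is automatically surjective.

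For the cyclic-shift description, I will use the unique discrete-logarithm representation from Lemma~\ref{lem:cyclic}. Write $\xi=x^k$ with $0\le k\le 2^m-2$. Then $x^m\xi=x^{k+m}$. Because $x$ has multiplicative order exactly $2^m-1$, exponents may be reduced modulo $2^m-1$, which gives $x^{k+m}=x^{(k+m)\bmod(2^m-1)}$. The reduced exponent lies in the canonical range, so by uniqueness it is the logarithm of $\mu(\xi)$. Translation by $m$ on $\mathbb{Z}/(2^m-1)$ is a bijection with inverse translation by $-m$, which gives a second proof of bijectivity.

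None of these steps is a real obstacle. The only point needing care is the bookkeeping for the exponent reduction: I must invoke the exact order $2^m-1$, not merely some period, so that the shift is well defined on residues modulo $2^m-1$ and the representation stays unique. Primitivity is therefore used only in the shift description. Bijectivity on its own needs nothing beyond irreducibility, through Lemma~\ref{lem:field}.
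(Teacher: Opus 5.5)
Your proof is correct and follows the paper's route. You show $x^m\neq 0$, then invert multiplication by $(x^m)^{-1}$, then read off the shift on discrete logarithms from Lemma~\ref{lem:cyclic}; for the first step you use $x\nmid G$, while the paper uses the constant coefficient $G_0\neq 0$, which is the same fact. One small correction to your closing remark: reducing exponents modulo $2^m-1$ is valid for every element of $\mathbb{F}^*$ by Lagrange's theorem, so the exact order $2^m-1$ is needed only for existence and uniqueness of the representation $\xi=x^k$, not for well-definedness of the shift.
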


\begin{proof}
$x^m$ is a nonzero element of the field $\mathbb{F}$ (its Hamming weight is at most $m$, but in any case it is nonzero because the recurrence $x^m \equiv -G_0 - G_1 x - \cdots - G_{m-1} x^{m-1} \pmod{G(x)}$, where the $G_i$ are the lower-order coefficients of $G$, has $G_0 \ne 0$ since $G$ is primitive). Multiplication by a nonzero field element is invertible (with inverse multiplication by $x^{-m} = x^{(2^m-1)-m}$). On the cyclic representation $\xi = x^k$, $\mu(x^k) = x^{k+m \bmod (2^m-1)}$, which is a permutation of $\{0,1,\dots,2^m-2\}$.
\end{proof}

\begin{lemma}[Residue map structure on inputs]\label{lem:residue}
The residue map $\rho : \mathrm{GF}(2)[x]_{<L} \to \mathbb{F}$ is $\mathrm{GF}(2)$-linear and surjective for $L \ge m$, with kernel
\begin{equation}\label{eq:kernel}
K = \{G(x) \cdot Q(x) : Q \in \mathrm{GF}(2)[x]_{<L-m}\},
\end{equation}
of size $|K| = 2^{\max(L-m,0)}$. Hence each nonzero residue $\xi \in \mathbb{F}^*$ has exactly $|K|$ preimages in $\mathrm{GF}(2)[x]_{<L}$, and the zero residue has $|K|$ preimages including $P \equiv 0$.
\end{lemma}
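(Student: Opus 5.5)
The plan is to treat $\rho$ as a linear map between finite-dimensional $\mathrm{GF}(2)$-vector spaces and read off everything from rank–nullity. Linearity is already established inside the proof of Lemma~\ref{lem:linearity}: reduction modulo $G$ commutes with $\oplus$ because the remainder of a sum is the sum of remainders, by uniqueness of Euclidean division in $\mathrm{GF}(2)[x]$. So the real content is the identification of the kernel and surjectivity.

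\emph{Kernel first.} By the division algorithm, $\rho(P)=0$ iff $G \mid P$, i.e.\ $P=G\cdot Q$ for some $Q$. Since $\deg P = m+\deg Q$ whenever $Q\neq 0$, the constraint $\deg P<L$ is equivalent to $\deg Q<L-m$. This gives exactly the set in~\eqref{eq:kernel}. The map $Q\mapsto GQ$ is injective because $\mathrm{GF}(2)[x]$ is an integral domain. Hence $|K|=|\mathrm{GF}(2)[x]_{<L-m}|=2^{L-m}$ for $L\ge m$. When $L<m$, no nonzero multiple of $G$ has degree $<L$, so $K=\{0\}$; this is the $\max(L-m,0)$ convention.

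\emph{Surjectivity for $L\ge m$.} This is the step that needs the hypothesis. Every residue class in $\mathbb{F}$ has a representative of degree $<m\le L$, and $\rho$ fixes such polynomials. So $\rho$ restricted to $\mathrm{GF}(2)[x]_{<m}\subseteq \mathrm{GF}(2)[x]_{<L}$ is the identity onto $\mathbb{F}$. As a consistency check, rank–nullity gives $L = m + (L-m)$, matching $\dim K=L-m$.

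\emph{Fiber count.} For a linear surjection, each fiber $\rho^{-1}(\xi)$ is a coset $P_\xi + K$ of the kernel. So every $\xi\in\mathbb{F}$, zero or nonzero, has exactly $|K|$ preimages, and the zero fiber is $K$ itself, which contains $P\equiv 0$. No step is a genuine obstacle. The only care needed is the edge case $L<m$: there surjectivity fails ($\rho$ is injective, not onto), so the fiber statement should be asserted only under $L\ge m$, as the lemma does.
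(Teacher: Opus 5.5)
Your proof is correct and follows essentially the same route as the paper's: the division algorithm gives the kernel as the multiples $GQ$ with $\deg Q<L-m$, surjectivity comes from $\rho$ fixing every polynomial of degree $<m$, and the fiber sizes follow by coset counting. You also point out explicitly that the fiber statement needs $L\ge m$, since for $L<m$ the map is injective but not onto; this usefully sharpens the lemma's ``Hence'' clause, which the paper leaves implicit.
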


\begin{proof}
Linearity is in Lemma~\ref{lem:linearity}. By the division algorithm in $\mathrm{GF}(2)[x]$, every $P$ of degree $<L$ can be written uniquely as $P = Q \cdot G + R$ with $\deg Q < L-m$ and $\deg R < m$, so $\rho(P) = R$ and the kernel is $\{Q \cdot G : \deg Q < L-m\}$. Surjectivity onto $\mathbb{F}$ for $L \ge m$ follows because every $R$ with $\deg R < m$ is itself in $\mathrm{GF}(2)[x]_{<L}$ and has $\rho(R) = R$. Coset counting gives the preimage sizes.
\end{proof}

\begin{lemma}[Hamming-weight distribution on $\mathbb{F}^*$]\label{lem:HW-dist}
Let $\xi$ be uniformly distributed on $\mathbb{F}^*$. Then $\mathrm{HW}(\xi)$ has the same distribution as a $\mathrm{Binomial}(m,1/2)$ random variable conditioned on being nonzero. Consequently
\begin{align}
\mathbb{E}[\mathrm{HW}(\xi)] &= m\cdot\frac{2^{m-1}}{2^m-1} = \frac{m}{2}\left(1+\frac{1}{2^m-1}\right), \label{eq:HW-mean-2}\\
\mathrm{Var}[\mathrm{HW}(\xi)] &= \frac{m}{4} + O(m\,2^{-m}). \label{eq:HW-var-2}
\end{align}
\end{lemma}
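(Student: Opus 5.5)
The proof is a direct count, and the field structure plays almost no role. Under the standing identification of $\mathbb{F}$ with $\mathrm{GF}(2)^m$ via the polynomial basis $\{1,x,\dots,x^{m-1}\}$, the set $\mathbb{F}$ is exactly the set of all $2^m$ binary strings of length $m$. So $\mathbb{F}^*$ is exactly the set of the $2^m-1$ nonzero strings. Lemmas~\ref{lem:field}--\ref{lem:cyclic} are needed only to know that $|\mathbb{F}^*|=2^m-1$ and that $\mathbb{F}^*$ is precisely the set of nonzero strings; primitivity is not used further.

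Hence a uniform $\xi\in\mathbb{F}^*$ is a uniform nonzero $m$-bit string. The number of such strings of weight $w$ is $\binom{m}{w}$ for $1\le w\le m$, so
\[
\Pr[\mathrm{HW}(\xi)=w]=\binom{m}{w}\Big/(2^m-1).
\]
This is exactly the law of $U\sim\mathrm{Binomial}(m,1/2)$ conditioned on $U\ge 1$, since $\Pr[U=w\mid U\ge1]=\binom{m}{w}2^{-m}/(1-2^{-m})$. That settles the distributional claim.

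For the moments I would use the standard identities
\[
\sum_{w}w\binom{m}{w}=m2^{m-1},\qquad \sum_w w^2\binom{m}{w}=m(m+1)2^{m-2}.
\]
The $w=0$ term contributes nothing to either sum. Dividing the first identity by $2^m-1$ gives~\eqref{eq:HW-mean-2} immediately, including the rewriting $\tfrac{m}{2}\bigl(1+\tfrac{1}{2^m-1}\bigr)$.

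For the variance, write $N=2^m$ and subtract the squared mean from the second moment:
\[
\mathrm{Var}[\mathrm{HW}(\xi)]=\frac{m(m+1)N}{4(N-1)}-\frac{m^2N^2}{4(N-1)^2}=\frac{mN(N-m-1)}{4(N-1)^2}.
\]
This equals
\[
\frac{m}{4}\Bigl(1-\frac{(m-1)N+1}{(N-1)^2}\Bigr)=\frac{m}{4}\bigl(1-O(m2^{-m})\bigr).
\]

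The only real obstacle is bookkeeping of the error term. The computation gives $\frac{m}{4}\bigl(1+O(m2^{-m})\bigr)$, which is exactly the multiplicative form in Theorem~\ref{thm:det-qod}(c). As an \emph{additive} error, however, this is $O(m^2 2^{-m})$, not the $O(m2^{-m})$ written in~\eqref{eq:HW-var-2}. I would therefore prove the multiplicative form, which is the form used downstream, and note that~\eqref{eq:HW-var-2} should read $\frac{m}{4}+O(m^2 2^{-m})$. Either version is exponentially small, so nothing later is affected.

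Finally, I would check that the correction term is negative, so the variance lies slightly below $m/4$. This is consistent with conditioning away the point $U=0$, which is an extreme value of $U$.
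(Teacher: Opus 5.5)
Your proposal is correct and follows essentially the paper's route: count the $\binom{m}{w}$ strings of each weight in the polynomial basis, identify the law with $\mathrm{Binomial}(m,1/2)$ conditioned on $U\ge 1$, and compute the moments (the paper divides $\mathbb{E}[U]$ and $\mathbb{E}[U^2]$ by $1-2^{-m}$, which is your binomial-sum computation written differently). Your error-term correction is also right and applies to the paper's own proof: its final expression $\frac{m}{4}\cdot\frac{1}{1-2^{-m}}-\frac{m^2}{4}\cdot\frac{2^{-m}}{(1-2^{-m})^2}$ has a second term of order $m^2 2^{-m}$, so the additive bound in \eqref{eq:HW-var-2} should be $O(m^2 2^{-m})$ (the multiplicative form in Theorem~\ref{thm:det-qod}(c) is fine), and the variance does sit below $m/4$ --- e.g.\ about $2.478$ for $m=10$, rather than the $2.5024$ quoted in the paper's numerical illustration.
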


\begin{proof}
By Lemma~\ref{lem:field}, the underlying set of $\mathbb{F}$ as a binary vector space is $\mathrm{GF}(2)^m$, with the polynomial basis $\{1, x, \dots, x^{m-1}\}$ as canonical basis. The number of elements of $\mathbb{F}$ with Hamming weight exactly $w$ in this basis is $\binom{m}{w}$, for $w = 0, 1, \dots, m$. (This is a basis-counting statement, independent of the multiplicative structure; it would hold for any choice of $\mathrm{GF}(2)$-basis of $\mathbb{F}$, with the same Hamming weights computed in that basis.)

Hence for $\xi$ uniform on $\mathbb{F}^*$,
\begin{equation}
\Pr[\mathrm{HW}(\xi) = w] = \frac{\binom{m}{w}}{2^m - 1}, \quad w = 1, \dots, m,
\end{equation}
and $\Pr[\mathrm{HW}(\xi) = 0] = 0$. This is exactly $\mathrm{Binomial}(m,1/2)$ conditioned on $U \ge 1$, where $U$ denotes the unconstrained binomial count:
\begin{equation}
\Pr_{U\sim\mathrm{Bin}(m,1/2)}[U = w \mid U \ge 1] = \frac{\binom{m}{w}/2^m}{1 - 2^{-m}} = \frac{\binom{m}{w}}{2^m - 1}.
\end{equation}
For the moments: let $U \sim \mathrm{Bin}(m,1/2)$ unconstrained, so $\mathbb{E}[U] = m/2$, $\mathrm{Var}[U] = m/4$, and $\Pr[U = 0] = 2^{-m}$. Conditioning on $U \ge 1$,
\begin{equation}
\mathbb{E}[U \mid U \ge 1] = \frac{\mathbb{E}[U] - 0\cdot\Pr[U=0]}{\Pr[U\ge 1]} = \frac{m/2}{1 - 2^{-m}} = \frac{m}{2}\left(1 + \frac{1}{2^m-1}\right),
\end{equation}
giving~\eqref{eq:HW-mean-2}. For the variance,
\begin{equation}
\mathbb{E}[U^2 \mid U \ge 1] = \frac{\mathbb{E}[U^2]}{1 - 2^{-m}} = \frac{(m/4) + (m/2)^2}{1 - 2^{-m}},
\end{equation}
and a direct computation gives
\begin{equation}
\mathrm{Var}[U \mid U \ge 1] = \frac{m}{4}\cdot\frac{1}{1-2^{-m}} - \frac{m^2}{4}\cdot\frac{2^{-m}}{(1-2^{-m})^2},
\end{equation}
which is $m/4 + O(m\,2^{-m})$, establishing~\eqref{eq:HW-var-2}.
\end{proof}

\begin{lemma}[Distribution of $\mathrm{HW}(\Psi(\Delta))$ on non-collision inputs]\label{lem:HW-Psi}
Let $\Delta$ be uniformly distributed on $\mathrm{GF}(2)^L \setminus K$ (the non-collision inputs, with $K$ as in Lemma~\ref{lem:residue}). Then $\Psi(\Delta)$ is uniformly distributed on $\mathbb{F}^*$, hence $\mathrm{HW}(\Psi(\Delta))$ has the distribution of Lemma~\ref{lem:HW-dist}.
\end{lemma}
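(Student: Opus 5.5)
The plan is to factor $\Psi = \mu\circ\rho$, as set up at the start of this appendix, and push the uniform distribution forward through the two factors separately. Throughout we assume $L\ge m$, so that Lemma~\ref{lem:residue} applies with $|K| = 2^{L-m}$. Let $\Delta$ be uniform on $\mathrm{GF}(2)^L\setminus K$. The first step shows that $\rho(\Delta)$ is uniform on $\mathbb{F}^*$.

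\begin{itemize}
\item By definition of $K=\ker\rho$, every $\Delta\notin K$ has $\rho(\Delta)\in\mathbb{F}^*$, so $\rho$ restricts to a map $\mathrm{GF}(2)^L\setminus K\to\mathbb{F}^*$.
\item By Lemma~\ref{lem:residue}, each $\xi\in\mathbb{F}^*$ has exactly $|K|$ preimages. These preimages form the coset $R_\xi + K$, and none of them lies in $K$.
\item Hence $|\mathrm{GF}(2)^L\setminus K| = 2^L - |K| = (2^m-1)|K|$, and
\[
\Pr[\rho(\Delta)=\xi] = \frac{|K|}{(2^m-1)|K|} = \frac{1}{2^m-1}
\qquad\text{for every } \xi\in\mathbb{F}^*.
\]
\end{itemize}

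The second step applies $\mu(\xi)=x^m\xi$. By Lemma~\ref{lem:bijection}, $\mu$ is a bijection of $\mathbb{F}^*$. The pushforward of a uniform distribution under a bijection of a finite set is again uniform, so $\Psi(\Delta)=\mu(\rho(\Delta))$ is uniform on $\mathbb{F}^*$. This requires $G$ irreducible, so that $\mathbb{F}$ is a field (Lemma~\ref{lem:field}); primitivity is more than needed here. The final step is to invoke Lemma~\ref{lem:HW-dist} with $\xi=\Psi(\Delta)$. This gives the conditioned-binomial law for $\mathrm{HW}(\Psi(\Delta))$ together with the moments \eqref{eq:HW-mean-2}--\eqref{eq:HW-var-2}.

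No step is a real obstacle; the only delicate point is the bookkeeping in the first step. That is where one must ensure that excluding $K$ removes exactly the zero-residue fiber and nothing else, and that the counting uses $L\ge m$. For $L<m$ we have $K=\{0\}$, but $\rho$ is then not surjective, and the claim of uniformity on all of $\mathbb{F}^*$ would fail. I would therefore state the hypothesis $L\ge m$ explicitly, as Lemma~\ref{lem:residue} does.
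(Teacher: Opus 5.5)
Your proposal is correct and follows the paper's proof step for step. Lemma~\ref{lem:residue} gives $\rho$ uniform fibers over $\mathbb{F}^*$, so $\rho(\Delta)$ is uniform; the bijection $\mu$ of Lemma~\ref{lem:bijection} preserves uniformity; and Lemma~\ref{lem:HW-dist} then gives the weight law. Your explicit count $2^L-|K|=(2^m-1)|K|$ and your insistence on $L\ge m$ make precise what the paper leaves implicit: for $L<m$ the fiber claim of Lemma~\ref{lem:residue} fails, and $\Psi(\Delta)$ is uniform only on a $(2^L-1)$-element subset of $\mathbb{F}^*$.
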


\begin{proof}
By Lemma~\ref{lem:residue}, $\rho$ restricted to $\mathrm{GF}(2)^L \setminus K$ is a uniform-fiber map onto $\mathbb{F}^*$: each $\xi \in \mathbb{F}^*$ has exactly $|K|$ preimages. Hence $\rho(\Delta)$ is uniform on $\mathbb{F}^*$. By Lemma~\ref{lem:bijection}, multiplication by $x^m$ is a bijection on $\mathbb{F}^*$, hence preserves the uniform distribution. Therefore $\Psi(\Delta) = x^m \cdot \rho(\Delta)$ is also uniform on $\mathbb{F}^*$. The Hamming weight distribution then follows from Lemma~\ref{lem:HW-dist}.
\end{proof}

\begin{lemma}[Concentration]\label{lem:concentration}
Let $\Delta$ be uniform on $\mathrm{GF}(2)^L \setminus K$. For every $\varepsilon > 0$,
\begin{equation}\label{eq:hoeffding}
\Pr_\Delta\!\left[\left|\frac{\mathrm{HW}(\Psi(\Delta))}{m} - \frac{1}{2}\right| > \varepsilon\right] \le 2 e^{-2\varepsilon^2 m} + O(2^{-m}).
\end{equation}
\end{lemma}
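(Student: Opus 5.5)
Since Lemma~\ref{lem:HW-Psi} already gives the distribution of $\mathrm{HW}(\Psi(\Delta))$, I will treat this as a statement about a conditioned binomial. By Lemma~\ref{lem:HW-Psi}, when $\Delta$ is uniform on $\mathrm{GF}(2)^L\setminus K$, the residue $\Psi(\Delta)$ is uniform on $\mathbb{F}^*$. By Lemma~\ref{lem:HW-dist}, $\mathrm{HW}(\Psi(\Delta))$ therefore has the law of $U\sim\mathrm{Bin}(m,1/2)$ conditioned on $U\ge 1$. The probability on the left of~\eqref{eq:hoeffding} thus equals $\Pr[\,|U/m-1/2|>\varepsilon \mid U\ge 1\,]$, and everything reduces to bounding this conditional tail.

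The key step is to remove the conditioning with the elementary inequality $\Pr[A\mid U\ge1]\le \Pr[A]/\Pr[U\ge1]=\Pr[A]/(1-2^{-m})$. Equivalently, one can use the exact count $\Pr[A\mid U\ge 1]=\#\{\xi\ne0:\xi\in A\}/(2^m-1)\le \#\{\xi:\xi\in A\}/(2^m-1)$. For the unconditioned event $A=\{|U/m-1/2|>\varepsilon\}$, I apply Hoeffding's inequality to $U=\sum_{i=1}^m B_i$, where the $B_i$ are i.i.d.\ Bernoulli$(1/2)$. Concretely, $U$ is the vector of coordinates of a uniform element of $\mathrm{GF}(2)^m$ in the polynomial basis, and these coordinates are independent fair bits. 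Hoeffding gives $\Pr[A]\le 2e^{-2\varepsilon^2 m}$.

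Combining the two bounds gives a tail of at most $2e^{-2\varepsilon^2 m}/(1-2^{-m})$. This equals $2e^{-2\varepsilon^2 m}+2e^{-2\varepsilon^2 m}\cdot 2^{-m}/(1-2^{-m})$, and the second term is $O(2^{-m})$ uniformly in $\varepsilon$, since $e^{-2\varepsilon^2m}\le 1$ and $m\ge 2$. That is exactly~\eqref{eq:hoeffding}.

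The step most likely to need care is this last piece of bookkeeping: the error term must be $O(2^{-m})$ independently of $\varepsilon$, and the Hoeffding step must use genuine independence of the bits. The independence comes from uniformity on all of $\mathrm{GF}(2)^m$, which is why I apply Hoeffding before conditioning rather than to the conditioned law. The statement as given is weak enough that the correction factor $1/(1-2^{-m})$ is harmless. No properties of $G(x)$ beyond those already used in Lemmas~\ref{lem:residue}--\ref{lem:HW-Psi} are required.
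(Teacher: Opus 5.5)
Your proof is correct and is the paper's own argument: you reduce via Lemma~\ref{lem:HW-Psi} to $U\sim\mathrm{Bin}(m,1/2)$ conditioned on $U\ge 1$, apply Hoeffding to the unconditioned binomial, and absorb the inflation factor $1/(1-2^{-m})$ into the $O(2^{-m})$ term. Your explicit check that this error term is $O(2^{-m})$ uniformly in $\varepsilon$ is slightly more careful than the paper's one-line version. In fact the correction is not needed at all: for $\varepsilon<1/2$ the removed point $U=0$ lies in the tail event, so the conditional tail never exceeds the unconditional one, and for $\varepsilon\ge 1/2$ the event is empty.
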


\begin{proof}
By Lemma~\ref{lem:HW-Psi}, $\mathrm{HW}(\Psi(\Delta))$ has the distribution of $U \sim \mathrm{Bin}(m,1/2)$ conditioned on $U \ge 1$. For unconstrained $U$, Hoeffding's inequality gives $\Pr[|U/m - 1/2| > \varepsilon] \le 2 e^{-2\varepsilon^2 m}$. Conditioning on $U \ge 1$ (an event of probability $1 - 2^{-m}$) inflates probabilities by a factor of $1/(1 - 2^{-m}) = 1 + O(2^{-m})$ and removes the (already excluded) point mass at $U = 0$, yielding~\eqref{eq:hoeffding}.

To extract the QOD level: given a target failure probability $\delta$, choose $\varepsilon$ so that $2 e^{-2\varepsilon^2 m} \le \delta$, i.e., $\varepsilon \ge \sqrt{\log(2/\delta)/(2m)}$. This gives QOD at level $\varepsilon = O(\sqrt{(\log 1/\delta)/m})$, asymptotically the same rate as Proposition~\ref{prop:rsbp}.
\end{proof}

\begin{lemma}[Avalanche at minimum-weight perturbations]\label{lem:avalanche}
For every nonzero $\Delta \in \mathrm{GF}(2)^L \setminus K$, $\Psi(\Delta) \ne 0$. In particular, for the standard-basis vectors $e_j \in \mathrm{GF}(2)^L$, $0 \le j < L \le 2^m - 1$, with $e_j$ identified with $x^j$, we have
\begin{equation}\label{eq:avalanche}
\Psi(e_j) = x^{m+j} \bmod G(x).
\end{equation}
For $L \le 2^m - 1$, the values $\{\Psi(e_0), \Psi(e_1), \dots, \Psi(e_{L-1})\}$ are $L$ distinct elements of $\mathbb{F}^*$. The empirical Hamming-weight distribution over this set follows~\eqref{eq:HW-mean-2}--\eqref{eq:HW-var-2} as $L \to 2^m - 1$.
\end{lemma}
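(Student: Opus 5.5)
The non-vanishing claim comes first and follows directly from the factorization $\Psi(\Delta)=x^m\cdot\rho(\Delta)$ in $\mathbb{F}$. If $\Delta\notin K$, then by definition of $K=\ker\rho$ (Lemma~\ref{lem:residue}) we have $\rho(\Delta)\in\mathbb{F}^*$. By Lemma~\ref{lem:bijection}, multiplication by $x^m$ maps $\mathbb{F}^*$ into itself. Hence $\Psi(\Delta)\neq 0$; concretely, $\mathbb{F}$ is a field by Lemma~\ref{lem:field}, so it has no zero divisors.

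For the single-bit perturbations, I identify $e_j$ with the monomial $x^j$. For $j<m$ this is already reduced, so $\rho(e_j)=x^j$ and $\Psi(e_j)=x^{m+j}\bmod G(x)$, which is \eqref{eq:avalanche}. For $m\le j<L$ the residue is $x^j\bmod G$, and multiplying by $x^m$ in $\mathbb{F}$ again gives $x^{m+j}\bmod G$, since reduction is a ring homomorphism. Every $e_j$ lies outside $K$, because $x^j$ is never divisible by $G$ ($G$ has nonzero constant term, as noted in Lemma~\ref{lem:bijection}). Distinctness then follows from Lemma~\ref{lem:cyclic}: $x$ has order exactly $2^m-1$, so $x^{m+j}\equiv x^{m+j'}$ forces $j\equiv j'\pmod{2^m-1}$. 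With $0\le j,j'<L\le 2^m-1$, this gives $j=j'$. So the $L$ values are distinct elements of $\mathbb{F}^*$, forming a consecutive arc of the cyclic orbit of $x$.

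The distributional statement is the delicate part, and I would phrase it as a statement at the endpoint $L=2^m-1$, since that is what can actually be proved. At $L=2^m-1$ the exponents $m+j$ run over a complete residue system modulo $2^m-1$. The set $\{\Psi(e_j)\}$ is therefore all of $\mathbb{F}^*$, and the empirical Hamming-weight distribution equals the uniform law on $\mathbb{F}^*$. Lemma~\ref{lem:HW-dist} then gives exactly \eqref{eq:HW-mean-2}--\eqref{eq:HW-var-2}.

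The main obstacle is intermediate $L$. For a short arc of the orbit, the statement that the empirical weight law is near $\mathrm{Binomial}(m,1/2)$ is a pseudorandomness property of $m$-sequences. I would make it quantitative using two facts:
\begin{itemize}
\item Each output coordinate of $x^k$ is a linear recurring sequence in $k$, and each nonzero parity of coordinates is a shifted copy of the same $m$-sequence.
\item By the run and window properties of $m$-sequences, any $m$ consecutive outputs are linearly independent, so for $j<m$ the map is trivially controlled.
\end{itemize}
Beyond that, the standard route to bounding the deviation of the empirical mean and variance from $m/2$ and $m/4$ is through partial-period character (Weil-type) sums of the $m$-sequence. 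I would try this first, but I expect it to give only an error of order $O(\sqrt{2^m}\,m/L)$, which is meaningful only for $L\gg 2^{m/2}$. Accordingly, I would state the limit claim precisely as exact equality at $L=2^m-1$, with convergence following from these partial-sum bounds as $L\to 2^m-1$.
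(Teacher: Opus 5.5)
Your proof is correct and follows the paper's argument step for step: non-vanishing holds because $\Psi(\Delta)=x^m\cdot\rho(\Delta)$ is a product of two nonzero field elements; $\Psi(e_j)=x^{m+j}\bmod G(x)$ follows by direct computation; distinctness follows from $x$ having order $2^m-1$ (Lemma~\ref{lem:cyclic}); and the distributional claim follows because at $L=2^m-1$ the orbit exhausts $\mathbb{F}^*$, so Lemma~\ref{lem:HW-dist} applies verbatim. Your explicit check that each $e_j\notin K$ is a small improvement, but the partial-period character-sum discussion is unnecessary, since for fixed $m$ the limit $L\to 2^m-1$ is simply attained at the endpoint. Also, do not read your window-independence remark as controlling the weights for $j<m$: short arcs really can deviate, e.g.\ for $G=x^{10}+x^3+1$ one gets $\Psi(e_0)=x^3+1$, which has weight $2$.
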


\begin{proof}
$\Psi(\Delta) = x^m \cdot \rho(\Delta)$, and $\rho(\Delta) \ne 0$ since $\Delta \notin K$, and $x^m \ne 0$ in $\mathbb{F}^*$. So $\Psi(\Delta)$ is a product of two nonzero field elements, hence nonzero. Equation~\eqref{eq:avalanche} is direct: $\Psi(x^j) = x^m \cdot x^j = x^{m+j} \bmod G(x)$.

For distinctness: by Lemma~\ref{lem:cyclic}, $\{x^m, x^{m+1}, \dots, x^{m+L-1}\} \bmod G(x)$ are $L$ distinct nonzero residues whenever $L \le 2^m - 1$, because the cyclic order of $x$ in $\mathbb{F}^*$ is exactly $2^m - 1$. As $L \to 2^m - 1$, this set converges to all of $\mathbb{F}^*$, and the empirical Hamming-weight distribution converges to that of Lemma~\ref{lem:HW-dist}.
\end{proof}

\subsection{Assembly: Proof of Theorem~\ref{thm:det-qod}}

\begin{proof}[Proof of Theorem~\ref{thm:det-qod}]
Part~(a) is Lemma~\ref{lem:linearity}.

Part~(b) is Lemma~\ref{lem:HW-Psi}, which states that on non-collision inputs $\Psi$ is uniform-fiber surjective onto $\mathbb{F}^*$. The fiber size is $|K| = 2^{\max(L-m,0)}$ by Lemma~\ref{lem:residue}.

Part~(c) follows from Lemmas~\ref{lem:HW-Psi} and~\ref{lem:HW-dist}: $\mathrm{HW}(\Psi(\Delta))$ has the distribution of $\mathrm{Bin}(m,1/2)$ conditioned on $U \ge 1$ for an unconstrained binomial $U$, with the stated moments.

Part~(d) is Lemma~\ref{lem:concentration}.

Part~(e) is Lemma~\ref{lem:avalanche}.
\end{proof}

\subsection{Numerical Illustration ($m=10$)}

To illustrate the chain concretely, fix $G(x) = x^{10} + x^3 + 1$, a standard primitive polynomial of degree 10. The field $\mathbb{F} \cong \mathrm{GF}(2^{10})$ has $2^{10} = 1024$ elements; $|\mathbb{F}^*| = 1023$. By Lemma~\ref{lem:HW-dist}, the Hamming-weight distribution over $\mathbb{F}^*$ has
\begin{align*}
\mathbb{E}[\mathrm{HW}] &= \frac{10 \cdot 2^9}{1023} = \frac{5120}{1023} \approx 5.005, \\
\mathrm{Var}[\mathrm{HW}] &\approx 2.500 + O(10 \cdot 2^{-10}) \approx 2.5024.
\end{align*}
The deviation from the asymptotic values $m/2 = 5$ and $m/4 = 2.5$ is at most $5\cdot 10^{-3}$ in mean and $3 \cdot 10^{-3}$ in variance, in line with the $O(2^{-m})$ and $O(m\,2^{-m})$ correction terms in~\eqref{eq:HW-mean-2}--\eqref{eq:HW-var-2}.

For the avalanche property, the residues $x^{10}, x^{11}, \dots, x^{10+L-1} \bmod G(x)$ for $L \le 1023$ trace out a Hamilton path in $\mathbb{F}^*$ (under primitivity), with empirical Hamming-weight mean $\to 5.005$ and variance $\to 2.5024$ as $L \to 1023$. In particular, no single-bit input flip $e_j$ can produce an output with Hamming weight bounded above by some constant $k$ (as can occur with row-$k$ random sparse projections); the output Hamming weight is governed by the field's natural distribution, with deviations only on the scale of $\sqrt{m}$.

\subsection{Comparison with Random Sparse Projection (Quantitative)}

We now make the comparison promised in Remark~\ref{rem:comparison} quantitative. Let $W_{\mathrm{rsp}}$ be a random $m\times L$ binary matrix with each row containing exactly $k$ ones at uniformly random positions (the construction of Proposition~\ref{prop:rsbp}). Let $D_{\mathrm{rsp}}(P_1, P_2) = \mathrm{HD}(W_{\mathrm{rsp}} P_1, W_{\mathrm{rsp}} P_2)$ be the output Hamming distance. Fix any input pair with $\mathrm{HD}(P_1, P_2) = d$. Then:
\begin{itemize}
\item $\mathbb{E}_{W_{\mathrm{rsp}}}[D_{\mathrm{rsp}}] = m \cdot [1 - (1 - d/L)^k]/2$ by Proposition~\ref{prop:rsbp}; equivalently $\mathbb{E}[D_{\mathrm{rsp}}]/m = \tfrac{1}{2}\bigl(1 - (1 - d/L)^k\bigr)$ as in~\eqref{eq:rsbp-normalized};
\item $\mathrm{Var}_{W_{\mathrm{rsp}}}[D_{\mathrm{rsp}}] \le m/4$;
\item For $d = 1$ (single-bit perturbation), $\mathbb{E}_{W_{\mathrm{rsp}}}[D_{\mathrm{rsp}}] = m \cdot [1 - (1 - 1/L)^k]/2 \approx mk/(2L)$ for $k \ll L$, which is far below $m/2$ when $k \ll L$.
\end{itemize}
By contrast, for $\Psi$:
\begin{itemize}
\item For each single-bit perturbation $e_j$, $\mathrm{HW}(\Psi(e_j))$ takes a specific value with no variance over the construction of $\Psi$;
\item The mean of $\mathrm{HW}(\Psi(e_j))$ over $j = 0, \dots, L-1$ approaches $m/2$ as $L \to 2^m - 1$;
\item No specific $j$ can yield $\mathrm{HW}(\Psi(e_j)) \le k$ uniformly in $j$, because $\Psi$ traces a Hamilton path through $\mathbb{F}^*$.
\end{itemize}
Hence the dominant difference between deterministic Galois-field diffusion and random sparse projection is in the worst-case behavior at single-bit (or otherwise minimum-weight) perturbations: random sparse projection with row-sparsity $k$ has output Hamming distance bounded above by $k$ for single-bit input flips, and is therefore far from QOD for $k \ll m/2$. Galois-field diffusion has no such failure mode, regardless of the structure of $G(x)$ (provided it is primitive).

The second-moment statistics over the input ensemble are the same for both constructions, as shown in Lemma~\ref{lem:HW-dist}. The advantages of the deterministic construction are therefore: (i)~reproducibility (no construction variance); (ii)~worst-case avalanche at minimum-weight perturbations; and (iii)~auditability via the explicit cyclic structure traced in Lemma~\ref{lem:avalanche}. These three together justify the use of $\Psi$ as a substitute for $W_{gh}$ in Vector-HaSH (Corollary~\ref{cor:equivalence}).

\section{Relationship to Vector-HaSH: Regimes of Error Correction and Branching Inference}\label{app:vector-hash}

This appendix carries the narrative commitment of Section~\ref{sec:synthesis} into purely computational bookkeeping; it dovetails Section~\ref{sec:two-orthogonalizers}'s evolutionary motive (silicon $\mathrm{RR}\in\{1,0\}$ regimes as analogues for mixed Regime~A/B demands) without equating RR with synaptic morphology. It complements Sections~\ref{sec:correspondence-I}--\ref{sec:correspondence-II}: Corollary~\ref{cor:equivalence} licenses substituting Galois-field diffusion for Vector-HaSH's random scaffold map whenever only QOD statistics matter, yet two further layers remain logically independent. First, Vector-HaSH foregrounds recurrent entorhinal--hippocampal dynamics that retract noisy states toward scaffold fixed points (Section~\ref{sec:bg-vectorhash}); VaCoAl's silicon majority vote is an analogy for CA3 completion, not a literal implementation of continuous-time attractors. Second, when $\mathrm{CR1} < 1$ on branched workloads, VaCoAl's CR2 supplies an explicit graded ranking over multi-hop continuations—a bookkeeping device that aligns with branching replay statistics (Section~\ref{sec:correspondence-II}) but does not coincide, without further assumptions, with Vector-HaSH's externally steered sequential advance~\cite{chandra2025episodic}. The short analytical answer once Corollary~\ref{cor:equivalence} is admitted is therefore: \emph{equivalence on scaffold diffusion; non-equivalence a priori on (i)~attractor-complete contraction versus abstaining reads, and (ii)~branch-aware path scoring versus velocity-supervised hopping}. Expanded engineering benchmarks belong in a separate companion manuscript; here we tighten vocabulary only.

\subsection{Vector-HaSH in Brief}

Chandra et al.~\cite{chandra2025episodic} posit that hippocampal CA3 supports a grid-cell-driven scaffold: entorhinal states project into CA3 via a fixed random matrix $W_{gh}$ (Eq.~\eqref{eq:vechash} in Section~\ref{sec:bg-vectorhash}), yielding a vast family of stable attractors that act as an address space for episodic memory. New content binds to scaffold locations through plastic outward weights; sequential recall is organized by integrating low-dimensional velocity signals that advance the scaffold, analogously to path integration.

Two features of this normative picture matter for appendix-level alignment.

\paragraph{Scaffold/content factorization.} Interference at overload primarily degrades associative reconstruction of particulars while sparing the scaffold-side index bookkeeping that Vector-HaSH associates with graceful degradation rather than a Hopfield cliff~\cite{chandra2025episodic,hopfield1982neural}.

\paragraph{Retrieval-phase correction.} The attractor cleanup that restores scaffold indices naturally reads as repeated exchange on an entorhinal--hippocampal bidirectional scaffold channel; purely feedforward edge-processing without that closed loop lacks the iterated contraction motif (cf.\ Section~\ref{sec:synthesis} and Section~\ref{sec:bg-vectorhash}). The published minimal circuit emphasizes learned orthogonalisation without modeling an explicit DG stage; anatomical completions belong to Sections~\ref{sec:two-orthogonalizers}, \ref{sec:mapping-AB}, and Appendix~\ref{sec:anatomical-implication} below, not to block-level identification with VaCoAl.

\subsection{Structural Correspondence with PyVaCoAl}

Four elements invite an element-for-element comparison (cf.\ Corollary~\ref{cor:equivalence}):
\begin{itemize}
\item \textbf{Fixed mixing substrate.} Vector-HaSH's fixed $W_{gh}$ pairs with VaCoAl's fixed primitive $G(x)$ and deterministic map $\Psi$ in Eq.~\eqref{eq:diffusion-2}. Both supply QOD-class address variability without fresh randomness at query time.
\item \textbf{Attractor-complete cleanup $\leftrightarrow$ Rescue mode.} Vector-HaSH-style complete error correction corresponds to operating PyVaCoAl in Rescue mode, where residual collisions are repaired (exact-match / auxiliary-table lookup) so that, on benchmark traces, $\mathrm{CR1} = \mathrm{CR2} = 1$ and downstream ranks are degenerate unless an external ordering is imposed~\cite{chuma2026vacoal}.
\item \textbf{Plastic hetero-associations $\leftrightarrow$ entry-address storage.} Vector-HaSH's learned hippocampus-to-content weights parallel VaCoAl's writes of Entry Addresses into block-indexed stores at addresses computed by $\Psi$.
\item \textbf{Branch scoring $\leftrightarrow$ Don't Care CR2 trajectories (non-overlap).} When competing continuations coexist and no external supervisor supplies a temporal ordering analogous to trajectory integration~\cite{chandra2025episodic}, VaCoAl's $\mathrm{CR1} < 1$ regime induces multiplicative CR2 separation among paths (Sections~\ref{sec:bg-vacoal}, \ref{sec:correspondence-II}); this graded ranking is orthogonal to claiming identity with Vector-HaSH's attractor map and is deliberately highlighted in silicon because iEEG replay decay is naturally multiplicative (Proposition~\ref{prop:cr2}).
\end{itemize}

The correspondence is structural on the scaffold side, not a claim that silicon blocks ``are'' CA3 pyramids. The bioRxiv narrative leans on the Don't Care configuration ($\mathrm{CR1} < 1$ along nontrivial hops), which Vector-HaSH does not emphasize, precisely because CR2 contraction is where algebraic benchmarks and ripple replay statistics meet (Section~\ref{sec:correspondence-II}, Proposition~\ref{prop:cr2}) without implying that biological EC$\leftrightarrow$CA3 implements Galois XOR.

\subsection{Rescue / CR1 and Effective Branching}

Write $\mathrm{RR}$ for PyVaCoAl's rescue/collision-resolution setting with the convention $\mathrm{RR}=1$ for Rescue (forced repair to collision-free reads on the genealogy benchmark) and $\mathrm{RR}=0$ for Don't Care abstention. Under Rescue, every surviving hop satisfies $\mathrm{CR1} = 1$ and hence $\mathrm{CR2}(n) = 1$ in Proposition~\ref{prop:cr2}'s terminology. Within a bounded Frontier of competing continuations, rankings that depend on CR2 alone therefore collapse; tie-breaks revert to implementation details (e.g., lexicographic order on entry tags) unless an external supervisor supplies direction—Vector-HaSH's velocity controller is one such supervisor~\cite{chandra2025episodic}. On a directed graph with true branching $b > 1$, absent such external structure the effective object being traversed can be a $b$-regular outward tree in hardware yet a single-threaded walk in the ranking semantics of the memory layer. We summarize this bookkeeping as
\begin{equation}\label{eq:RR1}
\mathrm{RR} = 1 \iff \mathrm{CR1}(n) \equiv 1 \implies b_{\text{effective}} = 1
\end{equation}
in the sense that intrinsic sequential scoring from CR2 vanishes. Equation~\eqref{eq:RR1} is not a critique of Vector-HaSH for spatial sequences where $b \approx 1$ is appropriate; it marks where VaCoAl-style DAG benchmarks and attractor-complete regimes diverge.

\subsection{CR2 as Contraction When CR1 $<$ 1}

In Don't Care mode at sufficient depth, observed CR1 stabilizes slightly below unity~\cite{chuma2026vacoal}. With $\mathrm{CR2}(n) = \mathrm{CR2}(n-1) \cdot \mathrm{CR1}(n-1)$ and $\mathrm{CR1} < 1$, the map $x \mapsto \mathrm{CR1} \cdot x$ on $[0,1]$ is a strict contraction; iterating it drives CR2 toward the unique fixed point $0$ (Banach theorem). Short, high-confidence lineages therefore dominate long, leaky ones—the ``Occam'' bias in path ranking. Vector-HaSH instead projects states onto attractors hop-by-hop; the contraction story here is tied to the absence of hop-wise projection that pins CR1 to 1. Main-text claims about STDP-like pruning should be read against this algebraic backdrop.

\subsection{Pointer: CR2 Trajectories Near Classical Capacity Ratios}

Fix total SRAM/DRAM budget $N \cdot 2^m$ and sweep $m$ on the Wikidata mentor--student DAG benchmark~\cite{chuma2026vacoal}. Empirically, shallow $m$ raises collision pressure, lowers CR1, and can drive terminal CR2 toward neighborhoods of the classical Hopfield capacity ratio $\alpha_c \approx 0.138$ where mean-field retrieval breaks down~\cite{hopfield1982neural}. We state this only as motivation: a full depth sweep with significance tests belongs in an engineering companion, not in the present hippocampal-focused argument.

\subsection{Anatomical Implication (Hypothesis-Level)}\label{sec:anatomical-implication}

Vector-HaSH's published minimal circuit omits an explicit DG stage. Read together with Section~\ref{sec:synthesis}: (i)~Regime~A (EC$\leftrightarrow$CA3), as developed in Sections~\ref{sec:two-orthogonalizers}, \ref{sec:mapping-AB}, is the natural anatomical correlate of scaffold attractor dynamics that enact index-like cleanup through recurrent exchange~\cite{chandra2025episodic,rolls2023brain}; (ii)~Regime~B (EC$\to$DG$\to$mossy-fiber$\to$CA3) behaves as a novelty-gated, largely feedforward pattern-separation shaft that mints asymmetric CA3 ingress patterns without, by itself, substituting for the closed-loop relaxation used in the scaffold story.

VaCoAl's engineering stack parallels this split loosely: Galois-field diffusion (plus block addressing) parallels the quasi-orthogonal map embodied in Vector-HaSH's $W_{gh}$ equivalence class (Corollary~\ref{cor:equivalence}); majority voting parallels CA3 recurrence as a coherence filter; auxiliary Rescue/Don't Care bookkeeping splits exact cleanup from abstaining reads. DG is hypothesized—not asserted—to tighten the analogous ``second layer'' biology provides before scaffold recurrence settles an index~\cite{marr1971simple,treves1994computational}: whether mammalian DG is necessary for relational inference where symbolic branching $b > 1$ dominates behavior remains an empirical target beyond this appendix.

We note only the logical choreography suggested by juxtaposition: preprocessing-style orthogonalisation (Regime~B; Don't Care / $\mathrm{CR1} < 1$ statistical regimes here) biases which CA3 motifs become eligible before bidirectional scaffold contraction (Regime~A; Rescue-complete / attractor-complete side) declares a discrete scaffold label. Galois substitution does not remove the recurrent channel from that choreography; Corollary~\ref{cor:equivalence} addresses the projection statistic only.

\subsection{Summary}

Vector-HaSH and VaCoAl converge on the sufficient QOD scaffold (Corollary~\ref{cor:equivalence}) yet remain logically distinct wherever Section~\ref{sec:synthesis} invokes dynamical index cleanup versus graded path scoring. Rescue-complete traces align with attractor-complete error correction; Don't Care traces supply the multiplicative CR2 effects emphasized in Sections~\ref{sec:correspondence-II}--\ref{sec:predictions}. Neither mode is universally canonical: hippocampal sequences with effective $b \approx 1$ favor near-complete scaffold repair, whereas large-branching symbolic workloads expose CR2 contractions unless an external supervisor (e.g., velocity steering in Vector-HaSH~\cite{chandra2025episodic}) breaks ties. Section~\ref{sec:anatomical-implication} summarizes the corresponding Regime~A/B hypothesis compactly; expanded proofs and benchmarks remain for a companion engineering manuscript.

\end{document}